\crefname{equation}{Eq.}{Eqs.}
\crefname{figure}{Fig.}{Figs.}
\crefname{table}{Tab.}{Tabs.}
\renewcommand{\Comment}[1]{\hfill \textcolor{blue!75}{\(\triangleright\) \textit{#1}}}
\definecolor{codegreen}{rgb}{0,0.6,0}
\definecolor{codegray}{rgb}{0.5,0.5,0.5}
\definecolor{codepurple}{rgb}{0.58,0,0.82}
\definecolor{backcolour}{rgb}{0.95,0.95,0.92}
\lstdefinestyle{mystyle}{
    backgroundcolor=\color{backcolour},   
    commentstyle=\color{codegreen},
    keywordstyle=\color{magenta},
    numberstyle=\tiny\color{codegray},
    stringstyle=\color{codepurple},
    basicstyle=\ttfamily\footnotesize, %
    breakatwhitespace=false,         
    breaklines=true,                 %
    captionpos=b,                    
    keepspaces=true,                 
    numbers=left,                    %
    numbersep=5pt,                  
    showspaces=false,                
    showstringspaces=false,
    showtabs=false,                  
    tabsize=4
}
\definecolor{mplC2}{rgb}{0.176,0.627,0.176}
\newcommand{\revision}[1]{#1}
\newtheorem{theorem}{Theorem}
\newtheorem{lemma}[theorem]{Lemma}
\newtheorem{property}[theorem]{Property}
\newtheorem{corollary}[theorem]{Corollary}
\newtheorem{remark}[theorem]{Remark}
\newtheorem{definition}[theorem]{Definition}
\newtheorem*{example*}{Example}
\crefname{property}{Property}{Properties}
\newcommand{\mypar}[1]{\smallskip
	\noindent{\textbf{{#1}.}}}
\newcommand{\ours}{\textsc{InvisibleInk}\xspace}
\newcommand{\ourtopk}{Top-$k+$\xspace }
\newcommand{\method}{\ours}
\newcommand{\mauve}{MAUVE\xspace} 
\newcommand{\txtOurClip}{DClip\xspace}
\newcommand{\clip}{\mathsf{clip}\xspace}
\newcommand{\OurClip}{\mathsf{DClip}\xspace}
\newcommand{\tle}{\textcolor{red}{\scriptsize \textbf{TLE}}\xspace}
\newcommand{\dnc}{\textcolor{red}{\scriptsize \textbf{INF}}\xspace}
\definecolor{lightmauve}{rgb}{0.7, 0.7, .9}
\definecolor{royalazure}{rgb}{0.0, 0.22, 0.66}
\newcommand{\tabemph}[1]{\cellcolor{lightmauve!30}\textcolor{black!50!royalazure}{#1}}%
\newcommand\norm[1]{\left\lVert#1\right\rVert}
\newcommand{\R}{\mathbb{R}}
\newcommand{\keff}{k_{\text{eff}}}
\newcommand{\bfR}{{\bm{R}}}
\newcommand{\bfX}{{\bm{X}}}
\newcommand{\bfq}{{\bm{q}}}
\newcommand{\bfr}{{\bm{r}}}
\newcommand{\bfx}{{\bm{x}}}
\newcommand{\bfphi}{{\bm{\phi}}}
\newcommand{\calA}{\ensuremath{\mathcal{A}}}
\newcommand{\calD}{\ensuremath{\mathcal{D}}}
\newcommand{\calX}{\ensuremath{\mathcal{X}}}
\newcommand{\calY}{\ensuremath{\mathcal{Y}}}
\newcommand{\bfxlt}{\bfx_{<t}}
\DeclareMathOperator*{\argmin}{arg\,min}
\newcommand{\clipnorm}{C}
\newcommand{\bsz}{B}
\newcommand{\sens}{\mathsf{sens}}
\newcommand{\public}{\mathsf{pub}}
\newcommand{\Vtopk}{V_k}
\newcommand{\barVtopk}{\bar V_k}
\newcommand{\adapmixed}{AdaPMixED\xspace}
\newcommand{\PPLmath}{\text{PPL}}
\newcommand{\DeltaPPL}{$\Delta$PPL\xspace}
\newcommand{\DeltaPPLmath}{\Delta\text{PPL}}
\newcommand{\Seq}{\mathsf{seq}}
\renewcommand{\epsilon}{\varepsilon}
\newcommand{\eps}{\varepsilon}
\newcommand{\dist}{\mathsf{dist}}
\newcounter{arxiv}
\title{InvisibleInk: High-Utility and Low-Cost \\
Text Generation with Differential Privacy}
\author{
Vishnu Vinod$^1$ \vspace{0.5em}\\
$^1$CeRAI, IIT Madras
\vspace{-1em}
\And Krishna Pillutla$^{1,2}$ \vspace{0.5em}\\
$^2$WSAI, IIT Madras
\vspace{-1em}
\And Abhradeep Thakurta$^3$ \vspace{.5em}\\
$^3$Google DeepMind
\vspace{-1em}
}
\begin{document}

\maketitle
\doparttoc %
\faketableofcontents %

\begin{abstract}
  As major progress in LLM-based long-form text generation enables paradigms such as retrieval-augmented generation (RAG) and inference-time scaling, safely incorporating private information into the generation remains a critical open question. We present \method, a highly scalable long-form text generation framework satisfying rigorous differential privacy guarantees with respect to the sensitive reference texts. It interprets sampling from the LLM's next-token-distribution as the exponential mechanism over the LLM logits with two innovations. First, we reduce the privacy cost by isolating and clipping only the sensitive information in the model logits (relative to the public logits). Second, we improve text quality by sampling without any privacy cost from a small superset of the top-$k$ private tokens. Empirical evaluations demonstrate a consistent $8\times$ (or more) reduction in computation cost over state-of-the-art baselines to generate long-form private text of the same utility across privacy levels. 
  \method is able to generate, for the first time, high-quality private long-form text at less than $4\text{-}8\times$ times the computation cost of non-private generation, paving the way for its practical use.
  We open-source a pip-installable Python package (\texttt{invink}) for InvisibleInk at \url{https://github.com/cerai-iitm/invisibleink}.
 \end{abstract}

\section{Introduction}
\label{sec:intro}

Large Language Models (LLMs) have demonstrated remarkable inference-time capabilities, synthesizing information from multiple sources into coherent responses via retrieval-augmented generation (RAG), and generating increasingly long and sophisticated outputs via the recent ``deep research'' and inference-time scaling paradigms~\cite{wu2025inference,muennighoff2025s1,balachandran2025inference}. Larger token budgets also allow these models to effectively ``think,'' enabling self-reflection and correction of potential errors during generation~\cite{nye2021show,wei2022chain,zelikman2022star,yao2023tree,welleck2024decoding}.

The power of these models, however, often relies on their ability to process and reason over vast amounts of information, including potentially sensitive data provided at inference time. This context might come from user prompts containing private details or documents retrieved by RAG systems from confidential knowledge bases.
There is some risk that the model's outputs, particularly in detailed long-form generation~\cite{nasr2023scalable}, could inadvertently leak sensitive details from this private context~\cite{meeus2025canary}.

Differential Privacy (DP)~\cite{DMNS} can mitigate such risks by providing provable guarantees that the output distribution is statistically indistinguishable whether or not any single piece of sensitive information was included in the input context. This guarantee is not just desirable but often essential in sensitive domains like healthcare and finance, where information leakage carries severe regulatory (e.g., HIPAA, GDPR) and ethical consequences and merely anonymizing personally identifiable information (PII) is not sufficient~\cite{xin2025falsesense}
Therefore, developing effective DP mechanisms for inference-time generation is critical for the responsible application of LLMs in privacy-sensitive applications. We tackle the challenge of achieving high-fidelity, long-form text generation under the rigorous DP guarantees on sensitive source text(s) used as references for generation.

\begin{figure}[t]
    \centering
    \includegraphics[width=0.850\linewidth]{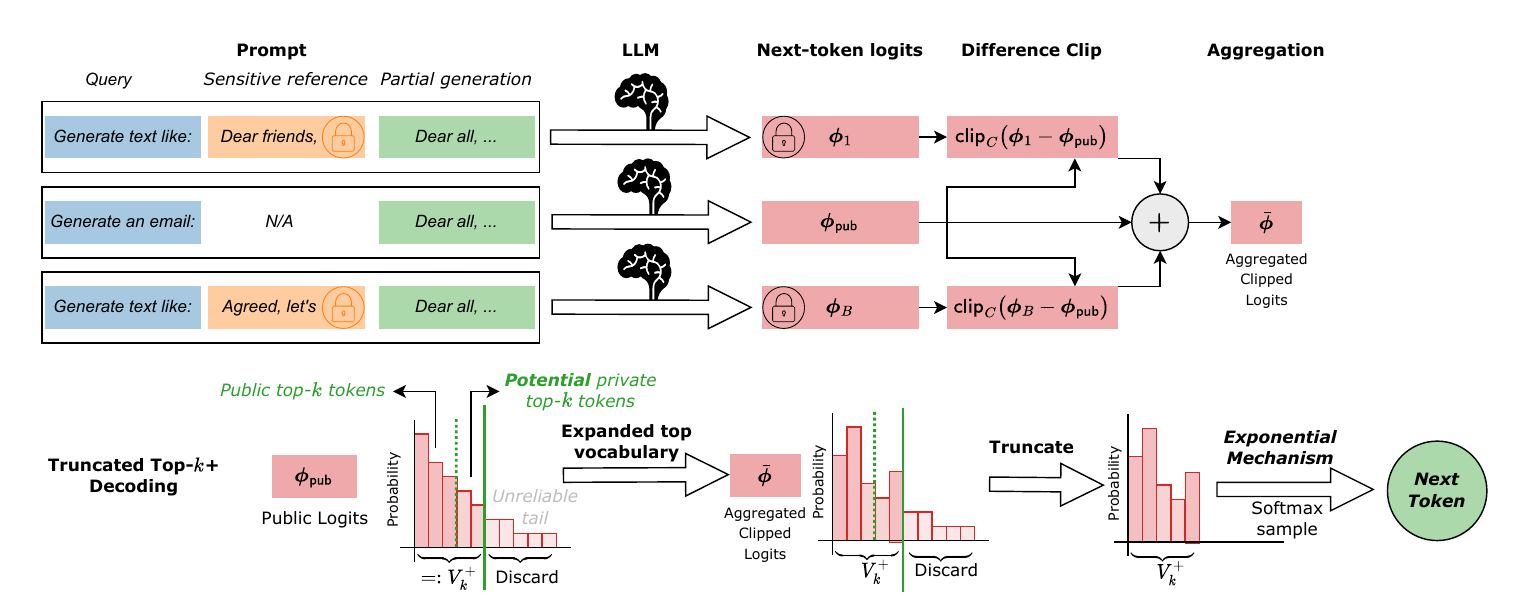}
    \caption{\small 
    \method interprets differentially private text generation as an iterative application of the exponential mechanism over a subset of the LLM's clipped logits. 
    Our key innovations are: (a) \txtOurClip, an improved clipping function to reduce the sensitivity, and hence, the privacy cost; and (b) \ourtopk sampling, a truncated decoding algorithm to improve utility by selecting a subset of logits to sample each token from.
    }
    \vspace{-1em}
    \label{fig:main}
\end{figure}

Despite the importance of this problem, existing approaches for DP text generation suffer from severe practical limitations. Consider the prior state-of-the-art method of \citet{amin2024private}, which interprets token sampling from the LLM as an instance of the canonical exponential mechanism~\cite{mcsherry2007exp} over the next-token logits. It requires a computational overhead of 100$\times$ that of non-private generation or more, to produce non-degenerate text.\footnote{
    Generally, the drop in utility from DP can be offset by increasing data and computation, e.g., by averaging over larger batches to reduce sensitivity; DP model training also admits a similar phenomenon~\cite{ponomareva2023dp}. Thus, we compare methods by the computation cost necessary to attain given privacy and utility levels.
} 
This makes them intractable at scale for large models.

Moreover, from a quality perspective, these methods typically resort to high-temperature sampling from the full vocabulary distribution. This decoding strategy is well known to produce degenerate or low-quality text, falling significantly short of the fluency and coherence achieved by standard non-private decoding algorithms, which rely on sampling from truncated distributions~\cite{holtzman2020curious,pillutla2021mauvenips,pillutla2023mauvejmlr}. 

We bridge this gap with the following:
\vspace*{-0.6em}
\begin{center}
\emph{
We propose \ours for DP synthetic text generation. It can produce hundreds of tokens at $8\times$ lower computational cost at the same privacy/utility trade-offs compared to prior SoTA methods.}   
\end{center}
\vspace*{-0.6em}

Our main contributions are as follows:
\vspace*{-0.3em}
\begin{itemize}[leftmargin=1.5em] %
    \item \textbf{\method Framework}:
    We propose \method, an exponential mechanism-based~\cite{mcsherry2007exp} sampling framework to allow LLM-based text generation with rigorous DP guarantees under the replace-by-null adjacency. We make two key innovations:
    (a) \textbf{\txtOurClip} to isolate and clip only the sensitive information in the model logits and avoid paying a privacy cost on prior information on language; and
    (b) \textbf{\ourtopk decoding}: a truncated decoding algorithm that approximates sampling from the top-$k$ private tokens with a \emph{small} superset of size $k' \approx k$, ensuring high utility without losing out on private tokens that are unlikely under a public model. 
    \item \textbf{Privacy-Utility-Compute Tradeoffs}:
    By empirically analyzing synthetic text generation on three domains---medical, legal, and commercial---we demonstrate an $8\text{-}16\times$ reduction in computation cost across the board to achieve the same privacy and utility.\footnote{
        We allow some baselines an advantage by (incorrectly) reducing their sensitivity by $2$ or $\sqrt{2}$; see \S\ref{sec:expt}.
    } 
    Thus, \method can produce hundreds of tokens at significantly smaller compute budgets (such as $<10\times$ the non-private cost), including in settings where baselines extinguish their privacy budgets within a handful of tokens.
    \item \textbf{User-friendly Accounting}: Given a privacy budget and a compute budget, we give a non-adaptive privacy accounting procedure and practical heuristics to tune hyperparameters. This makes \method usable off-the-shelf. 
    \revision{In contrast, prior works~\cite{amin2024private,flemings2024adapmixed} require grid search to tune hyperparameters that affect the privacy-utility-compute tradeoff or estimates of the adaptive algorithm behavior (e.g. number of tokens generated) for practical deployment. In addition to increased computational cost, this can add to the privacy cost~\cite{papernot2022hyperparameter}.
    }
\end{itemize}

We also open source a pip-installable Python package (\texttt{invink}), available at \url{https://github.com/cerai-iitm/invisibleink}, to generate differentially private synthetic text using \method.
See \S\ref{sec:software} for an example of the \texttt{invink} package in action.

\section{Related Work}
\label{sec:related}
Generating synthetic data with DP has an extensive literature, particularly over categorical features~\cite{BLR08,HR10,DNR09,VAA22,HRZ24}. Many such approaches rely on the exponential mechanism~\cite{hardt2012simple,AL23,PGM23}. Unfortunately, it is cryptographically hard to generate DP synthetic data that preserves all 2-way marginals with polynomial sample complexity~\cite{UV11}; see also~\cite[Thm. 6.12]{Va17} and~\cite[Ch. 9]{dwork2014dp}. 
Fortunately, public data can provide strong priors over data distributions to partially circumvent these lower bounds. Public data has long been used in private tabular data generation \cite{liu2021leveraging, liu2021iterative, fuentes24jointselection, maddock2025leveraging}. More recently, pretrained generative foundation models provide strong priors, enabling the DP generation of complex image/text data
\cite{dpanon,dpdm,privimage,VCH+22,API1,xie2024dpsyndata}.
In this work, we focus on practical and scalable DP generation of text using LLMs.

We focus on practical and scalable DP text generation. DP text can be generated from DP fine-tuned models~\cite{AMR+19,MJW+22,yue2023syntextgen,kurakin2024llm,yu2024dpinstruct} or via private inference.\footnote{
    Private inference can be used on pretrained models prompted with private data (our setting) or non-DP fine-tuned models. Both are conceptually identical as they privatize a set of non-private predictions.
}
We focus on the latter as private fine-tuning can be prohibitively expensive, especially with large models~\cite{ponomareva2023dp}. Private inference is possible with various types of LLM access: via a text generation API~\cite{wu2023dpicl,xie2024dpsyndata,yu2024dpinstruct}, or white-box access to next-token logits~\cite{wu2023dpicl,tang2024privateicl,amin2024private}; we focus on the latter.
Other DP text manipulation tasks include paraphrasing~\cite{CVF21,UHC23} and next-token prediction~\cite{MDP+22, ginart2022submix, thareja2025dpfusion, flemings2024pmixed,flemings2024adapmixed}. We adapt \cite{flemings2024adapmixed} to text generation as a baseline in \S\ref{sec:expt}.

Most prior approaches to DP text generation suffer from one or more restrictions:
(1) a small number of generations, enough to serve as exemplars for in-context learning~\cite{duan2023stochparrots,tang2024privateicl,hong2024dpopt};
(2) short or highly constrained generations in classification or directed text generation tasks such as summarization or question-answering (as opposed to open-ended long-form generation)~\cite{wu2023dpicl,amin2024private}.
Of these, only the API-access method of \cite{xie2024dpsyndata} and the white-box method of \cite{amin2024private, amin2025private} apply to long-form generation. 

Prior DP text generation approaches diverge significantly from successful decoding algorithms in the NLP literature. Sampling from truncated next-token distributions~\cite{fan2018heirarchical,holtzman2020curious,meister2022locally,hewitt2022truncation,ZHH24,MBN+25} is strongly backed by qualitative~\cite{holtzman2020curious}, quantitative~\cite{pillutla2021mauvenips,pillutla2023mauvejmlr, arias2025decoding}, and theoretical~\cite{FHK+24} evidence; see the survey~\cite{welleck2024decoding} for further details. \method generates high-utility text by adapting truncated decoding to the DP setting.

The concurrent work of \cite{amin2025private} extends the approach of \citet{amin2024private} by clustering similar references into a batch. This orthogonal approach can also be integrated with \method for synthetic text generation, though clustering may not be applicable to some tasks, such as RAG. \revision{\cite{amin2025private} also replaces mean aggregation of (clipped) logits with median aggregation, which has a smaller local sensitivity and a tighter data-dependent \emph{ex-post} DP guarantee \cite{ligett2017expost}, i.e.,  the DP guarantee is evaluated after the algorithm’s output is observed. In contrast, \method provides a stronger standard (data-independent and \emph{ex-ante}) DP guarantee, which can be evaluated before the algorithm is run.}

\section{Preliminaries: Differentially Private Text Generation}
\label{sec:back}

\mypar{Language Models \& Decoding}
An auto-regressive language model $P$ over a vocabulary $V$ defines a distribution $P(x_t | \bfxlt)$ over the next token $x_t \in V$ in response to a prefix $\bfxlt := (x_1, \ldots, x_{t-1}) \in V^*$; here, $V^*$ denotes the set of all sequences of elements from $V$. These probabilities are typically obtained as the softmax of the next-token logits $\phi(\cdot \,|\, \bfxlt)$ as $P(x_t | \bfxlt) \propto \exp\big(\phi(x_t | \bfxlt)\big)$. %

Given a query $\bfq \in V^*$ and a reference text $\bfr \in V^*$, we could iteratively sample a response $x_t \sim P(\,\cdot\,|\, \bfq, \bfr, \bfxlt)$ from the LLM $P$ with a concatenation of $\bfq, \bfr, \bfxlt$ as context, and $\bfx_{<1}=\emptyset$. Unfortunately, this approach tends to over-represent incorrect low-probability tokens (as $|V|\sim O(10^5)$ or larger), leading to degenerate text~\cite{holtzman2020curious,pillutla2021mauvenips}. A common strategy is to use \emph{decoding algorithms} that reshape $P(\cdot \, | \, \cdot)$ into another distribution $Q(\cdot\, | \, \cdot)$ from which we sample the next token.

Typical decoding algorithms promote more conservative outputs by suppressing the unreliable low-probability tokens. For example, \emph{temperature rescaling}~\cite{ackley1985learning} at a temperature $\tau > 0$ rescales the logits, while \emph{top-$k$ sampling}~\cite{fan2018heirarchical} only samples from the $k$ largest logits. Mathematically,
\begin{align}
\label{eq:decoding-algos}
    Q_{\tau}(x_t | \bfxlt) \propto \exp\left( \tfrac{1}{\tau} \,\phi(x_t | \bfxlt)\right),
    \,\, \text{and} \,\,\,\,
    Q_{\text{top-}k}(x_t | \bfxlt) \propto \begin{cases}
         P(x_t | \bfxlt)\,, & \text{ if } x_t \in \Vtopk, \\
        0\,, & \text{else}\,,
    \end{cases}
\end{align}
where $\Vtopk \subset V$ is the set of $k$ tokens with highest logits $\phi(\,\cdot\,| \, \bfxlt)$. 
See \Cref{fig:public} for an example.

The quality of non-DP long-form text generation is primarily determined by the decoding algorithm and its hyperparameter ($\tau$ or $k$)~\cite{holtzman2020curious,pillutla2021mauvenips, arias2025decoding}. Typically, these hyperparameters are tuned to balance two competing objectives: reducing (or zeroing out) the probability of generating contextually inappropriate or low-probability tokens, while allowing the selection of moderately probable but contextually appropriate tokens to maintain diversity and coherence. In practice, this typically translates to a temperature $\tau \approx 1$ or slightly smaller, and $k \in [10, 100]$ (cf. \Cref{fig:public}). 

\begin{figure}[tbp]
    \centering
    \includegraphics[trim={0 0 0 1.2cm},clip,width=0.98\linewidth]{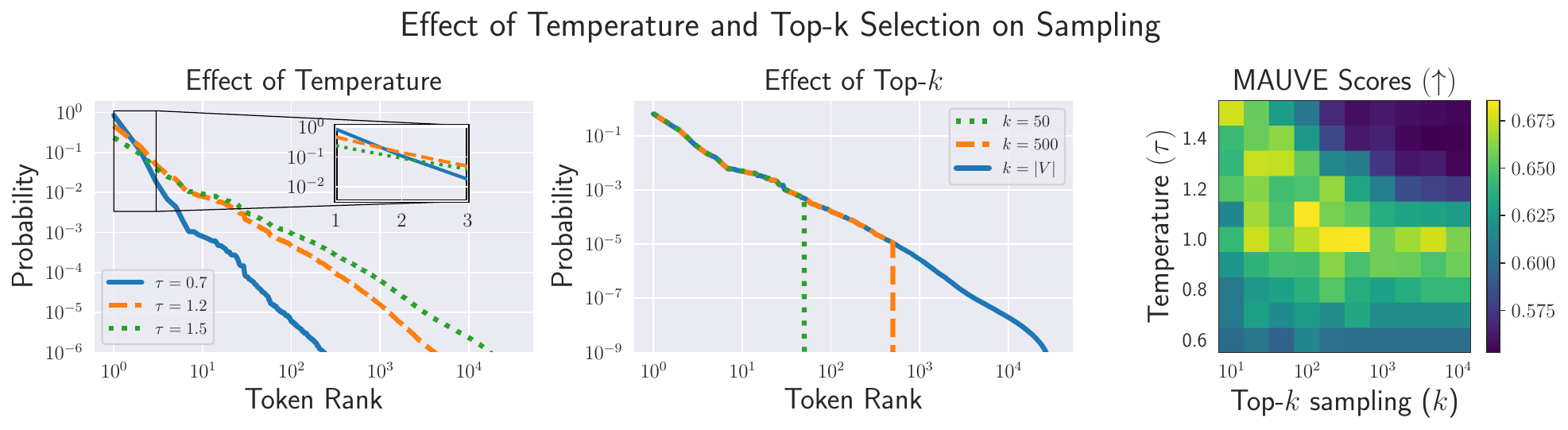}
    \caption{\small 
    \textbf{Left \& Center}: Illustration of how two common decoding algorithms---temperature rescaling and top-$k$ sampling---reshape
    the next-token probabilities for (non-private) LLM-based text generation.
    \textbf{Right}: Heatmap of \mauve scores~\cite{pillutla2021mauvenips,pillutla2023mauvejmlr} of synthetic text generated for the MIMIC-IV-Notes dataset (without using any sensitive references). The best generations (highest \mauve scores) are obtained at $\tau \approx 1.1$ and $k\approx 100$; \method exhibits similar behavior of decoding hyperparameters for private text generation.
    }
    \vspace{-0.5em}
    \label{fig:public}
\end{figure}

In this work, we generate text with a set of privacy-sensitive references $\bfR=(\bfr_1, \ldots, \bfr_\bsz) \in (V^*)^\bsz$. This setting arises in synthetic text generation, where we instruct an LLM to generate text similar to given references, or in RAG systems where $\bfR$ contains retrieved information over confidential knowledge bases. The model output may leak sensitive information from $\bfR$ in such cases.

\mypar{Differential Privacy (DP)}
DP provides formal protections against such privacy leakage. At a high level, a (randomized) text generation algorithm $\calA$ satisfies DP at the example level, if for all \emph{adjacent} reference sets $\bfR, \bfR'$ (that differ in a single sensitive example), the probability distributions $\calA(\bfq, \bfR)$ and $\calA(\bfq, \bfR')$ over responses $\bfx \in V^*$ are ``nearly indistinguishable''. 
A fully specified example-level DP guarantee requires precise notions of adjacency and indistinguishability.
We focus on the \textbf{replace-by-null adjacency}, where an adjacent $\bfR'$ is obtained by replacing one element of $\bfR$ by the empty string (or vice versa); we denote it as $\bfR \simeq \bfR'$. 
This is functionally similar to adding or removing a reference, but simplifies the theoretical technicalities; see \cite{ponomareva2023dp} or \S\ref{sec:review} for a comparison of adjacency notions.
We use the indistinguishability notion of zero-concentrated DP (zCDP)~\cite{bun2016concentrated}. Let $D_\alpha$ denote the $\alpha$-Rényi divergence. Then,
an algorithm $\calA$ satisfies $\rho$-zCDP if
\[
    D_\alpha\big(\calA(\bfq, \bfR) \Vert \calA(\bfq, \bfR')\big) \le \rho\, \alpha \quad \text{for all} \,\, \alpha > 1 \,\, \text{ and all adjacent } \bfR \simeq \bfR' \,.
\]
Smaller values of $\rho$ mean that $\calA(\bfq, \bfR)$ and $\calA(\bfq, \bfR')$ are closer to each other (in terms of the Rényi divergence) and the impact of the differing example between $\bfR, \bfR'$ on the output is low, indicating higher privacy.
The zCDP guarantee can be converted into other notions, e.g., $(\varepsilon, \delta)$-DP~\cite{mironov2017renyi,balle2020hypothesis,steinke2022composition}. Our techniques also translate directly to other notions of adjacency and indistinguishability; see \S\ref{sec:oursadd}. 

\mypar{DP Text Generation via the Exponential Mechanism}
Our goal is to design an algorithm $\calA(\bfq, \bfR)$ to generate text in response to a given query $\bfq$ and references $\bfR$ with a desired $\rho$-zCDP guarantee. We assume the \textbf{whitebox setting} where model logits $\phi(\,\cdot \, | \, \bfx)$ can be queried for any $\bfx \in V^*$. An increasing number of state-of-the-art general-purpose and reasoning open-weights models including DeepSeek, LLaMA3, Mistral, GPT-OSS allow whitebox access to model logits. This is in contrast to the more restrictive API-access setting, where one inference call returns an entire text sequence~\cite{xie2024dpsyndata}.

\ifnum \value{arxiv}= 1
{
\begin{figure}[t]
    \centering
    \includegraphics[trim={0 0 0 1cm}, clip, width=0.98\textwidth]{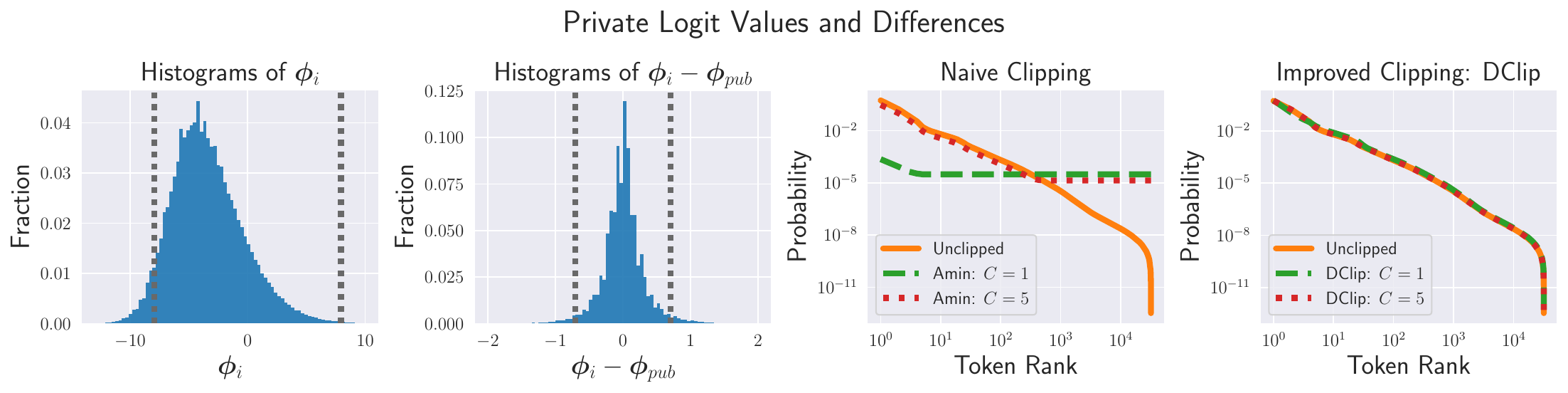}
    \caption{\small 
    \textbf{Left two}:
    Histograms of private logits $\bfphi_i$ and differences from public logits $\bfphi_i-\bfphi_\public$ for a synthetic data sample generated from the MIMIC dataset, with $5$\textsuperscript{th} and $95$\textsuperscript{th} percentiles shown by the dotted lines. The spread of values for $\bfphi_i-\bfphi_\public$ is significantly smaller (around $10\times$) than that of $\bfphi_i$.  Thus, $\OurClip_\clipnorm(\bfphi_i, \bfphi_\public)(y) = \bfphi_i(y)$ for over $95\%$ of all $y \in V$ with $\clipnorm \approx 1$, while the naive clipping of \citet{amin2024private} requires $\clipnorm \approx8$. This translates into an $8\times$ gain in computational efficiency.
    \textbf{Right two}:
    A small clip norm of $\clipnorm=1$ (using \citet{amin2024private}'s method introduces significant bias resulting in a near-uniform distribution over the vocabulary. In contrast, \txtOurClip (see \S\ref{sec:method}) preserves probabilities with minimal distortion even at $\clipnorm = 1$.
    }
    \label{fig:clip}
\end{figure}
} \else {} \fi

We describe the prior state-of-the-art approach of \citet{amin2024private}, which generates the next token $x_t \in V$ via the canonical exponential mechanism~\cite{mcsherry2007exp} over the next-token logits of the given LLM.
Given a query $\bfq$ and sensitive references $\bfR$, the first step is to obtain the next-token logits $\bfphi_i = \phi(\cdot\, |\, \bfq, \bfr_i, \bfxlt) \in \R^{|V|}$ for each $i \in [\bsz]$ with an LLM inference call. Second, we clip and aggregate the logits as $\bar\bfphi = (1/\bsz) \sum_{i=1}^\bsz \clip_\clipnorm(\bfphi_i)$, where $\clip_\clipnorm(\bfphi)$ clips each coordinate of $\bfphi_i$ to lie in $[-C, C]$.\footnote{
    The clipping in \cite{amin2024private} re-centers the logits (which are invariant to additive shifts). 
    While this is empirically important, we omit the details in our exposition as the underlying mathematical properties do not change.
}
We can then sample a $\rho$-zCDP token $x_t \in V$ by sampling~\cite{expmechdporg2021rogers} from the distribution
\begin{align} \label{eq:expo-mech}
    Q(x_t \, | \, \bfq, \bfR, \bfxlt) \propto \exp\left( \tfrac{\sqrt{2\rho}}{\sens(\bar\bfphi)} \cdot \bar \bfphi(x_t) \right) 
\end{align}
where the sensitivity $\sens(\bar\bfphi)$ measures the maximum change in $\bar\bfphi(y)$, when we move to an adjacent set $\bfR' \simeq \bfR$ of references for any $y \in V$. The clipping and aggregating steps control the sensitivity as $\sens(\bar\bfphi) = O(\clipnorm / \bsz)$,\footnote{
\revision{
We have $\sens(\bar\bfphi)=\tfrac{2\clipnorm}{\bsz}$ (resp. $\tfrac{\clipnorm}{\bsz}$) under replace-by-null (resp. zero-out) adjacency; cf. \S\ref{sec:review}.
We give this method an advantage in \S\ref{sec:expt} by (incorrectly) taking $\sens(\bar\bfphi)=\clipnorm/\bsz$ under replace-by-null adjacency.}
} 
so that \cref{eq:expo-mech} reduces to sampling with temperature $\tau = O(\clipnorm / \bsz\sqrt{\rho})$.

\mypar{Drawbacks of Prior Work}
If the clip norm $\clipnorm$ is too small, the resulting distortions in the next-token logits boost the probabilities of the unlikely tail tokens (see \Cref{fig:clip}), and if $\clipnorm$ is too large, the temperature $\tau \propto \clipnorm/\bsz$ becomes too large. Both scenarios lead to degenerate text. Generating non-degenerate text from this method requires taking $\clipnorm$ sufficiently large (e.g. \citet{amin2024private} recommend $\clipnorm\approx10$), and $\bsz$ large enough to ensure a temperature of $\tau = O(\clipnorm / \bsz\sqrt{\rho}) \approx 1$. 

That is, generating each private token requires a large number $\bsz$ of LLM next-token inference calls, which can be prohibitive if the minimum admissible $B$ is large.
In practice, we can typically control $\bsz$, as this is the number of reference documents we use for one synthetic generation, or the number of retrieved references in RAG systems.
However, we find in \S\ref{sec:expt} that the method of \cite{amin2024private} fails to produce coherent text at a batch size $\bsz$ smaller than 50 or 100, making it prohibitively expensive at scale.

\revision{
Finally, \citet{amin2024private} adaptively use tokens generated from public logits $\bfphi_\public = \phi(\,\cdot \,|\, \bfq, \bfxlt)$ using the sparse vector technique (SVT)~\citep{dwork2010boosting} if they are close enough to the aggregated private logits $\bar \bfphi$. While this improves the overall privacy-utility tradeoffs, it makes the noise calibration more challenging. Indeed, the privacy guarantee scales with the number of \emph{private tokens} adaptively generated (which can be unknown a priori), as opposed to the \emph{total} number of generated tokens (which can be a priori specified). 
This leaves two choices calibrate the noise hyperparameters: (a) estimate the fraction of times SVT chooses private tokens, or (b) use grid search. Both options can incur increased computation and privacy costs~\cite{papernot2022hyperparameter}. 
}

Next, we describe how \method overcomes these drawbacks.

\section{\method: DP Text Generation under Strict Compute Budgets}
\label{sec:method}
\ifnum \value{arxiv}= 0 
{
\begin{figure}[t]
    \centering
    \includegraphics[trim={0 0 0 1cm}, clip, width=0.98\textwidth]{imgs2/clip_hist.pdf}
    \caption{\small 
    \textbf{Left two}:
    Histograms of private logits $\bfphi_i$ and differences from public logits $\bfphi_i-\bfphi_\public$ for a synthetic data sample generated from the MIMIC dataset, with $5$\textsuperscript{th} and $95$\textsuperscript{th} percentiles shown by the dotted lines. The spread of values for $\bfphi_i-\bfphi_\public$ is significantly smaller (around $10\times$) than that of $\bfphi_i$.  We find that $\OurClip_\clipnorm(\bfphi_i, \bfphi_\public)(y) = \bfphi_i(y)$ for over $95\%$ of all $y \in V$ with $\clipnorm \approx 1$, while the naive clipping of \citet{amin2024private} requires $\clipnorm \approx8$. This translates into an $8\times$ gain in computational efficiency by allowing smaller batch sizes $\bsz$ to maintain $\tau \approx 1$ (which only depends on the ratio $\clipnorm/\bsz$).
    \textbf{Right two}:
    A small clip norm of $\clipnorm=1$ introduces significant bias in the method of
    \citet{amin2024private}, resulting in a near-uniform distribution over the vocabulary. In contrast, our \txtOurClip method (\S\ref{sec:method}) preserves probabilities with minimal distortion even at $\clipnorm = 1$.
    }
    \label{fig:clip}
\end{figure}
} \else {} \fi

\method aims to improve the privacy loss with a surgical clipping function, and the utility with approximating the top-$k$ decoding over \emph{private} logits at any given computation budget. 

\mypar{\txtOurClip: A Targeted Clipping Function}
The private next-token logits $\bfphi_i := \phi(\,\cdot \, | \, \bfq, \bfr_i, \bfxlt)$ encode not just the sensitive information in the reference $\bfr_i$, but also general information about language, grammar and syntax, semantics, a degree of world knowledge and common sense, and style/tone of the preceding tokens~\cite[e.g.][]{ferrando2023explaining,serrano2023language,belrose2023eliciting}. Clipping the next-token logits involves paying a privacy cost for this general non-private information, leading to wasted privacy budget.

This non-private prior information about language is already captured by a pretrained model, even without access to the sensitive reference $\bfr_i$. We isolate and selectively clip only the extra information conveyed by the private logits $\bfphi_i$ over and above the public logits $\bfphi_\public := \phi(\,\cdot\, |\, \bfq, \bfxlt) \in \R^{|V|}$: 
\begin{align} \label{eq:ourclip}
    \OurClip_\clipnorm(\bfphi_i, \bfphi_\public) := 
    \bfphi_\public + \clip_\clipnorm(\bfphi_i - \bfphi_\public)\,,
\end{align}
where $\clipnorm > 0$ is a specified clip norm, and $\clip_\clipnorm(\bfphi)$ projects each coordinate of $\bfphi$ onto the interval $[-\clipnorm, \clipnorm]$.
The clipped and aggregated logits $\bar\bfphi$ can then be a drop-in replacement in \cref{eq:expo-mech} is
\begin{align} \label{eq:aggregated-logits}
\textstyle
    \bar \bfphi = \frac{1}{\bsz}\sum_{i=1}^\bsz \OurClip_\clipnorm(\bfphi_i, \bfphi_\public)  = \bfphi_\public + \frac{1}{\bsz}\sum_{i=1}^\bsz \clip_\clipnorm(\bfphi_i - \bfphi_\public) \,.
\end{align}
Its sensitivity is also conveniently bounded as follows (see \S\ref{sec:oursadd} for a proof):

\begin{property} \label{prop:ourclip-sensitivity}
    Fix a query $\bfq$ and prefix $\bfxlt$,  and denote $\bfphi_i = \phi(\, \cdot\, | \, \bfq, \bfr_i, \bfxlt)$.
    Then, the sensitivity of the map $(\bfr_1, \ldots, \bfr_\bsz) \mapsto \frac{1}{\bsz} \sum_{i=1}^\bsz \OurClip_\clipnorm(\bfphi_i, \bfphi_\public)$ under the replace-by-null adjacency is $\clipnorm / \bsz$.
\end{property}

\Cref{fig:clip} shows that the spread of $\bfphi_i - \bfphi_\public$ is typically much smaller than that of $\bfphi_i$, since $\bfphi_\public$ already contains a strong prior on language. We utilize a smaller clip norm $\clipnorm$ without distorting the model outputs, and hence a \emph{smaller compute cost} $\bsz$ to maintain a temperature of $\tau \propto \clipnorm/\bsz$ close to 1.

\begin{algorithm}[t]
\caption{\method for DP Text Generation}
\label{alg:main}
\small
\begin{algorithmic}[1]
    \Require LLM logit API $\phi(\cdot\,|\,\cdot)$, vocabulary $V$, query $\bfq$, sensitive references $\bfR = \{\bfr_1, \ldots, \bfr_\bsz\}$, max text length $T$, clip norm $\clipnorm$, temperature $\tau$, top-$k$ parameter, initial generation $\bfx=\emptyset$
        \For{$t\;=\;1,\ldots, T$}  
            \State For $i \in [\bsz]$, set $\bfphi_i = \phi(\,\cdot\, | \, \bfq, \bfr_i, \bfxlt) \in \R^{|V|}$ and $\bfphi_\public = \phi(\,\cdot\, | \, \bfq, \bfxlt)\in \R^{|V|}$
            \Comment{LLM calls}
            \State $\Vtopk^+ = \textsc{ExpandedTopVocabulary}\big(\bfphi_\public, k, \clipnorm, \bsz\big)$
            \Comment{Vocabulary for \ourtopk sampling}
            \State Set $\Bar{\bfphi} = \bfphi_\public + \frac{1}{\bsz} \sum_{i=1}^\bsz\; \clip_\clipnorm(\bfphi_i, \bfphi_\public)$  \Comment{Aggregated clipped logits}
            \State Sample $x_t\;\sim\;\mathsf{softmax}(\bar\bfphi[\Vtopk^+] / \tau)$ \Comment{Exponential mechanism over $\Vtopk^+$}
            \State \textbf{Yield} next token $x_t$, append it to $\bfx$, and break if $x_t = \texttt{<eos>}$ \Comment{Generated Token}
    \EndFor
    \Procedure{ExpandedTopVocabulary}{$\bfphi_\public, k, \clipnorm, \bsz$}
    \State Set $\ell$ to be the $k$\textsuperscript{th} largest entry of $\bfphi_\public$ \Comment{top-$k$ threshold of $\bfphi_\public$}
    \State \textbf{Return} $\{y \in V \, :\, \bfphi_\public(y) \ge \ell - 2\clipnorm/\bsz\}$ \Comment{Expand top-$k$ threshold of $\bfphi_\public$ by $2\clipnorm/\bsz$}
    \EndProcedure 
\end{algorithmic}
\end{algorithm}

\mypar{Truncated \ourtopk Sampling}
As discussed in \S\ref{sec:back}, the degeneracies of sampling long-form text with temperature $\tau \lesssim 1$ can be fixed by truncated decoding~\cite{holtzman2020curious}. We now adopt this to DP text generation. Define $\Vtopk(\bfphi) \subset V$ as the top-$k$ vocabulary corresponding to logits $\bfphi\in\R^{|V|}$:
\[
\textstyle
\Vtopk(\bfphi) = \{y_1, \ldots, y_k\} \quad \iff \quad 
\bfphi(y_1) \ge \cdots \ge \bfphi(y_k) \ge \max_{y \in V \setminus \Vtopk(\bfphi)} \bfphi(y) \,.
\]
Vanilla (non-private) top-$k$ sampling from one logit vector $\bfphi_i = \phi_i(\,\cdot\,|\bfq, \bfr_i, \bfxlt)$ would restrict the vocabulary to $\Vtopk(\bfphi_i)$. 
However, in our case, we have $\bsz$ such vectors $\bfphi_1, \ldots, \bfphi_\bsz$ which have to be combined to get the next token.
Our goal is to extend the truncation to union of the top-$k$ contributions of these logit vectors in \cref{eq:aggregated-logits}.

To get the contribution of the logit vector $\bfphi_i$ to \cref{eq:aggregated-logits}, we plug in $\bfphi_j = \bfphi_\public$ for $j \neq i$, since $\bfphi_j$ does not contain any private information about $\bfphi_i$.
Thus, we define its contribution as
\[
\bfphi^\clip_i := \bfphi_\public + \frac{1}{\bsz} \, \clip_\clipnorm(\bfphi_i - \bfphi_\public) \,.
\]
Then, the top-$k$ vocabulary for private generation is simply the union $\barVtopk := \cup_{i=1}^\bsz \Vtopk(\bfphi^\clip_i)$.

Unfortunately, restricting our vocabulary to this privacy-sensitive top-$k$ set $\barVtopk$ fails to satisfy DP. Changing the dataset from $\bfR$ to an adjacent $\bfR'$, can cause a token $y \in V$ to ``jump out'' of the top-$k$ set $\bar V_k$. Its probability of being generated then goes from a non-zero value to a zero value, leading to a privacy loss of $\log (Q(y | \bfq, \bfR, \bfxlt) / Q(y | \bfq, \bfR', \bfxlt)) = \infty$ (see \cite[e.g.][Def. 2]{steinke2022composition} for definitions).

An alternative that has been explored in previous work is to simply sample from the top-$k$ tokens from the public logits $\bfphi_\public$. This incurs no privacy cost as the restricted vocabulary is independent of the private logits. However, this approach may fail to generate tokens that are rare in the public (pretraining) data but common in the sensitive data $\bfR$. Such tokens are likely to be relevant to the data generation domain, and it is thus desirable to recover them.

Instead, we build a tight superset of $\barVtopk = \cup_{i=1}^\bsz \Vtopk(\bfphi_i^\clip)$ based purely on the public logits $\bfphi_\public$.
Observe that clipping ensures $|\phi_i^\clip(y) - \phi_\public(y)| \le C/B$ for each $i$. 
So, each $y \in V_k(\bfphi_\public)$ satisfies $\bfphi_i^\clip(y) + C/B \ge \ell$, where $\ell$ is the top-$k$ threshold of $\bfphi_\public$.
Thus, the top-$k$ threshold $\ell_i$ of $\bfphi_i^\clip$ is bounded as $\ell_i \ge \ell - C/B$. It thus follows that:
\begin{align}\label{eq:topkplus}
    y \in V_k(\bfphi_i^\clip) \iff \bfphi_i^\clip(y) \ge \ell_i \implies \bfphi_\public(y) + \tfrac{C}{B} \ge \ell - \tfrac{C}{B} 
    \,.
\end{align}
In other words, the private top-$k$ vocabulary $V_k(\bfphi_i^\clip)$ always lies in the set
$\Vtopk^+ := \{y \in V \, :\, \bfphi_\public(y) \ge \ell - 2 \clipnorm/\bsz\}$. 
Since this is true for each $i \in [B]$, we have that $V_k^+$ is a superset of $\barVtopk = \cup_{i=1}^\bsz \Vtopk(\bfphi_i^\clip)$.
We refer to sampling from the set $\Vtopk^+$ as \emph{\ourtopk sampling}. 
Since $\Vtopk^+$ is constructed without using the private logits, sampling from it does not incur any privacy cost. 
We find in \S\ref{sec:expt} that 
$\left| \Vtopk^+ - \Vtopk(\bfphi_\public) \right|  \approx 10 \ll |V|$, i.e. $\Vtopk^+$ is indeed a \emph{tight} superset of $\bar V_k$. We further develop the intuition on \ourtopk sampling using a toy example in \S\ref{sec:ourtopkintuition}.

\mypar{\method: Text Generation \& Privacy Accounting}
Our approach wraps the above two components in an exponential mechanism-based outer loop; see \Cref{alg:main}.
Crucially, \Cref{alg:main} does not involve any data-dependent privacy mechanisms. This lets us give a zCDP bound, building directly upon the zCDP analysis of the exponential mechanism~\cite{expmechdporg2021rogers} and adaptively composing~\cite{steinke2022composition} the per-token zCDP guarantee over $T$ tokens. Adaptive composition ensures that previously generated tokens can be reused safely in future steps without additional privacy cost; see \S\ref{sec:oursadd} for proof.

\begin{theorem} \label{thm:accounting}
    \Cref{alg:main} with a maximum token budget $T$, a clipping threshold $\clipnorm$, a set $\bfR$ of $\bsz = |\bfR|$ references, and temperature $\tau$ satisfies $\rho_{\Seq}$-zCDP with 
    $\rho_{\Seq} = {T \clipnorm^2} / {(2 \bsz^2 \tau^2)}$.
\end{theorem}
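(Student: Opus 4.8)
The plan is to view \Cref{alg:main} as an adaptive composition of at most $T$ single-token exponential mechanisms, bound the zCDP of each token step using \Cref{prop:ourclip-sensitivity}, and conclude by zCDP composition.

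First I would fix a pair of adjacent reference sets $\bfR \simeq \bfR'$ and condition on a realized prefix $\bfxlt = (x_1, \dots, x_{t-1})$. Given this prefix, the public logits $\bfphi_\public = \phi(\cdot \mid \bfq, \bfxlt)$, the $k$-th largest entry $\ell$ of $\bfphi_\public$, and hence the expanded vocabulary $\Vtopk^+ = \{y \in V : \bfphi_\public(y) \ge \ell - 2\clipnorm/\bsz\}$ are all deterministic functions of $\bfq$ and $\bfxlt$ alone and do not depend on $\bfR$. Therefore the sampling step of \Cref{alg:main} draws $x_t$ from the exponential mechanism over the (data-independent) outcome set $\Vtopk^+$ with score function $y \mapsto \bar\bfphi(y)$ and inverse temperature $1/\tau$, i.e., $x_t \sim Q(\,\cdot \mid \bfq, \bfR, \bfxlt)$ with $Q(x_t) \propto \exp\!\big(\bar\bfphi(x_t)/\tau\big)$ for $x_t \in \Vtopk^+$. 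By \Cref{prop:ourclip-sensitivity}, for every coordinate $y$ and every prefix the score $\bfR \mapsto \bar\bfphi(y)$ has sensitivity at most $\sens(\bar\bfphi) = \clipnorm/\bsz$ under replace-by-null adjacency. Matching the per-token mechanism to the form of \cref{eq:expo-mech} via $\sqrt{2\rho_1}/\sens(\bar\bfphi) = 1/\tau$ and invoking the zCDP guarantee of the exponential mechanism~\cite{expmechdporg2021rogers} (equivalently: its bounded-range parameter is $2\sens(\bar\bfphi)/\tau$, which yields $\big(2\sens(\bar\bfphi)/\tau\big)^2/8$-zCDP), each token step is $\rho_1$-zCDP with
\[
\rho_1 = \frac{\sens(\bar\bfphi)^2}{2\tau^2} = \frac{\clipnorm^2}{2\bsz^2\tau^2}\,.
\]
This bound is uniform over the conditioning prefix, and restricting the exponential mechanism to the public subset $\Vtopk^+$ does not change it.

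Finally I would assemble the $T$ steps: the output $\bfx = (x_1, x_2, \dots)$ is produced by running the above $\rho_1$-zCDP step at most $T$ times, where the context at step $t$ includes the previously generated tokens $\bfxlt$, and the early-stopping rule ``break if $x_t = \texttt{<eos>}$'' is a deterministic function of the tokens generated so far, hence post-processing. Adaptive composition for zCDP then shows that \Cref{alg:main} satisfies $\rho_{\Seq}$-zCDP with $\rho_{\Seq} = T\rho_1 = T\clipnorm^2/(2\bsz^2\tau^2)$, as claimed.

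I expect the only real subtlety to be pinning down the constant in the single-step exponential-mechanism bound --- in particular the factor of two from the bounded-range (normalizer) analysis, rather than naively reading off $\rho_1 = \sens^2/(8\tau^2)$ --- together with carefully justifying that the data-independence of $\Vtopk^+$ and $\ell$ given the prefix lets each token step be treated as a clean exponential mechanism of sensitivity $\clipnorm/\bsz$, so that the \ourtopk truncation and the early-stopping rule contribute nothing to the privacy cost.
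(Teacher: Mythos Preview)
Your proposal is correct and follows essentially the same route as the paper: show that each token step is an exponential mechanism over the (data-independent) set $\Vtopk^+$ with score sensitivity $\clipnorm/\bsz$ from \Cref{prop:ourclip-sensitivity}, deduce the per-token $\rho_1 = \clipnorm^2/(2\bsz^2\tau^2)$ via \cite{expmechdporg2021rogers}, and then adaptively compose over at most $T$ steps. You are slightly more explicit than the paper about the data-independence of $\Vtopk^+$ and about early stopping being post-processing, but the argument is otherwise identical.
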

When running the algorithm in practice, we assume that we are given the privacy budget $\rho_{\Seq}$, the maximum sequence length $T$, and the number of references $\bsz$ (which fixes the compute budget). We recommend a default temperature of $\tau \approx 1$. Then, \Cref{thm:accounting} lets us set the clip norm $C =  \bsz \tau \sqrt{2\rho_{\Seq}/T}$ to get the desired privacy guarantee (independent of the top-$k$ parameter).

Another advantage of our method is that generation gracefully falls back on that of the public model if the compute or privacy budget is too small. In this case, we get a very small $\clipnorm$ and $\bar\bfphi_i \approx \bfphi_\public$.
This nice property is not satisfied by existing baselines. For example, as the privacy or computation budget gets smaller, \cite{amin2024private} requires a small clip norm $\clipnorm$, reducing their method to uniform sampling over the full vocabulary (see \Cref{fig:clip}). This leads to gibberish text.\footnote{
    For example, the word ``differential'' might be made up of two tokens ``differ'' and ``-ential''. Sampling uniformly at random could yield the ``-ential'' token without an appropriate preceding token, producing gibberish. 
}
Similarly, AdaPMixED~\cite{flemings2024adapmixed} has a data-dependent privacy guarantee that depends on the number of tokens sampled per generation. As the privacy or computation budget gets smaller, the length of synthetic generations falls sharply. 

Finally, we note that it might be possible to improve the dependence of \Cref{thm:accounting} on the token budget $T$ with regularization~\cite[e.g.][]{gopi2022private,ganesh2023universality}; we leave this for future work. 
\section{Experiments}
\label{sec:expt}
\label{sec:exptsetup}
\mypar{Setup}
Given a reference dataset $\calD = \{\bfr_1, \ldots, \bfr_N\}$, our task is to create $n$ synthetic texts $\bfx_1, \ldots, \bfx_n$ that are statistically similar to $\mathcal{D}$ while satisfying a given $\rho_\Seq$-zCDP guarantee with respect to $\calD$. 
We generate synthetic example $\bfx_j$  based on a batch $\bfR_j = (\bfr_{j\bsz+1}, \ldots, \bfr_{(j+1)\bsz})$ of sensitive references with a query $\bfq$ instructing a generation similar to the given references. By parallel composition, each example $\bfx_j$ must also satisfy $\rho_\Seq$-zCDP.
Since the batch size $\bsz$ fixes the computation budget, we allow $(\bsz+1)$ LLM next-token inference calls per generated token. This also limits us to $n \le \lfloor N/\bsz\rfloor$ synthetic examples. See \S\ref{sec:expdetail} for additional details on the task and experimental setup. 

We report a central DP guarantee for a single hyperparameter setting to generate $n$ synthetic text samples; the randomized mechanism here is a set of $n$ autoregressive text generations from LLMs, which output a sequence of tokens\footnote{The token string ends in an \texttt{<EOS>} token unless the number of tokens generated reaches the limit $T$ first.} each. We propose heuristics for choosing optimal hyperparameters in \S\ref{sec:practical} in lieu of accounting for privacy lost in hyperparameter tuning. Throughout this work, we report example-level (an example is a single text sample drawn from the reference dataset) $\rho$-zCDP guarantees (converted to $(\epsilon, \delta)$-DP where necessary) under the \emph{replace-by-null} adjacency; see \S\ref{sec:review} for details. We release our code at: \url{https://github.com/cerai-iitm/InvisibleInk-Experiments}.

\mypar{Datasets}
We experiment with three datasets in clinical, legal, and commercial domains, see \Cref{tab:dataset_stats}.
MIMIC-IV-Note \cite{johnson2023mimicivnote, johnson2023mimiciv} is a de-identified collection of medical text associated with ICU patients; we use the discharge summaries, which contain sensitive diagnosis, treatment, and medication information. Further, this dataset is legally unavailable for LLM pretraining, making it valuable in modeling real-world distribution-shift sensitive domains.
The Text Anonymization Benchmark (TAB) \cite{pilan2022tab} contains
$N=1013$ (training set) court case notes (in English) from the European Court of Human Rights (ECHR) with personal identifiers and other confidential attributes. This low-resource dataset only admits extremely small batch sizes $\bsz$ (e.g., $B=127$ gives us a maximum of $8$ synthetic texts).
The Yelp Reviews dataset \cite{yelp2023yelp} contains user-generated reviews and ratings for businesses with personal opinions and location references, and is a standard benchmark in DP text generation~\cite{amin2024private,xie2024dpsyndata}. 
We intentionally choose real privacy-sensitive datasets (e.g. MIMIC, TAB) datasets with long-form text, rather than standard internet datasets (AGNews, IMDB, etc.) that are well-represented in the pretraining data of most LLMs~\cite{tramer2022position,cummings2023advancing}.
See \S\ref{sec:datasets} for further discussion.

\begin{table}[bt]
\small
    \centering
\adjustbox{max width=\linewidth}{
    \begin{tabular}{c c c c c c c }
        \toprule
         \textbf{Dataset} & \textbf{Domain} & \textbf{Text Type} & \textbf{Size} & \textbf{Avg. Length} & \textbf{\# Generations} & \textbf{Max. Length} \\
        \midrule
        MIMIC-IV-Note & Medical & Discharge Summaries & 311K & 298.4 & 1000 & 500 \\
        Yelp  & Commercial & Reviews & 1.9M & 145.2  & 500 & 200 \\
        TAB  & Legal & Court Case Texts & 1013 & 387.7  & 100 & 500 \\
        \bottomrule
    \end{tabular}}
    \vspace{0.5em}
    \caption{\small
    Dataset summaries. 
    Lengths are measured in tokens and 
    we use at most $\bsz n \le 128K$ reference texts.
    }
    \vspace{-2em} %
    \label{tab:dataset_stats}
\end{table} 
\mypar{Models} 
We primarily use the TinyLLaMA 1.1B \cite{zhang2024tinyllama}, a compact open-weights model with strong performance in compute-constrained settings. We also use LLaMA3.2-1B \cite{llama32modelcard} to study the effect of large vocabularies ($|V|=128K$ vs. $32K$ for TinyLlama), and LLaMA3-8B \cite{llama3modelcard} for scaling.

\mypar{Baselines and Hyperparameters}
We compare \method (with and without \ourtopk sampling) with the prior white-box SoTA methods \citet{amin2024private} for DP text generation and \adapmixed~\cite{flemings2024adapmixed} for DP next-token prediction. We iteratively sample from the predicted next-token distribution of \adapmixed to adapt it to DP text generation.
For all methods, we fix the batch size $\bsz$ as it fixes the computation budget. We select other hyperparameters, such as the clip norm $\clipnorm$ for \method and \citet{amin2024private} to achieve a desired privacy guarantee. See \S\ref{sec:hyperparam} for further details.

The method of \citet{amin2024private} has a sensitivity of $\clipnorm/\bsz$ under the zero-out adjacency, compared to $2 \clipnorm/\bsz$ under the \textit{replace-by-null} adjacency (see \Cref{prop:amin-sens} of \S\ref{sec:review}). Yet, we give it an advantage by (incorrectly) taking the sensitivity to be $\clipnorm/\bsz$ under the \textit{replace-by-null} adjacency. 
\adapmixed originally gives a data-dependent DP guarantee under the add-or-remove adjacency in \cite{flemings2024adapmixed}.
Since we find that the utility trends under both adjacency notions are qualitatively similar (see \S\ref{sec:expt-adj-adapmixed}), we modify \adapmixed directly and report the results for \textit{replace-by-null} adjacency here. %

\revision{
We note that the DP guarantees of \citet{amin2024private} are calibrated on the number of times the adaptive SVT step (whose output is data-dependent) selects the private tokens. Further, \adapmixed~\cite{flemings2024adapmixed} gives data-dependent DP guarantees. This makes precise \emph{pre-hoc} privacy accounting difficult, and is a limitation of these methods.
In our experiments we select hyperparameters that give the desired DP guarantee ``on average''. For example, we find that the adaptive SVT step in \citet{amin2024private} selects private tokens around $25\%$ of the time, so we calibrate their method to produce $\approx T/4$ private tokens and generate until the privacy budget is exhausted.
We find that the privacy loss always exhausts the maximum budget for at least one batch across all settings.}

In addition to the baselines discussed above, we also compare \method with the API-access method of Augmented Private Evolution (AugPE) \cite{xie2024dpsyndata}. This more restrictive setting naturally makes privatization harder. AugPE makes two types of LLM inference calls: generation and paraphrasing. 
We cannot directly compare the computational cost of \method to AugPE in an apples-to-apples comparison based on the number of next-token inference calls per generated token, since it is calibrated using the number of paraphrasing iterations made per generated sequence.
Instead, we directly use the wall-clock runtime of both methods as a measure of computation cost. We also allow AugPE an advantage of $\sqrt{2}$ in the sensitivity under \textit{replace-by-null} adjacency (see \S\ref{sec:adj-augpe}). For additional results and detailed generation settings, see \S\ref{sec:augpe-expt}.

\mypar{Metrics}
We evaluate the statistical similarity between the generated text and reference distribution with \mauve scores~\cite{pillutla2021mauvenips,liu2021mauvetheory,pillutla2023mauvejmlr}, the gap between the average perplexities of generated and reference texts (denoted \DeltaPPL), and average generation lengths to ensure that sufficient tokens are generated before exhausting the privacy budgets. The MIMIC discharge notes also contain medical named-entities such as names of medical conditions, procedures, and medicines. We plot the number of such entities identified by a MedNER model~\cite{mattupalli2024}, as a measure of the domain-relevance of generated text.

\begin{figure}[t]
    \centering
    \includegraphics[trim={0.2cm 0 0 1.1cm}, clip, width=0.98\linewidth]{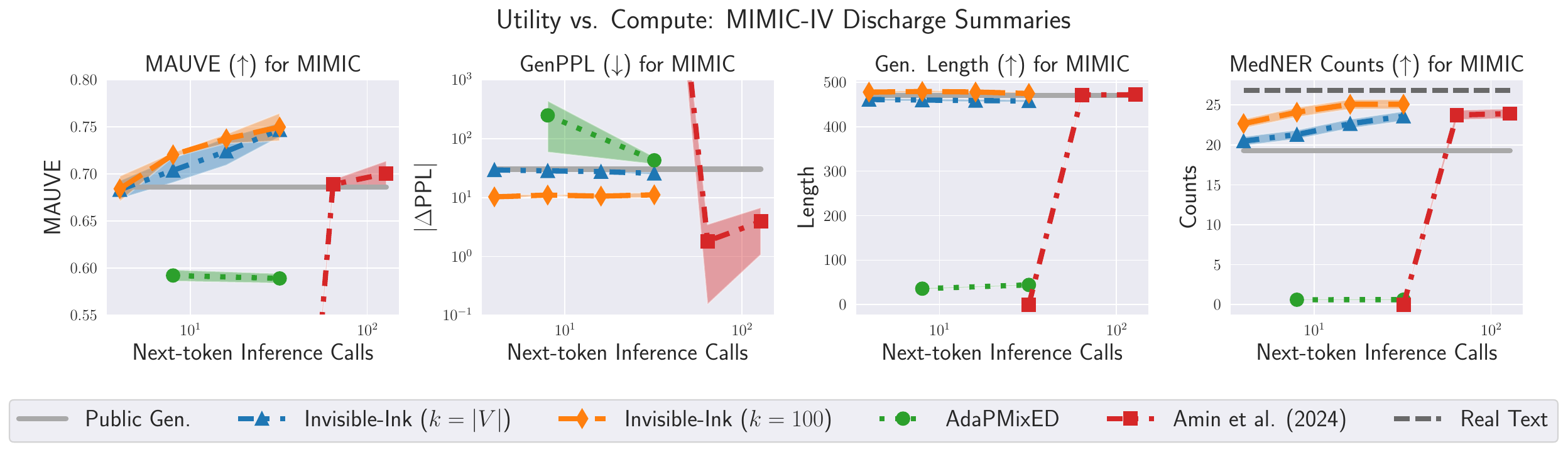}
    \includegraphics[trim={0.2cm 0cm 0 1.1cm}, clip,width=0.98\linewidth]{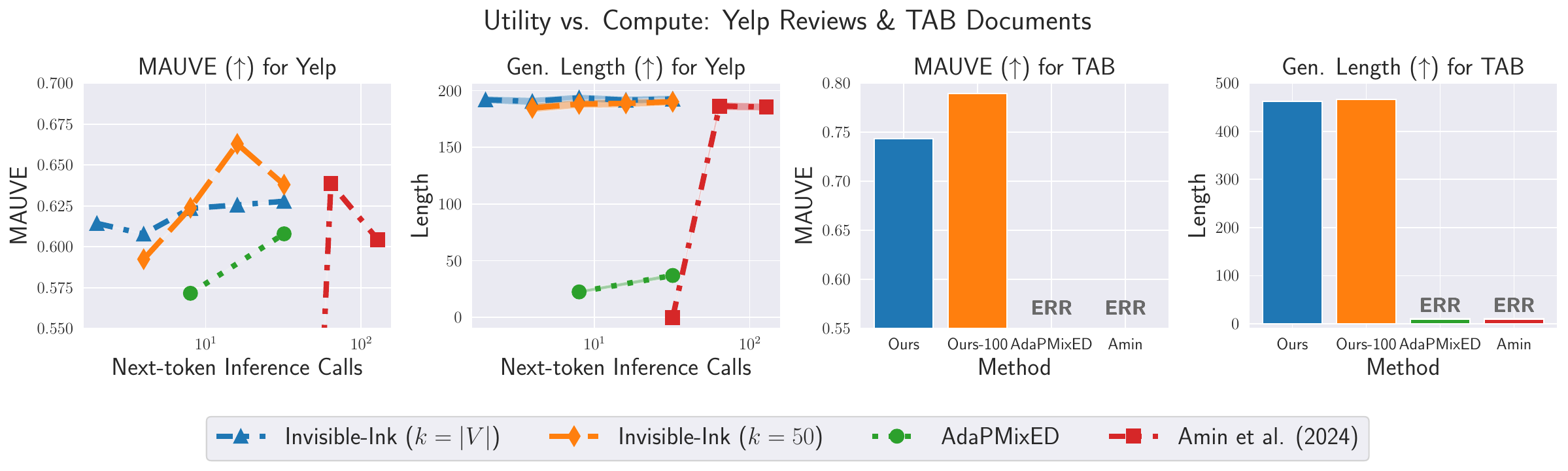}
    \caption{ \small
    Utility-compute tradeoffs at $(\eps=10,\delta=10^{-6})$ DP on each dataset across varying compute budget from $B \in \{1, 3, \ldots, 127\}$. Results reported over $3$ runs with $95\%$ confidence intervals (see \S\ref{sec:expt:confidence-intervals}) for MIMIC and $1$ run for Yelp/TAB datasets.
    \method can produce text that matches or exceeds the baselines at a fraction of the compute. The baselines do not even work for the low-resource TAB dataset at a small batch size $\bsz=7$.
    }
    \label{fig:mainplot}
\end{figure}

\subsection{Experimental Results}

We compare the privacy-utility-compute tradeoffs of \method with the baselines at a target level of $(\eps=10, \delta=10^{-6})$-DP in \Cref{fig:mainplot}. See \Cref{tab:result} for evaluations at other $\eps$ values (with $\delta=10^{-6}$).

We find that \method with \ourtopk sampling consistently produces the \emph{highest quality text} with the longest generations (and the most medical named entities for MIMIC) at \emph{smaller compute budgets} across datasets. Consider the results on MIMIC, for example: \method can achieve a \mauve score of $\approx71\text{-}74\%$ with 8-16 inference calls per token, while \citet{amin2024private} achieves $\approx70\text{-}72\%$ at with 64-128 inference calls: this is an \textbf{8$\times$ improvement in computation cost}. 
\method also produces more medical named entities than \citet{amin2024private}  capturing the domain-specific information in the sensitive references and producing qualitatively better text.
\begin{table}[btp]
    \centering
    \small
    \adjustbox{max width=0.99\linewidth}{
    \begin{tabular}{c ccc ccc ccc cccc}
        \toprule
        \textbf{} 
        & \multicolumn{4}{c}{\textbf{\ours}$\;{(k=100)}$}
        & \multicolumn{3}{c}{\textbf{\ours}$\;{(k=|V|)}$} 
        & \multicolumn{3}{c}{\textbf{\citet{amin2024private}}}  
        & \multicolumn{3}{c}{\textbf{\adapmixed \cite{flemings2024adapmixed}}} \\
        \cmidrule(lr){2-5}
        \cmidrule(lr){6-8}
        \cmidrule(lr){9-11}
        \cmidrule(lr){12-14}
        $\bm{\bsz}$
        & $\bm{3}$ & $\bm{7}$ & $\bm{15}$ & $\bm{31}$
        & $\bm{3}$ & $\bm{7}$ & $\bm{15}$
        & $\bm{31}$ & $\bm{63}$ & $\bm{127}$
        & $\bm{7}$ & $\bm{31}$ & $\bm{127}$ \\
        \midrule
        $\varepsilon=1$
        & \tabemph{${68.3_{\;\text{0.1}}}$} & ${66.9_{\;\text{1.1}}}$ & ${67.9_{\;\text {1.5}}}$ & \tabemph{$\mathbf{69.2_{\;\text{1.3}}}$}
        & $67.3_{\;\text{0.9}}$ & $67.6_{\;\text{0.9}}$ & $68.2_{\;\text{1.4}}$
        & \dnc & \dnc  & \dnc
        & $58.8_{\;\text{0.3}}$ & $56.8_{\;\text{0.1}}$ & $59.2_{\;\text{0.2}}$
        \\
        $\varepsilon=3$
        & ${67.4_{\;\text{1.1}}}$ & ${69.0_{\;\text{0.7}}}$ & \tabemph{${69.7_{\;\text{0.6}}}$} & \tabemph{$\mathbf{70.9_{\;\text{1.2}}}$}
        & $66.9_{\;\text{2.0}}$ & $69.1_{\;\text{1.3}}$ & $69.1_{\;\text{0.7}}$
        & \dnc & \dnc  & $55.2_{\;\text{0.0}}$ 
        & $58.8_{\;\text{0.1}}$ & $58.9_{\;\text{0.3}}$ & $64.7_{\;\text{1.0}}$
        \\
        $\varepsilon=5$
        & ${67.7_{\;\text{0.1}}}$ & \tabemph{$70.3_{\;\text{1.0}}$} & ${69.7_{\;\text{0.9}}}$ & \tabemph{$\mathbf{73.5_{\;\text{1.0}}}$}
        & $68.6_{\;\text{0.5}}$ & $68.9_{\;\text{0.7}}$ & $69.0_{\;\text{0.7}}$
        & \dnc & \dnc  & ${69.5_{\;\text{0.0}}}$
        & $59.4_{\;\text{0.3}}$ & $59.8_{\;\text{0.4}}$ & $69.9_{\;\text{0.9}}$
        \\
        $\varepsilon=10$
        & $68.4_{\;\text{1.4}}$ & $72.0_{\;\text{0.2}}$ & \tabemph{${73.7_{\;\text{0.4}}}$} & \tabemph{$\mathbf{75.0_{\;\text{1.5}}}$}
        & $68.3_{\;\text{0.9}}$ & $70.4_{\;\text{1.3}}$ & $72.4_{\;\text{1.4}}$
        &  \dnc & ${68.9_{\;\text{0.5}}}$ & $70.1_{\;\text{1.3}}$ 
        & $59.2_{\;\text{0.5}}$ & $58.9_{\;\text{0.5}}$ & \tle 
        \\
        \bottomrule
    \end{tabular}
    }
    \vspace{1em}
    \caption{ \small
    \mauve ($\%$) scores, reported with mean and $95\%$ confidence intervals (see \S\ref{sec:expt:confidence-intervals}) over $3$ runs, for $1000$ synthetic generations of the MIMIC dataset using TinyLLaMA. Synthetic generation without any private references ($\eps=0$) yields a \mauve of $\mathbf{68.56}\%$ (best hyperparameters). \ours outperforms every private baseline with $\approx\bm{8 \times}$ \textbf{smaller batch size}. The results for \ours with $k=|V|$ are reported for $\tau=1.0$ and for $k=100$ are reported for $\tau=1.1$.    \adapmixed \cite{flemings2024adapmixed} sometimes exceeds a wall clock time of $36$ hours  (denoted \tle - Time Limit Exceeded); \citet{amin2024private}'s method fails to generate any synthetic text for small privacy/compute budgets (denoted \dnc - infeasible). The top two scores for every $\eps$ are highlighted.
    }
    \vspace{-1em}
    \label{tab:result}
\end{table} 
\Cref{tab:result} shows that \ours exhibits graceful degradation in utility when operating under tight privacy budgets or compute constraints, where baselines fail. Specifically, \citet{amin2024private}'s method uses up all its privacy budget for the SVT term and cannot produce any text at (a) $\bsz \le 31$, (b) $\eps \le 5$ and $\bsz \le 63$, and (c) $\eps = 1$ and $\bsz\le 127$; see \S\ref{sec:addexp}.
Meanwhile, \adapmixed extinguishes its data-dependent privacy budget after generating as few as 10 tokens for $\bsz=8$. Both baselines require large batch sizes to operate meaningfully, rendering them impractical in compute-constrained environments. We discuss the full sets of hyperparameters explored, in \S\ref{sec:hyperparam}.

\ifnum \value{arxiv}= 1
{

\mypar{Comparison with API-Access Methods}
We find that \ours completely outperforms the API-access based method AugPE~\cite{xie2024dpsyndata} across privacy budget settings and evaluation metrics in \Cref{fig:augpe}. 
We remark that this is expected due to the more restrictive API-access setting of AugPE. 
Interestingly, we found in preliminary experiments that AugPE showed competitive (with \method) performance on the Yelp dataset. This can likely be attributed to the LLM-based paraphrasing step working best when the dataset of private references is well-represented in the pre-training data. See \S\ref{sec:augpe-expt}, for more details and additional results.

\begin{figure}[htbp]
    \centering
    \includegraphics[trim={0 0cm 0 1.1cm},clip,width=0.98\linewidth]{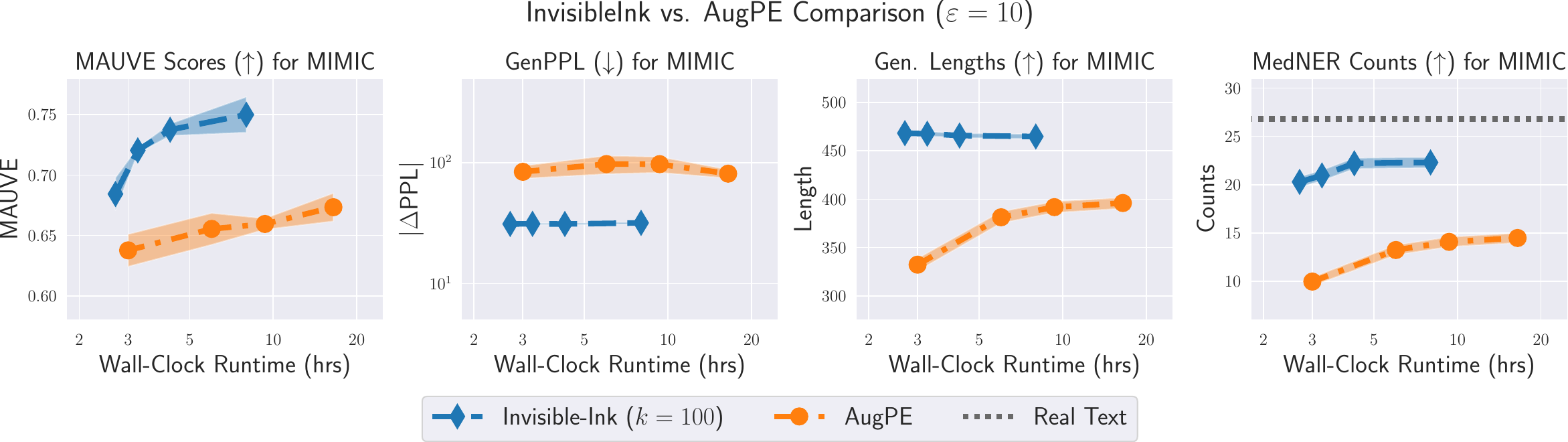}
    \caption{\small 
    \textbf{\ours outperforms the API-access method of AugPE~\cite{xie2024dpsyndata} across all settings:} Utility vs Compute plots (avg. over $3$ runs with $95\%$ confidence intervals) for \ours and AugPE for $\eps=10$ for 1000 synthetic texts generated for the MIMIC dataset. Wall-clock run time is used as a proxy for computational cost. We report results for $\bsz+1=4,8,16,32$ for \ours and and $T_{\mathsf{AugPE}}=1,3,5,10$ for AugPE.
    }
    \label{fig:augpe}
\end{figure}

\mypar{Downstream Utility of Synthetic Text}
We evaluate the synthetic data generated from the Yelp dataset on a downstream classification task. We finetune a RoBERTa model~\cite{liu2019roberta} for sequence classification on a dataset of $500$ generated synthetic texts from the Yelp dataset into $50$ classes, obtained as all combinations of $5$ review score labels and $10$ business category labels. We note that since the synthetic data generation is conditioned on these classes, the class label information is also available in the synthetic data.

We evaluate the resulting classifier on the Yelp test set.
\Cref{tab:yelp_down} reports the 50-class top-1 and top-5 accuracy (resp. columns ``Accuracy'' and ``Top-5 Acc.''), 10-class accuracy over categories (column ``Cat. Acc.''), 5-class accuracy over score labels (column ``Score Acc.''), and the $L_1$ distance between the predicted and true numeric scores (column ``Score $L_1$''). 
We finetune the RoBERTa model~\cite{liu2019roberta} for $15$ epochs with an AdamW optimizer~\cite{loshchilov2019adamw} with a batch size of $20$, a learning rate of $2\times10^{-5}$, a weight decay of $0.01$ and $10\%$ warmup steps. 

The results presented in \Cref{tab:yelp_down} compare the downstream utility metrics for text generated by various methods with a fixed privacy budget of $\epsilon = 10$. \method outperforms all baseline methods by a large margin at a significantly smaller computation cost, even when measured in terms of this downstream task utility.
\begin{table}[tbhp]
\centering
\small

\adjustbox{max width=0.99\linewidth}{
\begin{tabular}{lcccccc}
    \toprule
    \textbf{Method} & \textbf{Batch Size $B$} & \textbf{Accuracy ($\uparrow$)} & \textbf{Top-5 Acc. ($\uparrow$)} & \textbf{Cat. Acc. ($\uparrow$)} & \textbf{Score Acc. ($\uparrow$)} & \textbf{Score $L_1$ ($\downarrow$)} \\
    \midrule
    \method & $7$ & \tabemph{}\textbf{32.98} & \tabemph{}\textbf{72.16} & \tabemph{}\textbf{64.90} & \tabemph{}\textbf{52.2} & \tabemph{}\textbf{0.652} \\
    \citet{amin2024private} &$127$ & 29.44 & 64.56 & 60.18 & 46.64 & 0.748 \\
    AdaPMixED \cite{flemings2024adapmixed} &$31$ & 7.44 & 34.72 & 56.82 & 15.44 & 1.858 \\
    \bottomrule
\end{tabular}
}\vspace{0.5em}
\caption{\small 
\textbf{Downstream utility of synthetic text} Multiple evaluation metrics task for a sequence classification task using a RoBERTa~\cite{liu2019roberta} model finetuned on synthetic sequences generated from the Yelp dataset with privacy budget $\eps = 10$. \method outperforms the baselines significantly.}
\label{tab:yelp_down}
\end{table} %

We note that measuring utility in downstream tasks is not a reliable indicator of overall text quality, coherence, or distributional similarity to the original dataset. Several prior works on DP text generation~\cite{tang2024privateicl, wu2023dpicl, amin2024private, flemings2024adapmixed} rely heavily on such metrics to evaluate generated text. However, it is possible for poor-quality synthetic text to ``hack" these metrics, especially in the case of sequence classification by relying on keywords related to the target category. This can be observed in the case of AdaPMixED, where sequences with an average length of $\leq 30$ tokens (see \Cref{fig:mainplot}) perform significantly better than random: $\approx7\%$ (vs. $2\%$ at random) for overall classification accuracy, $\approx56\%$ (vs. $10\%$ at random) for category prediction accuracy, and $\approx1.8$ (vs. $2.0$ at random) for average $L_1$ distance (lower is better) between predicted and true scores.
We also note that in the case of open-ended text generation, such notions of downstream utility are not directly applicable. Thus, we adopt a combination of metrics to evaluate DP synthetic text.

} \else {} \fi

\subsection{Understanding \method: Ablations, Scaling and Additional Experiments}

\begin{figure}[tbhp]
    \centering
    \includegraphics[trim={0 0 0 1.2cm}, clip, width=0.98\linewidth]{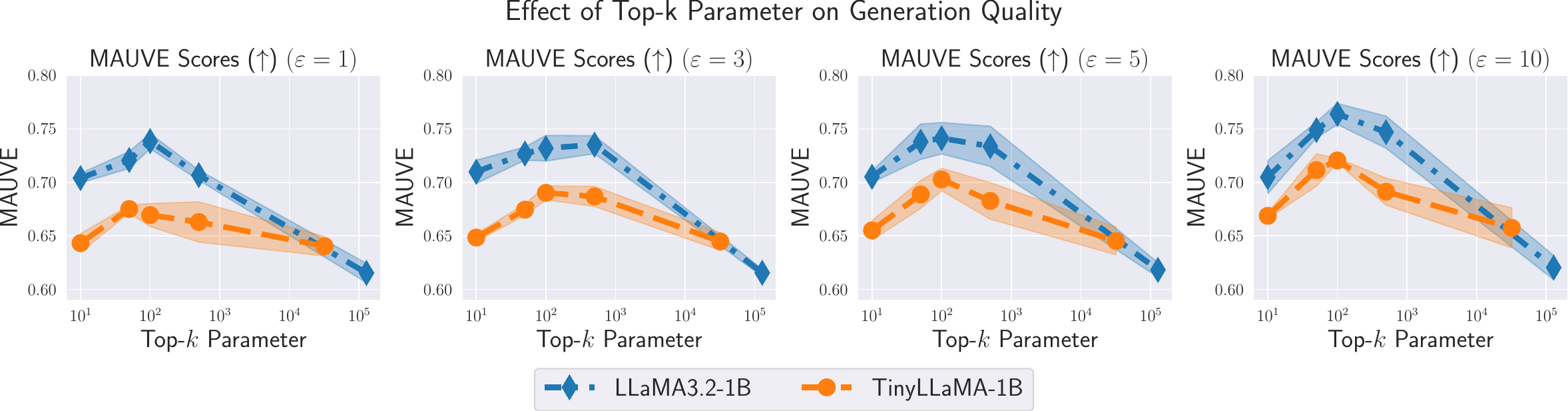}
    \caption{\small
    \textbf{Truncated decoding offers significant benefits}: \mauve scores (avg. over 3 runs with $95\%$ confidence intervals) for DP synthetic text generation of $1000$ samples from the MIMIC Dataset using TinyLLaMA-1B and LLaMA3.2-1B models at a temperature $\tau = 1.1$ and batch size $\bsz = 7$. %
    }
    \label{fig:topk}
\end{figure}

We examine the importance of the two key components---\txtOurClip and the \ourtopk sampling---of \method. \Cref{fig:clip} shows that the smaller active range of $\bfphi_i - \bfphi_\public$ allows us to use a smaller $\clipnorm$, when clipping using \txtOurClip, to maintain similar levels of distortion. Thus, we can use smaller batch sizes $\bsz$ to keep $\tau\propto\clipnorm/\bsz\approx 1$, translating into an $\approx8\times$ improvement (see \Cref{fig:mainplot}) in computation cost at the same privacy-utility levels. 
\Cref{fig:topk} shows us that \ourtopk decoding offers significant improvements (in \mauve scores) over using the full vocabulary ($72\%$ vs. $65\%$ at $\eps=10$ for TinyLlama). This distinction is more pronounced for the similar-sized Llama-3.2 1B model ($76\%$ vs. $62\%$ at $\eps=10$), with a $4\times$ larger vocabulary ($128K$ vs. $32K$). 

\mypar{Additional Experiments}
In addition to these, we also investigate the following:
\begin{itemize}[leftmargin=2em] %
    \item We present additional statistics of \ourtopk sampling from the expansion set ($\Vtopk^+\setminus\Vtopk$), in \S\ref{sec:topkstats} and show empirically that $\Vtopk^+$ is a tight superset of $\Vtopk$.
    
    \item We also highlight the scalability of \method to larger models by generating synthetic text using a LLaMA3 8B model at low batch sizes (incompatible with baselines) in \S\ref{sec:llama8bexpt}.

    \item \S\ref{sec:ablation-temp-clip-norm} gives additional ablations on the effect of the temperature, clip norm, and top-$k$ parameters. In particular, we show that setting the temperature $\tau \in [1.0,1.1]$ and the top-$k$ parameter $k\approx100$ yields near-optimal generations across settings.

    \item As noted previously, a comparison of the performance of \adapmixed for \textit{replace-by-null} and \textit{add-or-remove} adjacency is also presented in \S\ref{sec:expt-adj-adapmixed} to show that the quantitative results for either notion follows similar trends.
\end{itemize}

\ifnum \value{arxiv}= 1
{
} \else {} \fi
\ifnum \value{arxiv}= 0 
{
\mypar{Additional Experiments}
\S\ref{sec:addexp} also gives several additional experiments. First, we generate DP synthetic text from a larger Llama-3 8B model. To keep the wall-clock time tractable, we restrict the computation budget to $\bsz \le 7$. The baselines are intractable at this strict budget, meaning that \method is the only one that succeeds.
Second, we compare with the AugPE method~\cite{xie2024dpsyndata} that works in the more restrictive API-access setting. \method is able to achieve similar text quality with fewer inference calls by avoiding LLM-based paraphrasing. 
Third, we perform a detailed analysis on the effect of hyperparameters such as the clip norm and temperature.
} \else {} \fi
\subsection{Practical Recommendations}
\label{sec:practical}

We recommend tuning a temperature parameter $\tau \approx 1$ (generally in the range of $0.9$ to $1.2$), if permitted by the computation budget. A reasonable heuristic to avoid tuning is to directly set $\tau=1$, as it achieves competitive performance. Then, we set a clip norm $\clipnorm = \bsz \tau / \sqrt{2 \rho_\Seq /T}$, where the batch size $\bsz$ fixes the computation budget, $\rho_\Seq$-zCDP is the target privacy guarantee, and $T$ is the maximum token budget. Finally, we must tune a free parameter $k$ for \ourtopk sampling. We recommend tuning $k$ following best practices for non-private decoding (e.g., $k \in [50, 100]$ for open-ended settings and smaller for more directed settings such as reasoning).

\ifnum \value{arxiv}= 0 
{
\mypar{Summary}
\method can produce, for the first time, DP synthetic text in under $10\times$ the cost of non-private generation. It is an order of magnitude more computationally efficient than baselines at the same privacy-utility levels. \method is based on two technical innovations: an improved clipping function \txtOurClip and a truncated \ourtopk decoding approach. \method has the potential to unlock deep research and exciting inference-time scaling applications with rigorous DP guarantees---this is an interesting avenue for future work.} \else {} \fi

\section{Conclusion}
\label{sec:conc}
We introduce \method, a scalable framework for long-form text generation that provides rigorous differential privacy guarantees with respect to sensitive references. 
\method casts next-token sampling as an instance of the exponential mechanism over LLM logits and introduces two key innovations. First, it reduces privacy cost by isolating and clipping only the sensitive portion of the logits, measured relative to public logits. Second, it improves utility by sampling from a tight superset of the top-$k$ private tokens. Empirical results demonstrate an $8\times$ (or more) reduction in compute cost compared to state-of-the-art baselines, while preserving utility, across privacy levels. Notably, \method enables private long-form generation at less than $4\text{-}8\times$ the cost of non-private decoding, offering a practical and efficient solution for privacy-preserving language generation. While \method significantly reduces the computational complexity of generating DP text, it still has some limitations that we now discuss.

\mypar{Limitations}
The trust model of \method requires specific assumptions that could potentially be restrictive.
A direct implementation of \method requires a trusted central aggregator who has access to the raw sensitive references, whitebox access to the model weights (to query the next-token logits), and the necessary computational resources to run \method. Relaxing these requirements is an interesting avenue for future work.

Second, \method provably protects against privacy leakage through inference, but not in the training or fine-tuning stage. Leakage could still be possible through data contamination or backdoor adversarial attacks. Furthermore, even in the case of inference, we tackle example-level DP only. However, user-level privacy leakage is also possible~\cite[e.g.][]{kandpal2024user}, and extending \method to user-level DP is an interesting direction.

Finally, while \method drastically improves the computational cost of private text generation, the $8\times$ overhead over non-private generation could still be restrictive with large frontier models under extremely tight privacy budgets. Pushing the envelope of the privacy-utility-compute tradeoffs of text generation is a fruitful avenue for future research.

\mypar{Broader Impact}
The development of computationally efficient DP text generation can allow safe deployment of large language models (LLMs) in privacy-sensitive domains such as healthcare, law, and finance. By addressing the steep privacy-utility-compute tradeoffs that currently hinder practical use, our work could help unlock high-impact applications that require both strong privacy guarantees and high-quality long-form generation.

In doing so, this research contributes to sustaining public trust in AI systems. As LLMs become increasingly integrated into everyday decision-making processes, ensuring that user data is handled with dignity and protected by rigorous guarantees is ethically and, in sensitive domains like healthcare, legally warranted. We view computationally efficient and provably privacy-preserving techniques as key to building ethical and trustworthy AI.

However, the existence of formal privacy guarantees may lead to overconfidence or misuse, particularly if system integrators overlook practical limitations such as implementation errors, misconfigured parameters, or unrealistic threat models~\cite[e.g.,][]{tramer2022position,cummings2023advancing}. Differential privacy is a powerful but narrow tool: it provides provable anonymization but is only one component in an end-to-end system.
The practical use of such a system must be considered in a holistic manner, along with clear communication of failure modes.
\subsection*{Acknowledgments}
The authors thank Ambreesh Parthasarathy, Krithi Shailya, Danish Pruthi, and Balaraman Ravindran for fruitful discussions, \revision{as well as the authors of \citep{amin2024private} for constructive discussions}. VV is supported by the Post-Baccalaureate Fellowship at the Centre for Responsible AI (CeRAI), IIT Madras. KP is grateful for funding from ANRF's PM ECRG (ANRF/ECRG/2024/004321/PMS), Schmidt Sciences' AI2050 program, the startup compute grant of the Wadhwani School of Data Science \& AI (WSAI), IIT Madras, and faculty research awards. %
{\small
\bibliographystyle{unsrtnat}
\bibliography{references}
}

\newpage
\appendix
\addcontentsline{toc}{section}{Appendix} %
\part{Appendix} %
\parttoc %
\clearpage

\section{Notation}
\label{sec:notation}
We summarize our main notation in \Cref{tab:notation}.

\begin{table}[h]
\centering
\renewcommand{\arraystretch}{1.4}
\small
\begin{tabular}{>{\raggedright\arraybackslash}p{0.15\linewidth} >{\raggedright\arraybackslash}p{0.77\linewidth}}
\toprule
\textbf{Symbol} & \textbf{Description} \\
\midrule
$V$ & LLM vocabulary, set of all allowed tokens\\
$V^*$ & set of all sequences of tokens (of any length) \\
$\bfxlt$ &  $=(x_1, \ldots, x_{t-1}) \in V^{t-1}$, all previously generated tokens before time-step $t$-token \\
$\phi$ &  LLM logit function, where $\phi(x_t\mid\bfxlt) \in \R$ denotes the logit score of the next token $x_t \in V$ given the prefix $\bfxlt$
\\
$\phi(\,\cdot \, | \bfxlt)$ & the vector of next-token logits in $\R^{|V|}$ indexed by the token $y \in V$ \\
\midrule
 $\bfq$ &  query or prompt for generation (not privacy-sensitive) \\
 $\bfr$ &  reference text (privacy sensitive)\\
 $\bfR$ &  batch of sensitive references $\bfR = \{\bfr_1, \dots, \bfr_\bsz\}$\\
 $\calD$ & dataset of sensitive references\\
$N$ &  $=|\calD|$, dataset size\\
$n$ & number of generations\\
\midrule
$\bfphi_\public$ & 
Logit vector $\phi(\,\cdot\,| \, \bfq, \bfxlt) \in \R^{|V|}$ generated without using any sensitive references from $\bfR$ \\
$\bfphi_i$ & Logit vector $\phi(\,\cdot\,| \, \bfq, \bfr_i, \bfxlt) \in \R^{|V|}$ generated using sensitive reference $\bfr_i$  \\
$\clip_\clipnorm(\bfphi)$ & 
clips its input vector $\bfphi$ component-wise to have $\ell_\infty$ norm at most $\clipnorm$ \\
$\OurClip_\clipnorm(\bfphi, \bfphi_\public)$ & 
$=\bfphi_\public + \clip_C(\bfphi - \bfphi_\public)$, 
our proposed clipping function that clips the difference with public logits
\\
\midrule
$\Vtopk$ & set of $k$ tokens with highest public logits $\bfphi_\public$ \\
$\Vtopk^+$ & expanded top-$k$ set of tokens; see \S\ref{sec:method} \\
\midrule
$\bsz$ & $=|\bfR|$, batch size; controls the computational cost of \method \\ 
$\clipnorm$ & clipping norm\\
$\tau$ & sampling temperature\\
$T $& maximum number of tokens allowed per generation\\
\bottomrule
\end{tabular}\vspace{0.5em}
\caption{\small A summary of all important notation used throughout the paper.}
\label{tab:notation}
\end{table} 
\section{Adjacency Notions and DP Guarantees for LLM Inference}
\label{sec:review}

The goal of this section is to clarify the technicalities of the provided DP guarantees for LLM inference and private text generation. 
In particular, we describe the adjacency notions used (implicitly or explicitly) in prior work and how they differ from the ones used in this work. We also describe how the advantage we give to the baselines in our experimental comparisons of \S\ref{sec:expt}.

A DP algorithm must satisfy the guarantee that \underline{changing the input dataset} by one \underline{unit of data} leads to a \underline{nearly indistinguishable} output. To fully specify this guarantee, we need to mention each of the underlined quantities~\cite{ponomareva2023dp}.
    
\mypar{Adjacency notion} 
It describes what it means to change the input dataset (e.g., add/remove/replace some data). We explain this in detail in \S\ref{sec:adj-review}. 

\mypar{Privacy unit}
The unit of data that is protected under the DP definition is known as the privacy unit. This quantity determines whether we protect individual examples (or documents; known as example-level DP), or all examples contributed by a single ``user'' (known as user-level DP). In this work, we only consider example-level DP.

\mypar{Indistinguishability Notion} 
This refers to the specific way we quantify how similar the output distributions on two adjacent datasets are. We use zero-Concentrated DP (zCDP) throughout: 
    \begin{definition}[Zero-concentrated DP]
    \label{def:zcdp}
    An algorithm $\calA$ satisfies $\rho$-zCDP if
    \begin{align} \label{eq:zcdp}
        D_\alpha\big(\calA(\bfq, \bfR) \Vert \calA(\bfq, \bfR')\big) \le \rho\, \alpha \quad \text{for all} \,\, \alpha > 1 \,\, \text{ and all adjacent } \bfR, \bfR' \,,
    \end{align}
    where $D_\alpha$ denotes the $\alpha$-Rényi divergence between probability distributions
    \[
    D_\alpha(P \Vert Q) = 
    \frac{1}{\alpha - 1} \log\left( 
    \mathbb{E}_{X \sim Q} \left[ \big(P(X) / Q(X)\big)^{\alpha} \right] 
    \right) \,.
    \]
    \end{definition}
    Conversion formulae between various notions of indistinguishability are known \cite[e.g.][Fig. 2]{zhu2022optimal}. 
    For example, a $\rho$-zCDP guarantee implies $(\eps, \delta)$-DP with~\cite[Thm. 21]{balle2020hypothesis}
    \[
        \eps \le \inf_{\alpha > 1} \left\{
            \alpha\rho  + \frac{1}{\alpha-1} \log\frac{1}{\alpha\delta} + \log(1 - \alpha^{-1})
        \right\} \le \rho + 2 \sqrt{\rho \log(1/\delta)} \,.
    \]

\subsection{Adjacency Notions in LLM Inference}
\label{sec:adj-review}

We explain the different adjacency notions that might arise when the DP definition is instantiated in the context of LLM inference.
Recall that we wish to generate text in response to a query $\bfq$ using sensitive references $\bfR \subset V^*$. 

We define various notions of adjacency between two sets of sensitive references $\bfR, \bfR'$. 
Arguably, the most commonly used one in the literature is the add-remove adjacency, where the adjacent dataset is obtained by adding or removing one datapoint.

\begin{definition}[Add-or-Remove Adjacency]
\label{def:adj-addremove}
\label{def:add-remove}
    Two sets $\bfR, \bfR' \subset V^*$ are said to be add-or-remove adjacent if $\bfR' = \bfR \cup \{\bfr\}$ or $\bfR = \bfR' \cup \{\bfr\}$ for some $\bfr \in V^*$.
\end{definition}

This notion of adjacency is semantically meaningful and maps to intuitive expectations of a privacy guarantee. Unfortunately, it can be technically challenging to develop rigorous algorithms in this setting. In particular, we cannot assume the sensitive dataset size $|\bfR|$ to be fixed, as adjacent $\bfR, \bfR'$ are of different sizes. This makes the dataset size a privacy-sensitive quantity that needs to be protected with privacy mechanisms; see \cite[\S E]{kamath2023bias}, \cite[\S2.1.1]{ponomareva2023dp}, and \cite{kulesza2024mean} for further discussions.

For example, the sensitivity of the mean operation $(s_1, \ldots, s_n) \mapsto (1/n) \sum_{i=1}^n s_i$ is a function of the dataset size $n$, meaning that we cannot directly use it in the add-or-remove paradigm.
A common workaround (used, for example, in private learning with DP-SGD) is to privatize the sum query $(s_1, \ldots, s_n) \mapsto \sum_{i=1}^n s_i$ and post-process it to obtain an estimator of the mean. Unfortunately, this is not possible in the setting of \S\ref{sec:back}: the logit vector $\bar\bfphi$ in \Cref{eq:expo-mech} is actually obtained by averaging, and we cannot directly use the above trick here.

A common workaround to avoid these technical challenges with the add-or-remove adjacency is to use the so-called replace-by-null adjacency, where the removal of an element is simulated by swapping it with a special (problem-dependent) null element~\cite[e.g.][Sec. 6.2]{steinke2022composition}.
We instantiate the replace-by-null adjacency in the context of LLM inference in this work by choosing the null element as the empty string:

\begin{definition}[Replace-by-Null Adjacency]
\label{def:replace-by-null}
    Two sets $\bfR = \{\bfr_1, \ldots, \bfr_n\}$ and $\bfR' = \{\bfr_1', \ldots, \bfr_n'\}$ of equal size $n$ are said to be adjacent in the replace-by-null model of adjacency if there exists an index $j \in [n]$ such that $\bfr_i = \bfr_i'$ for all $i \neq j$ and one of $\bfr_j, \bfr_j'$ equal the empty string $\emptyset$.
\end{definition}

Another common way to instantiate the null element leads to the so-called zero-out adjacency:

\begin{definition}[Zero-Out Adjacency]
\label{def:zero-out}
    Let $\perp$ denote a special null element such that the logits $\phi(y \, | \, \bfq, \bfr=\perp, \bfxlt) = 0$ are always equal to zero for any token $y \in V$, query $\bfq \in V^*$, and prefix $\bfxlt$.\footnote{
        To be fully formal, the space of all possible reference sequences $\bfr$ should be enlarged to $V^* \cup \{\perp\}$ to accommodate the null element $\perp$.
    }
        Two sets $\bfR = \{\bfr_1, \ldots, \bfr_n\}$ and $\bfR' = \{\bfr_1', \ldots, \bfr_n'\}$ of equal size $n$ are said to be adjacent in the zero-out notion of adjacency if there exists an index $j \in [n]$ such that $\bfr_i = \bfr_i'$ for all $i \neq j$ and one of $\bfr_j, \bfr_j'$ equal this null element $\perp$.
\end{definition}

Note that a zero logit vector implies next-token probabilities that are uniform over the vocabulary. As argued in \S\ref{sec:method}, this is rather unrealistic as it could produce incomplete word pieces, while $\bfr=\emptyset$ as the empty string still gives reasonable next-token predictions using the pretrained model's language and world knowledge. Further, the empty string of \Cref{def:replace-by-null} is a semantically more meaningful null element than an artificially created $\perp$ element from \Cref{def:zero-out}. Thus, we primarily adopt the replace-by-null adjacency in this work.

\begin{remark}
    The replace-one (or swap) notion of adjacency is also commonly studied. Here, one element of $\bfR$ is replaced by any other arbitrary element to get $\bfR'$. It provides an $\approx 2 \times$ stronger privacy guarantee at the same $\eps$ (usually at the cost of a larger noise multiplier); see \cite{ponomareva2023dp} for a discussion.
\end{remark}

\subsection{Sensitivity for the Exponential Mechanism}
\label{sec:sens-exp-mech}

Throughout this work, we consider the following notion of sensitivity in the context of the exponential mechanism. Note that the sensitivity depends on the exact notion of adjacency considered (\S\ref{sec:adj-review}).

\begin{definition}[Sensitivity]
\label{def:sens}
    The $\ell_\infty$-sensitivity, referred throughout as simply the sensitivity, of a function $f: S \to \R^d$ (for any set $S$ and any output dimension $d$)
    under an adjacency relation ``$\simeq$''
    is defined as:
    \[
        \sens(f) := \max_{\bfR \simeq \bfR'} \norm{f(\bfR) - f(\bfR')}_\infty = \max_{\bfR \simeq \bfR'} \max_{i=1,\ldots, d} \big| 
            [f(\bfR)]_i - [f(\bfR')]_i
        \big| \,.
    \]
\end{definition}

The exponential mechanism with a target privacy level of $\rho$-zCDP selects an item $i \in \{1, \ldots, d\}$ with respective scores $[f(\bfR)]_i$ with probability~\cite{expmechdporg2021rogers}
\begin{equation} \label{eq:a:exp-mech}
    \frac{\exp\left( \frac{\sqrt{2\rho}}{\sens(f)} [f(\bfR)]_i \right)}{
    \sum_{j=1}^d \exp\left( \frac{\sqrt{2\rho}}{\sens(f)} [f(\bfR)]_j \right)
    } \,.
\end{equation}

Recall that we wish to generate text in response to a query $\bfq$ using sensitive references $\bfR \subset V^*$. 
In this context, \Cref{def:sens} is instantiated with
$f(\bfR) = \phi(\, \cdot \, | \, \bfq, \bfR, \bfxlt) \in \R^{|V|}$, where we index an element in $\R^{|V|}$ with a corresponding token $y \in V$.

\subsection{Technical Details of DP in Baselines}
\label{sec:adjacency-prior-work}

In this subsection, we clarify the notions of adjacency used in prior works.

\subsubsection{Adjacency Notion of \citet{amin2024private} and Theory-Implementation Gap}

\revision{
The theoretical analysis of \citet{amin2024private} (in particular, their Theorem 1) operates under the add-or-remove notion of adjacency (\Cref{def:add-remove}). As discussed in \S\ref{sec:adj-review}, aggregating the clipped logits using the mean operation requires particular care as the size of adjacent datasets can be different. \citet{amin2024private} address this issue with two changes:
\begin{enumerate}[label=(\alph*), nosep]
    \item they use batches of variable sizes (e.g. via hashing datapoints to batches); and
    \item they use the re-scaled sum query $(s_1, \ldots, s_B) \mapsto (1/G) \sum_{i=1}^B s_i$, where $G$ is a guess of the (variable and possibly random) batch size $B$. 
\end{enumerate} 
To ensure correctness of the privacy guarantee, the scaling factor $G$ must be fixed a priori and in a data-independent manner.
The accuracy of the guess $G$ (relative to the true batch size $B$) does not impact the correctness of the privacy guarantee, though it can significantly affect the utility.
}

\revision{However, there is a mismatch between the analysis and implementation of \citet{amin2024private}. In their implementation, they form batches of a fixed size $B$ by sampling without replacement. Further, they take the guess of the batch size as $G=B$. Since this depends on the dataset (specifically, via its size $B$), the experimental setting is not covered by their stated DP guarantee (which requires $G$ to be fixed a priori) under add-or-remove adjacency.
}

\revision{
We circumvent this issue by choosing an appropriate adjacency notion: we directly analyze the fixed-batch-size version of \citet{amin2024private} (i.e. the one that is actually implemented) in the  zero-out (\Cref{def:zero-out}) or replace-by-null (\Cref{def:replace-by-null}) notions of adjacency instead of the add-or-remove adjacency. As discussed in \S\ref{sec:adj-review}, this small change allows us to treat the batch size $B$ as public (see also \cite[\S2.1.1]{ponomareva2023dp}),  and lets us take $G=B$ without any privacy leakage.
}

\mypar{Sensitivity Bounds}
We have the following bound on the sensitivity of the clipped-and-aggregated logits $\Bar{\bfphi}$:

\begin{property} \label{prop:amin-sens}
    Let the dataset set $\bsz = |\bfR|$ be fixed.
    Then, the sensitivity of the operation,%
    \footnote{
        The clipping function employed by \citet{amin2024private} incorporates a recentering step, as logits are invariant to additive shifts. We omit this detail in our exposition as it does not change the mathematical properties.
    }
    \begin{align} \label{eq:amin-avg}
        f(\bfR) := \frac{1}{|\bfR|} \sum_{i=1}^{|\bfR|} \clip_\clipnorm\big(\phi( \,\cdot \, | \, \bfq, \bfR, \bfxlt) \big)%
    \end{align}
    is given by:
    \[
    \sens(f) = \begin{cases}
        \frac{\clipnorm}{\bsz}\,, & \text{ under zero-out adjacency}\,, \\
        \frac{2\clipnorm}{\bsz}\,, & \text{ under replace-by-null adjacency}  \,.
    \end{cases}
    \]
\end{property}
\begin{proof}
We start with zero-out adjacency. Let $\bfR \simeq \bfR'$ differ in their first element where $\bfr_1 ' =\perp$. Then, we have that 
\begin{align*}
    \norm{f(\bfR) - f(\bfR')}_\infty
    &=  \frac{1}{\bsz} \norm{\clip_\clipnorm\big(\phi( \,\cdot \, | \, \bfq, \bfr_1, \bfxlt) \big)
    - \clip_\clipnorm\big(\phi( \,\cdot \, | \, \bfq, \perp, \bfxlt) \big)
    }_\infty \\
    &= \frac{1}{\bsz} \norm{\clip_\clipnorm\big(\phi( \,\cdot \, | \, \bfq, \bfr_1, \bfxlt) \big)}_\infty \\
    &\le \frac{\clipnorm}{\bsz} \,. 
\end{align*}
Similarly, for replace-by-null adjacency, let $\bfR \simeq \bfR'$ differ in their first element where $\bfr_1 ' = \emptyset$ is the empty string.
\begin{align*}
    \norm{f(\bfR) - f(\bfR')}_\infty
    &=  \frac{1}{\bsz} \norm{\clip_\clipnorm\big(\phi( \,\cdot \, | \, \bfq, \bfr_1, \bfxlt) \big)
    - \clip_\clipnorm\big(\phi( \,\cdot \, | \, \bfq, \emptyset, \bfxlt) \big)
    }_\infty \\
    &\le \frac{1}{\bsz}  \left( \norm{\clip_\clipnorm\big(\phi( \,\cdot \, | \, \bfq, \bfr_1, \bfxlt) \big)}_\infty
    +  \norm{\clip_\clipnorm\big(\phi( \,\cdot \, | \, \bfq, \emptyset, \bfxlt) \big)}_\infty
    \right) \\
    &\le \frac{2\clipnorm}{\bsz} \,,
\end{align*}
where we used the triangle inequality on the $\ell_\infty$ norm for the first inequality.
\end{proof}

\mypar{Convention in our Experiments}
While we use the more natural replace-by-null adjacency in our experiments, we (incorrectly) use the sensitivity bound obtained for the zero-out adjacency from \Cref{prop:amin-sens}. This gives \citet{amin2024private} an advantage of a factor of 2 in the sensitivity. Yet, we find that \method outperforms their method in privacy-utility-compute tradeoffs.

\subsubsection{\adapmixed: Adjacency and Data-Dependent DP Guarantees}

\adapmixed~\cite{flemings2024adapmixed} uses the add-remove notion of adjacency, but it gives a data-dependent privacy guarantee, while we give a \emph{data-independent} privacy guarantee under the replace-by-null adjacency.

Strictly speaking, the privacy guarantees across different adjacency notions cannot be directly compared.
Thus, we modify \adapmixed to use the \textit{replace-by-null} adjacency and report these results in the main paper.

We can also cautiously compare the add-or-remove adjacency of \adapmixed and the replace-by-null adjacency of \method. This is because the semantics of these two notions of adjacency are expected to be similar~\cite{steinke2022composition,ponomareva2023dp}. Comparing (\adapmixed, add-or-remove) and (\adapmixed, replace-by-null) reveals no significant difference in the qualitative trends; see \S\ref{sec:expt-adj-adapmixed} for details. 

Furthermore, \adapmixed gives \emph{data-dependent} privacy guarantees. For example, the zCDP notion (\Cref{def:zcdp}) can be phrased in the data-dependent setting under the \emph{replace-by-null} adjacency as follows. 
    An algorithm $\calA(\bfq, \bfR)$ satisfies $\rho(\bfR)$-zCDP if 
\[
     D_\alpha^{\mathrm{sym}}\big(\calA(\bfq, \bfR), \calA(\bfq, \bfR')\big) 
     \le \rho(\bfR)\, \alpha
\]
for all $\alpha > 1$ and all datasets $\bfR'$  that can be obtained from the given dataset $\bfR$ by replacing any one of its elements with the empty string $\emptyset$; here
\[
    D_\alpha^{\mathrm{sym}}(P, Q) := \max\left\{
     D_\alpha(P \Vert Q), D_\alpha(Q \Vert P)
     \right\}
\]
is symmetrized R\'enyi divergence.
This guarantee holds only for the given sensitive dataset $\bfR$.
In contrast, \Cref{def:zcdp} must hold for all $\bfR \simeq \bfR'$.

Thus, our data-independent guarantees are strictly more general than the data-dependent guarantees given by \adapmixed. 
The latter are, in general, privacy-sensitive and need to be sanitized before release. In contrast, our privacy guarantees are worst-case and hold independent of the dataset.

Finally, we have the advantage of straightforward privacy accounting. Given a desired privacy level, compute budget, and maximum generation length (in terms of number of tokens), we give a recipe in \S\ref{sec:method} to tune the other hyperparameters of our method. In contrast, data-dependent DP guarantees can only be given by grid search on the actual sensitive dataset $\bfR$. In practice, one must account for the privacy cost of such hyperparameter tuning~\cite{papernot2022hyperparameter}, making it potentially more costly in terms of both computation budget and privacy budget.

\subsubsection{Adjacency Notion of AugPE}
\label{sec:adj-augpe}

AugPE~\cite{xie2024dpsyndata} is a synthetic text generation algorithm in the API access model. Note that we focus on the less restrictive whitebox model, where per-token model logits are assumed to be available.

AugPE assumes the add-remove notion of adjacency (\Cref{def:add-remove}). Their key technical step is to construct a nearest neighbor histogram. Given a set $S = (\bfx_1, \ldots, \bfx_m)$ of candidate synthetic generations and a set $\calD = \{\bfr_1, \ldots, \bfr_N\}$ of real data points, the idea is to compute the nearest neighbor histogram
\begin{align} \label{eq:hiso-augpe}
    H(\bfx_i) = \sum_{j=1}^N \mathbb{I}(\mathsf{NN}(\bfr_j, S) = \bfx_i)
    \quad \text{where}\quad
    \mathsf{NN}(\bfr_j, S) = \argmin_{\bfx \in S} \dist(\bfx, \bfr_j) 
\end{align}
is the nearest neighbor of $\bfr_j$ from the set $S$ according to a given distance metric $\dist(\cdot \, , \, \cdot)$ (with ties resolved arbitrarily) and $\mathbb{I}(\cdot)$ is the indicator function which takes the value 1 when its argument is true and zero otherwise. \citet{xie2024dpsyndata} take $\dist$ as the Euclidean distance in some embedding space. This histogram $H(\cdot) \in \R^m$ is then privatized using the Gaussian mechanism.
Its $\ell_2$-sensitivity is as follows (with an elementary proof provided for completeness).

\begin{property} \label{prop:augpe-sens}
    For any fixed set of synthetic text $S = \{\bfr_1, \ldots, \bfr_m\}$, the $\ell_2$-sensitivity of the map $\{\bfr_1, \ldots, \bfr_N\} \mapsto (H(\bfx_1), \ldots, H(\bfx_m))$ is:
    \begin{itemize}
        \item $1$ under the add-or-remove adjacency;
        \item $\sqrt{2}$ under the replace-by-null or zero-out adjacency.
    \end{itemize}
\end{property}
\begin{proof}
Under the add-or-remove adjacency, moving from a dataset $\calD$ to a neighboring $\calD'$ will change at most one entry of the histogram $H(\cdot)$ by at most 1, leading to an $\ell_2$-sensitivity of at most 1. 
However, under the replace-by-null or zero-out adjacency notions, at most \emph{two} entries of the histogram $H(\cdot)$ will change by at most 1. This leads to an $\ell_2$ sensitivity of $\sqrt{2}$.
\end{proof}

\mypar{Convention in our Experiments}
Similar to the case of \citet{amin2024private}, we give AugPE an advantage by incorrectly using its best sensitivity from \Cref{prop:augpe-sens}. 
In particular, we use the more natural replace-by-null adjacency in our experiments
but we (incorrectly) use the AugPE's sensitivity bound obtained for the add-or-remove adjacency from \Cref{prop:augpe-sens}. This gives AugPE an advantage of a factor of $\sqrt{2}$ in the sensitivity. Yet, we find that \method far outperforms AugPE in privacy-utility-compute tradeoffs.

\section{Full Proofs and Additional Details of \method}
\label{sec:oursadd}

Below, we give the proofs of \Cref{prop:ourclip-sensitivity} and \Cref{thm:accounting}. We also give the sensitivity of  \txtOurClip under different notions of adjacency below.

\begin{proof}[Proof of \Cref{prop:ourclip-sensitivity}]
Let the dataset size $\bsz = |\bfR|$ be fixed. For replace-by-null adjacency, consider the adjacent dataset
$\bfR=\{\bfr_1, \ldots, \bfr_\bsz\}$ and $\bfR'=\{\bfr_1', \ldots, \bfr_\bsz'\}$ that differ in their first element where $\bfr_1'=\emptyset$ is the empty string. We need to bound the sensitivity of the map 
\begin{align}
\label{eq:our-clip}
f_{\OurClip}(\bfr_1, \ldots, \bfr_\bsz) = \frac{1}{\bsz} \sum_{i=1}^\bsz \OurClip_\clipnorm\big( \phi(\,\cdot \, | \, \bfq, \bfr_i, \bfxlt), \phi(\,\cdot\, |\, \bfq, \bfxlt)\big) \,.
\end{align}
Below, we use shorthand $\bfphi_\public := \phi(\,\cdot\, |\, \bfq, \bfxlt)$ and $\bfphi_i := \phi(\,\cdot \, | \, \bfq, \bfr_i, \bfxlt)$ are public and private logits respectively for $\bfR$ and $\bfphi_i' := \phi(\,\cdot \, | \, \bfq, \bfr_i', \bfxlt)$ are private logits for $\bfR'$.
Noting that (a) $\bfphi_i = \bfphi_i'$ for $i \neq 1$ by assumption, and (b) $\bfphi_1' = \bfphi_\public$ under the replace-by-null adjacency,
we upper bound the sensitivity as
\begin{align*}
    \sens(f_{\OurClip})
    &=
    \norm{f_{\OurClip}(\bfR) - f_{\OurClip}(\bfR')}_\infty \\
    &= \norm{\frac{1}{\bsz}  \sum_{i=1}^\bsz \clip_\clipnorm\big(\bfphi_i - \bfphi_\public\big) - \frac{1}{\bsz}  \sum_{i=1}^\bsz \clip_\clipnorm\big(\bfphi_i' - \bfphi_\public\big)}_\infty\\
    &\stackrel{(a)}{=} \frac{1}{\bsz} \norm{\clip_\clipnorm\big(\bfphi_1 - \bfphi_\public\big)- \clip_\clipnorm\big(\bfphi_1' - \bfphi_\public\big)}_\infty\\
    &\stackrel{(b)}{=}\frac{1}{\bsz} \norm{\clip_\clipnorm\big(\bfphi_1 - \bfphi_\public\big)}_\infty\\
    &\le\frac{C}{\bsz}\,,
\end{align*}
where the last line follows because $\ell_\infty$ norm of logits clipped using $\clip_C(\cdot)$ is bounded by $C$.
\end{proof}

\begin{proof}[Proof of \Cref{thm:accounting}]

\Cref{alg:main} consists of the application of the map $f_{\OurClip}$, defined previously in \cref{eq:our-clip}, followed by the application of the exponential mechanism with a temperature of $\tau$ (in the form of softmax sampling). Note that the \ourtopk selection step in \Cref{alg:main} is agnostic to the values of the private logits $\bfphi_1, \dots, \bfphi_\bsz$ and therefore does not incur any additional privacy cost.

Consider the generation of one token $x_t$ using \Cref{alg:main} given arbitrary $\bfxlt$. Using the definition of the exponential mechanism in \S\ref{sec:sens-exp-mech} (cf. \eqref{eq:a:exp-mech}), the $\rho$-zCDP guarantees for generating any one token $x_t$ is given by
\begin{equation}
    \rho_{\mathsf{tok}} = \frac{1}{2}\cdot\left(\frac{\sens(f_\OurClip)}{\tau}\right)^2 = \frac{1}{2}\cdot\left(\frac{\clipnorm}{\bsz\tau}\right)^2 = \frac{\clipnorm^2}{2\bsz^2\tau^2}\,,
    \label{eqn:rho-token}
\end{equation}
where $\sens(f_\OurClip)$ is obtained from \cref{prop:ourclip-sensitivity}.

To generate sequences which may contain at most $T$ tokens $x_1, \ldots, x_T$, the zCDP guarantees obtained for single-token generation in \Cref{eqn:rho-token} may be adaptively composed sequentially over all $T$ tokens. Then, the entire sequence $\bfx = (x_1, \ldots, x_{T'})$ with $T' \le T$ satisfies $\rho_{\mathsf{seq}}$-zCDP where (cf. \Cref{lem:adaptive-comp-zcdp}\ref{part:sequential})
\[
    \rho_{\mathsf{seq}} \le T\cdot\rho_{\mathsf{tok}} = \frac{T\clipnorm^2}{2\bsz^2\tau^2}\,.
\]

This completes the proof of \Cref{thm:accounting}.  
\end{proof}

Note that this guarantee holds for the generation of multiple sequences using disjoint batches of references from the full sensitive reference dataset by parallel composition of the $\rho$-zCDP guarantees.

\begin{corollary} \label{coro:parallel-batches}
    Consider partitioning a dataset $\calD = \{\bfr_1, \ldots, \bfr_N\}$ into $n$ disjoint batches $\bfR_1, \ldots, \bfR_n$ of size $\bsz = N/n$ (assumed an integer), i.e., $\bfR_i\cap\bfR_j = \emptyset \,\forall\, i\neq j$. Suppose further that this partition is data-independent.\footnote{
    That is, a partitioning is performed using a process that is agnostic to the content of the data. This ensures that the partition is not influenced by any patterns, information, or outcomes within the dataset. This precludes partitioning based on clustering, for instance.
    }
    Suppose we generate texts $\bfx_1, \ldots, \bfx_n$ using \Cref{alg:main} such that $\bfx_i$ only sees sensitive examples $\bfR_i$ and satisfies $\rho_{\mathsf{seq}}$-zCDP individually. Then, the map $\calD \mapsto (\bfx_1, \ldots, \bfx_n)$ also satisfies $\rho_{\mathsf{seq}}$-zCDP.
\end{corollary}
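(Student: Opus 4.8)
The plan is to recognize this as a parallel-composition statement for zCDP and to prove it directly through the Rényi-divergence definition, using the fact that Rényi divergence tensorizes over independent components. First I would fix two datasets $\calD \simeq \calD'$ that are adjacent under the replace-by-null model, so they have equal size $N$ and differ in exactly one position, say index $m$, with one of the two corresponding references being the empty string. Because the partition into $\bfR_1, \ldots, \bfR_n$ is produced by a \emph{data-independent} rule, the position $m$ belongs to exactly one batch, say batch $j$, and the induced partition of $\calD'$ uses the same index sets; hence $\bfR_i' = \bfR_i$ for all $i \neq j$, while $\bfR_j'$ is obtained from $\bfR_j$ by replacing a single element with the empty string. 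In particular $\bfR_j \simeq \bfR_j'$ under replace-by-null and both have size $\bsz$.

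Next I would exploit independence across generations. Each $\bfx_i$ is produced by \Cref{alg:main} using only the batch $\bfR_i$ (together with the fixed query $\bfq$) and fresh internal randomness that is independent of the other generations, so the joint output distribution factorizes as a product measure, $\calA(\bfq, \calD) = \bigotimes_{i=1}^n \calA(\bfq, \bfR_i)$, and likewise $\calA(\bfq, \calD') = \bigotimes_{i=1}^n \calA(\bfq, \bfR_i')$. By additivity of the $\alpha$-Rényi divergence over product distributions,
\[
D_\alpha\big(\calA(\bfq, \calD) \,\Vert\, \calA(\bfq, \calD')\big) = \sum_{i=1}^n D_\alpha\big(\calA(\bfq, \bfR_i) \,\Vert\, \calA(\bfq, \bfR_i')\big).
\]
For every $i \neq j$ we have $\bfR_i = \bfR_i'$, so that summand vanishes; for $i = j$, the per-generation guarantee of \Cref{thm:accounting}, applied to a single batch of size $\bsz$ with $\bfR_j \simeq \bfR_j'$, yields $D_\alpha(\calA(\bfq, \bfR_j) \,\Vert\, \calA(\bfq, \bfR_j')) \le \rho_{\Seq}\,\alpha$ for all $\alpha > 1$. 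Summing gives $D_\alpha(\calA(\bfq, \calD) \,\Vert\, \calA(\bfq, \calD')) \le \rho_{\Seq}\,\alpha$, which is exactly $\rho_{\Seq}$-zCDP for the map $\calD \mapsto (\bfx_1, \ldots, \bfx_n)$.

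The one step I would be most careful about is the reduction in the first paragraph: verifying that a single replace-by-null edit of $\calD$ induces a single replace-by-null edit of exactly one batch, leaving every other batch literally unchanged. This is precisely where the data-independence of the partition is essential — if batch membership depended on the data (e.g., via clustering of references), a single edit could reshuffle several batches and the decomposition above would fail. Everything else is routine: the product-measure factorization follows from using fresh independent randomness per generation, and the divergence tensorization is standard. Since the bound is the worst case over all adjacent pairs $\calD \simeq \calD'$, no averaging over the dataset is involved, so the resulting guarantee is the usual (ex-ante) $\rho_{\Seq}$-zCDP.
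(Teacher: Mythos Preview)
Your proof is correct and follows essentially the same approach as the paper: both rely on parallel composition of zCDP over the data-independent partition, combined with the per-batch guarantee from \Cref{thm:accounting}. The only difference is presentational: the paper invokes the parallel-composition part of \Cref{lem:adaptive-comp-zcdp} as a black box, whereas you unpack that lemma directly via the tensorization of R\'enyi divergence over product measures. Your explicit treatment of why a single replace-by-null edit of $\calD$ propagates to exactly one batch (and the role of data-independence there) is a useful clarification that the paper leaves implicit.
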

\begin{proof}
    This follows from \Cref{thm:accounting}, and from the parallel composition for $\rho$-zCDP guarantees in \Cref{lem:adaptive-comp-zcdp}. 
\end{proof}

Note that the privacy guarantee in \Cref{coro:parallel-batches} does not depend on the number of synthetic texts generated. The number of possible generations using \ours for a fixed dataset size $N$ is thus limited only by the number of possible partitions $n \le \lfloor N/\bsz\rfloor$.

\subsection{Adapting \ours for other Adjacency Notions}
The notion of adjacency used is central to the interpretation of the obtained privacy guarantees, as well as to the sensitivity analysis. We use \textit{replace-by-null} adjacency to derive the privacy guarantees for \ours in \Cref{prop:ourclip-sensitivity} and \Cref{thm:accounting}; refer to \S\ref{sec:review} for a full discussion of other adjacency notions and on how they have been used in previous work. 

We also state the sensitivity of $f_\OurClip$ under \textit{zero-out} adjacency in \Cref{prop:our-sens-zeroout}.
\begin{property} \label{prop:our-sens-zeroout}
    Let the dataset set $\bsz = |\bfR|$ be fixed.
    Then the sensitivity of the map $f_{\OurClip}:(\bfr_1, \ldots, \bfr_\bsz) \mapsto \frac{1}{\bsz} \sum_{i=1}^\bsz \OurClip_\clipnorm(\bfphi_i, \bfphi_\public)$ defined in \Cref{prop:ourclip-sensitivity}, under zero-out adjacency, is
    \[
    \sens(f_\OurClip) = \frac{2\clipnorm}{\bsz}\,.
    \]
\end{property}

\begin{proof}
For zero-out adjacency, consider adjacent datasets
$\bfR=\{\bfr_1, \ldots, \bfr_\bsz\}$ and $\bfR'=\{\bfr_1', \ldots, \bfr_\bsz'\}$ that differ in their first element where $\bfr_1'=\perp$ is the empty string. We need to bound the sensitivity of the map $f_{\OurClip}$ from \cref{eq:our-clip}.
As in the proof of \Cref{prop:ourclip-sensitivity}, we use shorthand $\bfphi_\public := \phi(\,\cdot\, |\, \bfq, \bfxlt)$ and $\bfphi_i := \phi(\,\cdot \, | \, \bfq, \bfr_i, \bfxlt)$ are public and private logits respectively for $\bfR$ and $\bfphi_i' := \phi(\,\cdot \, | \, \bfq, \bfr_i', \bfxlt)$ are private logits for $\bfR'$.
Noting that (a) $\bfphi_i = \bfphi_i'$ for $i \neq 1$ by assumption, and (b) $\bfphi_1' = \bm{0}$ under the zero-out adjacency,
we upper bound the sensitivity as
\begin{align*}
    \sens(f_{\OurClip})
    &=
    \norm{f_{\OurClip}(\bfR) - f_{\OurClip}(\bfR')}_\infty \\
    &= \norm{\frac{1}{\bsz}  \sum_{i=1}^\bsz \clip_\clipnorm\big(\bfphi_i - \bfphi_\public\big) - \frac{1}{\bsz}  \sum_{i=1}^\bsz \clip_\clipnorm\big(\bfphi_i' - \bfphi_\public\big)}_\infty\\
    &\stackrel{(a)}{=} \frac{1}{\bsz} \norm{\clip_\clipnorm\big(\bfphi_1 - \bfphi_\public\big)- \clip_\clipnorm\big(\bfphi_1' - \bfphi_\public\big)}_\infty\\
    &\stackrel{(b)}{\le}\frac{1}{\bsz} \norm{\clip_\clipnorm\big(\bfphi_1 - \bfphi_\public\big) -\clip_\clipnorm\big(- \bfphi_\public\big)}_\infty\\
    &\stackrel{(c)}{\le}\frac{1}{\bsz} \norm{\clip_\clipnorm\big(\bfphi_1 - \bfphi_\public\big)}_\infty + \frac{1}{\bsz} \norm{\clip_\clipnorm\big(- \bfphi_\public\big)}_\infty\\
    &\le\frac{2C}{\bsz}\,,
\end{align*}
where (c) follows from the triangle inequality on the $\ell_\infty$ norm and the last line follows because the $\ell_\infty$ norm of any logits clipped using $\clip_C(\cdot)$ is upper bounded by $C$.
\end{proof}

The $2\times$ increase in the sensitivity of $f_\OurClip$ under \textit{zero-out} adjacency causes a $4\times$ increase in the $\rho$-zCDP guarantee of \Cref{alg:main} for the same parameters and settings as under the \textit{replace-by-null} adjacency. This highlights the need for careful definition of adjacency notions in DP.

\subsection{Technical Lemmas}
We recall some technical results that are useful for other proofs.

\begin{lemma}[Adaptive Composition of zCDP~\cite{bun2016concentrated}]
\label{lem:adaptive-comp-zcdp}
For any fixed sets $\calX, \calY$, let $\calA_1 : \calX^* \to \calY_1$ be $\rho_1$-zCDP, and $\calA_2: \calX^* \times \calY_1 \to \calY_2$ be $\rho_2$-zCDP mechanism with respect to its first argument for any fixed second argument. Then, we have the following adaptive composition results: 
\begin{enumerate}[label=(\alph*)]
\item \textbf{Sequential}:
\label{part:sequential}
the mechanism $\calA(D) = \big(Y_1, \calA_2(D, Y_1)\big)$ for $Y_1 = \calA_1(D)$ is $(\rho_1 + \rho_2)$-zCDP. 
\item \textbf{Parallel}: 
\label{part:parallel}
the mechanism $\calA(D) = \big(Y_1, \calA_2(D_2, Y_1)\big)$ for $Y_1 = \calA_1(D_1)$ and any fixed data-independent partition $D_1, D_2$ of $D$
is $\max\{\rho_1, \rho_2\}$-zCDP.
\end{enumerate}
This result can be extended to an arbitrary number of mechanisms by induction.
\end{lemma}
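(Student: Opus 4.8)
\textbf{Proof proposal for \Cref{lem:adaptive-comp-zcdp}.}

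The plan is to verify each of the two composition claims directly from the definition of $\rho$-zCDP (\Cref{def:zcdp}) via a Rényi-divergence calculation, and then note the extension to finitely many mechanisms follows by induction. I would begin by fixing $\alpha > 1$ and a pair of adjacent datasets, and aim to bound the $\alpha$-Rényi divergence of the composed output distributions.

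For part~\ref{part:sequential} (sequential composition), fix adjacent $D \simeq D'$. Write the joint output of $\calA(D) = (Y_1, \calA_2(D, Y_1))$ as a distribution $P(y_1, y_2) = P_1(y_1)\, P_2(y_2 \mid y_1)$ where $P_1$ is the law of $\calA_1(D)$ and $P_2(\cdot \mid y_1)$ is the law of $\calA_2(D, y_1)$; similarly define $Q$ from $D'$. The key step is the adaptive chain-rule bound for Rényi divergence: $D_\alpha(P \Vert Q) \le D_\alpha(P_1 \Vert Q_1) + \sup_{y_1} D_\alpha\big(P_2(\cdot\mid y_1)\Vert Q_2(\cdot\mid y_1)\big)$. (This is the standard composition inequality for Rényi divergence of product-form measures; I would either cite it or reprove it by expanding the exponentiated integrand $\expect_{Q}[(P/Q)^\alpha]$ and factoring the $y_1$ and $y_2$ pieces.) Since $\calA_1$ is $\rho_1$-zCDP we have $D_\alpha(P_1\Vert Q_1) \le \rho_1 \alpha$, and since $\calA_2$ is $\rho_2$-zCDP with respect to its first argument for every fixed second argument $y_1$, and $D\simeq D'$ remain adjacent, we get $D_\alpha(P_2(\cdot\mid y_1)\Vert Q_2(\cdot\mid y_1)) \le \rho_2 \alpha$ uniformly in $y_1$. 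Adding gives $D_\alpha(P\Vert Q) \le (\rho_1 + \rho_2)\alpha$, which holds for all $\alpha > 1$ and all adjacent $D\simeq D'$, establishing $(\rho_1+\rho_2)$-zCDP.

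For part~\ref{part:parallel} (parallel composition), the setup is that $D$ is split into a \emph{fixed, data-independent} partition $D_1, D_2$, with $Y_1 = \calA_1(D_1)$ and the second output $\calA_2(D_2, Y_1)$. Fix adjacent $D \simeq D'$; the single differing element lies in exactly one block. If it lies in $D_1$, then $D_2 = D_2'$, so the second coordinate's conditional law is identical under $D$ and $D'$ for every fixed $y_1$, whence the chain-rule bound yields $D_\alpha(P\Vert Q) \le D_\alpha(P_1\Vert Q_1) + 0 \le \rho_1\alpha$. Symmetrically, if the differing element lies in $D_2$, then $D_1 = D_1'$ so $P_1 = Q_1$ and $D_\alpha(P_1\Vert Q_1) = 0$, giving $D_\alpha(P\Vert Q) \le \rho_2\alpha$. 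In either case $D_\alpha(P\Vert Q) \le \max\{\rho_1,\rho_2\}\alpha$, establishing $\max\{\rho_1,\rho_2\}$-zCDP. Finally, the extension to an arbitrary finite number of mechanisms follows by a routine induction, treating the tuple of the first $m-1$ outputs as $Y_1$ and applying the two-mechanism result.

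The main obstacle — or rather the one point requiring care — is the adaptive chain-rule inequality for Rényi divergence, since the second mechanism's output depends on the realized value $y_1$ of the first; one must be careful that the $\sup_{y_1}$ bound is legitimate (it is, because $\rho_2$-zCDP of $\calA_2$ is assumed to hold for \emph{every} fixed second argument and the adjacency of $D, D'$ is preserved). A secondary subtlety in the parallel case is ensuring the partition is genuinely data-independent, so that exactly one block is affected by the single differing element and the other block's data — and hence its conditional output law — is untouched; this is exactly the hypothesis flagged in the footnote of \Cref{coro:parallel-batches}. Everything else is a direct substitution into \Cref{def:zcdp}.
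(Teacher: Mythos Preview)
The paper does not actually give a proof of \Cref{lem:adaptive-comp-zcdp}; it is stated in the ``Technical Lemmas'' subsection with the preface ``We recall some technical results that are useful for other proofs'' and attributed to~\cite{bun16concentrated}. Your proposal is the standard argument (Rényi chain-rule for sequential composition, case split on which block contains the differing element for parallel composition) and is correct, so there is nothing to compare against and nothing to fix.
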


\subsection{Intuition for \ourtopk Sampling}
\label{sec:ourtopkintuition}
As described in \S\ref{sec:method}, we isolate the contribution of each private logit vector $\bfphi_i$, following clipping and averaging, as $\bfphi^\clip_i$.

\begin{figure}[h]
    \centering
    \includegraphics[trim={0 0 0 0},clip,width=0.9\linewidth]{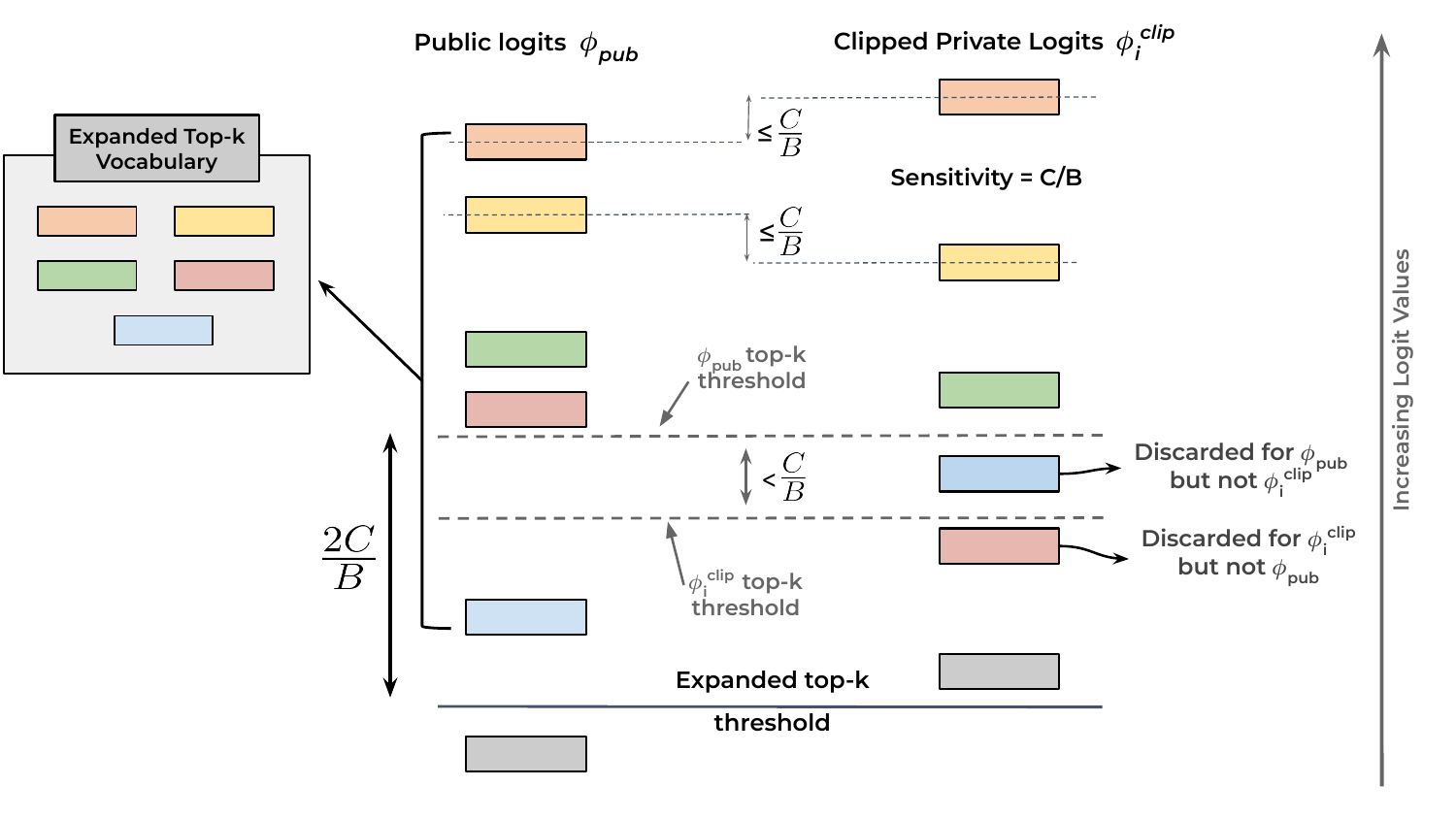}
    \caption{\small 
    Schematic describing \ourtopk sampling. After clipping and aggregating using \txtOurClip, the logit values corresponding to each token change by at most $\frac{\clipnorm}{\bsz}$, i.e., the sensitivity of the private clipped logits. \ourtopk sampling, lowers the top-$k$ threshold of the public logits by $\frac{2\clipnorm}{\bsz}$. The expanded vocabulary $\Vtopk^+$ captures all tokens in a superset of the top-$k$ vocabularies of the public logits $\bfphi_\public$, and all the clipped private logits $\bfphi^\clip_i$.
    }
    \label{fig:topkplus}
\end{figure}

In \Cref{fig:topkplus}, the logit values of each token in $\bfphi^\clip_i$ differ from those in $\bfphi_\public$ by at most $\frac{\clipnorm}{\bsz}$. Further, the top-$k$ sets of both these logits may differ as token ordering around the top-$k$ threshold changes in the following two ways: (i) tokens in the top-$k$ set of $\bfphi_\public$ can drop out of the top-$k$ set of $\bfphi^\clip_i$ as their logit value decreases, and (ii) tokens outside the top-$k$ set of $\bfphi_\public$ can enter the top-$k$ set of $\bfphi^\clip_i$ as their logit value increases. 

All tokens which exit the top-$k$ set of $\bfphi_\public$ due to (i), must still lie within $\frac{\clipnorm}{\bsz}$ of the top-$k$ threshold of $\bfphi_\public$.
However, this could potentially decrease the top-$k$ threshold of $\bfphi_\public$ by at most $\frac{\clipnorm}{\bsz}$ when moving to $\bfphi^\clip_i$. Thus, all tokens which enter the top-$k$ set of $\bfphi^\clip_i$ due to (ii) must lie within $\frac{2\clipnorm}{\bsz}$ of the top-$k$ threshold of $\bfphi_\public$.
Thus, lowering the top-$k$ threshold of $\bfphi_\public$ by $\frac{2\clipnorm}{\bsz}$ captures all tokens of interest, as shown by the toy example in \Cref{fig:topkplus}.

\section{Experimental Setup}
\label{sec:expdetail}

\subsection{Task: Synthetic Text Generation}
\label{sec:expt-task}

Our goal is to generate high-quality, long-form synthetic data from a given private dataset of references, while under rigorous DP guarantees on the privacy of the references. We model the references $\bfR = (\bfr_1, \dots, \bfr_{|\bsz|})$ as part of the context for next-token generation in the LLM. The query prompts $\bfq$, contain instructions for text generation including general details about the sensitive dataset such as domain, type of text document, etc. The public and private (here, private prompts mean prompts with sensitive references embedded in-context) query prompts used for the generation of synthetic text samples for various datasets are listed in \Cref{tab:prompts}. Examples of the references for both Yelp and TAB datasets are given in \Cref{tab:sens_refs}.

\Cref{alg:main} describes the generation of $1$ text sample from a batch of sensitive references $\bfR$ of size $\bsz$. To generate multiple synthetic text samples, we partition the entirety of the sensitive reference dataset into $\bsz$-sized chunks. Since the dataset partitions are defined in a completely data-independent manner, we may compose the DP guarantee parallely across batches; see \Cref{coro:parallel-batches}. The theoretical privacy guarantee for the generation of multiple text samples is thus the same as that of $1$ text sample, i.e., the overall privacy guarantees are independent of the number of synthetic samples generated and only depend on the maximum number of BPE encoded tokens in each input. 

\begin{table}[tbhp]
\small
\centering
\begin{tabular}{m{2cm} m{1.5cm} p{8.5cm}}
\toprule
\textbf{Dataset} & \textbf{Prompt Type} & \textbf{Prompt Text} \\
\midrule
\multirow{2}{*}{\textbf{MIMIC}} 
& \textbf{Private}
& Here is the text of the discharge summary of a patient discharged from a hospital \newline Text: ``\textit{<private reference here>}'' \newline Please give me another text of a fake patient discharge summary for discharge from a hospital. Include typical sections like admitting diagnosis, major procedures (if any), discharge medications (using fictional drug names and dosages), and general follow-up instructions. Do not include any names, dates, or specific medical record numbers. The output text must begin with the exact words ``Discharge Instructions:''. \\
\cmidrule(lr){2-3}
& \textbf{Public}
& Please give me text of a fake patient discharge summary for discharge from a hospital. I only need fictional and representative examples for a non-medical purpose. Include typical sections like admitting diagnosis, major procedures (if any), discharge medications (using fictional drug names and dosages), and general follow-up instructions. Do not include any names, dates, or specific medical record numbers. The output text must begin with the exact words ``Discharge Instructions:''. \\
\midrule
\multirow{2}{*}{\textbf{Yelp}}
& \textbf{Private}
& Here is a text with Business Category : ``\textit{<business category>}'' and Review Stars : ``\textit{<review score>}'' out of 5.0. \newline Text: ``\textit{<private reference here>}'' \newline Please output one more similar review of a fictional place with the same score and of the same category of business. Poor reviews have lower scores and good reviews have higher scores. \\
\cmidrule(lr){2-3}
& \textbf{Public}
& Please give me a fake Yelp customer review of a fictional business with Business Category : ``\textit{<business category>}'' and Review Stars : ``\textit{<review score>}'' out of 5.0. Poor reviews have lower scores and good reviews have higher scores. \\
\midrule
\multirow{2}{*}{\textbf{TAB}}
& \textbf{Private}
& Here is the text of a case transcript set before the European Court for Human Rights. \newline Text: ``\textit{<private reference here>}'' \newline Please output a similar transcript of a fictional case under European Court for Human Rights. Begin with the phrase: `PROCEDURE: The case originated in an application'. \\
\cmidrule(lr){2-3}
& \textbf{Public}
& Please output a transcript of a fictional case under European Court for Human Rights. Begin with the phrase: `PROCEDURE: The case originated in an application'. \\
\bottomrule
\end{tabular}\vspace{0.5em}
\caption{\small Public and private prompts for MIMIC, Yelp and TAB datasets; The private references are inserted in the space denoted by ``\textit{<private reference here>}''. The system prompt across all settings is: ``You are a chatbot''.%
}
\label{tab:prompts}
\end{table}

\begin{table}[tbhp]
\small
\centering
\begin{tabular}{m{1.2cm} p{11.7cm}}
\toprule
\textbf{Dataset} & \textbf{Sensitive References (Exemplars)} \\
\midrule
\multirow{2}{*}{\textbf{Yelp}}
& I've been here a few times. The service has been very good all times. The food is hit or miss on some entrees. Do not order the poutine fries. They have no sauce. It's like eating fries with cheese and meat on them. No gravy like poutine is supposed to have. Their salmon special that they had one day was very good. It had an Asian flare. I also have had the Asian noodle dish which was good. Their burgers are vet popular there and look tasty. There is something on the menu for everyone! They Also have good cocktails. \\
\midrule
\multirow{2}{*}{\textbf{TAB}}
& PROCEDURE The case originated in an application (no. 36244/06) against the Kingdom of Denmark lodged with the Court under Article 34 of the Convention for the Protection of Human Rights and Fundamental Freedoms (“the Convention”) by a Danish national, Mr Henrik Hasslund (“the applicant”), on 31 August 2006. The applicant was represented by Mr Tyge Trier, a lawyer practising in Copenhagen. The Danish Government (“the Government”) were represented by their Agent, Ms Nina Holst-Christensen of the Ministry of Justice. On 5 September 2007 the Acting President of the Fifth Section decided to give notice of the application to the Government. It was also decided to rule on the admissibility and merits of the application at the same time (Article 29 \S 3). 
\\
\bottomrule
\end{tabular}\vspace{0.5em}
\caption{\small 
Examples from the Yelp and TAB datasets.
We use these are privacy-sensitive references for synthetic text generation. We do not provide examples of the MIMIC dataset (or even synthetic generations) to comply with its license.
}
\vspace{-1.5em}
\label{tab:sens_refs}
\end{table} 
\subsection{Datasets}
\label{sec:datasets}

We use both publicly available open-access datasets and licensed datasets: MIMIC IV  Clinical Notes dataset \cite{johnson2023mimicivnote, johnson2023mimiciv}, Yelp Reviews dataset \cite{yelp2023yelp}, and the Text Anonymization Benchmark (TAB) \cite{pilan2022tab} dataset. The respective license and data use agreements of the various assets we use have been duly followed throughout this work.

\mypar{MIMIC-IV Clinical Notes} 
This dataset is a large, de-identified collection of medical text associated with ICU patients, including discharge summaries, radiology reports, and nursing notes. It is widely used for clinical NLP tasks such as diagnosis classification, named entity recognition, and patient cohort identification. The dataset poses significant privacy challenges due to the presence of sensitive patient information, making it an ideal choice for evaluating privacy-preserving language models. Further, the dataset is not available for open-access and cannot legally be used for pre-training general-purpose LLMs such as those we use in our experiments. This dataset is released under the PhysioNet Credentialed Health Data License 1.5.0, and may be accessed via the PhysioNet website (\url{https://physionet.org/content/mimic-iv-note/2.2/}). 

For our synthetic text generation task, we use the discharge summaries released as part of the dataset. In particular, we extract the discharge instructions from within these files. These are selected as they often contain succinct summaries of patient symptoms, diagnosis, treatment, as well as aftercare instructions. Further, almost all discharge summaries contain a section on ``Discharge Instruction'' allowing us to extract almost $311$K valid private references from the dataset. In accordance with the License and Data Use Agreement of the dataset, we do not release the original dataset, processed text samples, or synthetically generated private samples.

\mypar{Yelp Reviews} 
We use the processed Yelp Open dataset \cite{yelp2023yelp} from \cite{yue2023syntextgen} released by \cite{xie2024dpsyndata}. This dataset contains $1.9$M of user-generated reviews and ratings for businesses (see \Cref{tab:dataset_stats} for a precise size), labelled by business category ($10$ categories) and review scores ($5$ possible scores) to give a total of $50$ sub-categories. Although not overtly sensitive, it contains personal opinions and location references and has previously been used to design private synthetic text generation algorithms \cite{yue2023syntextgen, xie2024dpsyndata}.

The Yelp Open dataset is released under a data agreement and terms of use, which are present in the downloadable \texttt{.zip} file hosted \href{https://yelp.com/dataset}{by Yelp}. 
This is distinct from but similar to the (also publicly available) Yelp Reviews dataset hosted \href{https://huggingface.co/datasets/Yelp/yelp_review_full}{by HuggingFace}; in particular, they contain examples of similar reviews, labelled by review scores across several (unlabelled) business categories.

The respective GitHub repositories of \cite{xie2024dpsyndata} (\href{https://github.com/AI-secure/aug-pe}{link}) and \cite{yue2023syntextgen} (\href{https://github.com/microsoft/dp-transformers}{link}) are released under the Apache 2.0 and MIT licenses. The license under which the processed dataset is released is not explicitly mentioned in either repository. The dataset may be downloaded \href{https://github.com/AI-secure/aug-pe/tree/main/data}{from this link} by following the instructions given by \cite{xie2024dpsyndata}.

\mypar{Text Anonymization Benchmark (TAB)} 
This dataset \cite{pilan2022tab} is a dedicated, open-source corpus designed for evaluating text anonymization systems. It comprises $1268$ English-language court cases from the European Court of Human Rights (ECHR), manually annotated with semantic categories of personal identifiers, masking decisions based on re-identification risk, confidential attributes, and co-reference relations. TAB provides a high-quality benchmark for assessing the effectiveness of anonymization models in legal and privacy-critical domains. We use the text released in this dataset as a low-resource task for generating high-quality synthetic data when the batch size is constrained to be small. Using a large batch size for this dataset results in a very small yield of synthetic data, for example: $B=127$ gives us a maximum of $8$ synthetic examples --- this constrains us to use a batch size $B\le 7$ if we wish to generate at least $100$ examples.

The dataset is released under the MIT License and is publicly available at \href{https://github.com/NorskRegnesentral/text-anonymization-benchmark}{this link}.

\mypar{Note on Choice of Dataset} We note that several common datasets used in prior work (such as AGNews, TREC, DBPedia, WikiMovies, etc.) are incompatible with our setting of long-form text generation due to their relatively shorter texts ($\leq 200$ tokens). Further, several of these datasets (like IMDb, etc.) are well-represented in the pretraining data of most LLMs. Since publicly prompted LLMs can generate high-quality synthetic text for such datasets, a high degree of overlap between the pretraining and target data distributions skews the conclusions of private data analysis~\cite{tramer2022position}. Thus, we prefer using benchmarks that are sufficiently out-of-distribution relative to the pretraining data and preferably contain real privacy-sensitive data.

Accordingly, we use the MIMIC IV Notes dataset as the primary dataset for experiments, since it contains real patient discharge notes with sensitive medical data, and cannot legally be used for pretraining LLMs as per its data use agreement. Similarly, the TAB dataset contains sensitive identifying information (demographic traits, spatio-temporal markers, biographical details, etc.) from European Court of Human Rights cases. We use the Yelp dataset as a representative dataset that overlaps significantly with the pretraining data (similar to the other datasets discussed above).

\subsection{Models}

We use the following language models in our empirical evaluation: TinyLLaMA 1B \cite{zhang2024tinyllama}, LLaMA3.2-1B \cite{llama32modelcard}, and LLaMA3 8B \cite{llama3modelcard}. All models were loaded in \texttt{torch.bfloat16} precision.

TinyLLaMA is a compact, open-weight language model trained on 1 trillion tokens and with only 1.1B parameters. Despite a smaller size, it retains strong performance on basic language modeling tasks and is ideal for fast experimentation in compute-constrained settings. For our experiments, we used the instruction-tuned version \texttt{TinyLlama-1.1B-Chat-v1.0} of this model. This model is released under the Apache 2.0 License and may be accessed via HuggingFace at: \href{TinyLlama/TinyLlama-1.1B-Chat-v1.0}{TinyLLaMA 1B}.

The LLaMA3.2 1B model is a smaller variant from the LLaMA3 family. Being a multilingual model with a very large vocabulary ($128K$ tokens) and context size, it is an ideal candidate to demonstrate the improvements offered by \ourtopk sampling. Additionally, we also conduct one set of experiments on a larger LLaMA3 8B model. For our experiments, we use instruction-tuned versions of both models. The former is released under the Llama 3.2 Community License Agreement and may be accessed via huggingface at: \href{https://huggingface.co/meta-llama/Llama-3.2-1B-Instruct}{LLaMA3.2 
1B}. The latter is released under the Llama 3 Community License Agreement and may be accessed via huggingface at: \href{https://huggingface.co/meta-llama/Meta-Llama-3-8B-Instruct}{LLaMA3 8B}.

\subsection{Evaluation Metrics}
\label{sec:expt:eval-metrics}

For our empirical evaluation, we evaluate the generated text on several different metrics to highlight the advantages of \ours compared to baselines.

\mypar{\mauve score}
To evaluate the distributional similarity between synthetic and real text corpora, we use the \mauve score \citep{pillutla2021mauvenips, liu2021mauvetheory, pillutla2023mauvejmlr}, which is designed to capture both quality and diversity in text generation quality. Unlike traditional metrics (e.g., BLEU, ROUGE), MAUVE compares entire distributions rather than individual samples. The result is a number between 0 and 1, with 1 denoting that the two distributions are identical, and 0 denoting that the two distributions have no similarity.

We make the following choices to compute the \mauve scores:  
\begin{itemize}
    \item Implementation: We use the open-source 
    \texttt{mauve-text} package released by \cite{pillutla2021mauvenips,pillutla2023mauvejmlr}.\footnote{Available at  \url{https://github.com/krishnap25/mauve}.}
    \item Reference samples: Held-out samples used 
    \item Number of generations and references: Varying between 100 and 1000, as described in \Cref{tab:dataset_stats}. We use an equal number of samples from both the generations and references.
    \item Embedding model: Sentence Transformer model \cite{reimers2019sentencebert}, version \texttt{all-mpnet-base-v2}.\footnote{
    Available at \url{https://huggingface.co/sentence-transformers/all-mpnet-base-v2}.
    }
    \item Number of bins: $N/20$ where both reference and generated distributions have $N$ samples. %
    \item Scaling factor: $0.9$, across all settings throughout this work.
\end{itemize}

To compute all MAUVE scores in this paper, we use the SentenceTransformer embeddings of the text sample \cite{reimers2019sentencebert}. We find these embeddings to be of higher quality, and corresponding to better and more meaningful \mauve scores. In particular, we use the \texttt{all-mpnet-base-v2} model, which returns $768$-dimensional embeddings of input texts, truncated to $384$ tokens. 
The final score is also scaled appropriately to facilitate easy comparisons by setting the scaling factor to $0.9$. It is crucial to note that the value of the scaling factor and other hyperparameters used to calculate \mauve scores impact the obtained scores. \mauve scores are thus best used for relative comparisons, such as in this work, and the absolute values themselves are less meaningful.

\mypar{Medical Named-Entity Recognition (MedNER)}
The original MIMIC discharge notes contain details on various medically significant terms such as diseases, medications, treatments, etc.
We found in our early experiments that some generation methods can fail to generate such domain-specific terms. Since we wish the synthetic generations to reflect all statistical properties of the original dataset, it is desirable for generation algorithms to also generate relevant medical entities, at the same rate as the original data.

In other words, we are interested in
Medical Named Entity Recognition (Medical NER). This is a common task that focuses on identifying clinically significant entities such as diseases, medications, procedures, anatomical terms, and patient attributes from clinical notes, health records, or other medically relevant text data.
We use the counts of medical named-entities in the generated text as a measure of the level of detail and specificity preserved in the synthetic text.

We use the \textit{DeBERTa-Med-NER-2} \cite{mattupalli2024} model to label all medical named entities in the generated text. The model recognizes $41$ categories of named-entities of which the text subset of the MIMIC dataset we extract, as described in \S\ref{sec:datasets}, contains only $25$ classes. We further select and count the presence of the most relevant medical named entities; $13$ classes like therapeutic procedure and disease/disorder are included, but domain-agnostic classes like date and time are excluded. The exact details are presented in \Cref{tab:medner}.

\begin{table}[tbp]
\small
\centering
\begin{tabular}{llc}
\toprule
\textbf{MedNER Category} & \textbf{Example(s)} & \textbf{Counted} \\
\midrule
THERAPEUTIC\_PROCEDURE    & ``paracentesis'', ``colonoscopy''  & $\checkmark$ \\
DIAGNOSTIC\_PROCEDURE     & ``blood pressure'', ``endoscopy''  & $\checkmark$ \\
DETAILED\_DESCRIPTION     & ``IV'', ``constipating''           & $\checkmark$ \\
DISEASE\_DISORDER         & ``cirrhosis'', ``pneumonia''       & $\checkmark$ \\
SIGN\_SYMPTOM             & ``pain'', ``fever''                & $\checkmark$ \\
SEVERITY                  & ``worsening'', ``severe''          & $\checkmark$ \\
DISTANCE                  & ``1.5L'', ``5 pounds''              & $\checkmark$ \\
DOSAGE                    & ``40 mg'', ``1 tablet every 6 hours'' & $\checkmark$ \\
LAB\_VALUE                & ``high'', ``improved''             & $\checkmark$ \\
MEDICATION                & ``Lasix'', ``Tylenol''              & $\checkmark$ \\
COREFERENCE               & ``incision'', ``splint''           & $\checkmark$ \\
ADMINISTRATION            & ``intravenous'', ``oral''          & $\checkmark$ \\
BIOLOGICAL\_STRUCTURE     & ``stomach'', ``esophagus''         & $\checkmark$ \\
NONBIOLOGICAL\_LOCATION   & ``hospital'', ``emergency room''   & $\times$ \\
FAMILY\_HISTORY$^*$           & ``yourself''                      & $\times$ \\
CLINICAL\_EVENT           & ``admitted'', ``discharged''       & $\times$ \\
DURATION                  & ``1 day'', ``several weeks''       & $\times$ \\
HISTORY                   & ``alcohol'', ``activities''        & $\times$ \\
DATE                      & ``1 week'', ``24 hours''           & $\times$ \\
AGE$^*$                       & ``up'' 
                           & $\times$ \\
SEX                       & ``male'', ``female''               & $\times$
\\
TIME                      & ``:55pm'', ``:20am''               & $\times$ \\
AREA$^*$                      & ``.''                             & $\times$ \\
VOLUME                    & ``7 x 6.7 x 7.6 cm'', ``4.9 x 6.4 centimeter'' & $\times$\\
OTHER\_ENTITY              & ``2'', ``day''                     & $\times$ \\
\bottomrule
\end{tabular}\vspace{0.5em}
\caption{
\small
\textbf{\small Medical Named Entities}:
The various medical named entities captured by the DeBERTa-Med-NER model~\cite{mattupalli2024} and some identified examples for each category. In our MedNER evaluations, we only count the most medically relevant among these categories, as denoted by the rightmost column. Note that some of the omitted columns are not reliably detected by the model (denoted by $^*$).
}
\vspace{-2em}
\label{tab:medner}
\end{table}

\mypar{Generation Perplexity}
Perplexity is a standard metric used to evaluate the quality of generations of probabilistic language models, including LLMs. Perplexity is defined as the exponent of the average negative log-likelihood of generating every token of the observed sequence, conditioned on the previously generated tokens. 
The perplexity of a sequence $\bfx=(x_1, x_2, \ldots)$ is defined as
\[
\text{PPL}(\bfx) = \exp\left(-\frac{1}{|\bfx|}\sum_{t=1}^{|\bfx|}\log \hat P(x_t \mid \bfx_{<t}) \right)
\]
where $\log \hat P(x_t \mid \bfx_{<t})$ is the log-likelihood of the $t$\textsuperscript{th} token conditioned on the preceding tokens $\bfx_{<t}$ (with $\bfx_{<1} = \emptyset$), according to the language model $\hat P$. We take the model $\hat P$ used to calculate the perplexity of all text samples as GPT-2 small (with $117M$ parameters).

A low perplexity value indicates that the model assigns a higher likelihood to the observed sequence. We wish the generated synthetic text to match the original text dataset in all statistical properties, including perplexity under any given language model. Thus, achieving a small difference in the average perplexities between the generated and original text distributions is desirable. This is measured by the \DeltaPPL metric~\cite{holtzman2020curious}, defined as
\[
    \DeltaPPLmath = \left| \frac{1}{|\bfX|}\sum_{\bfx\in\bfX} \PPLmath(\bfx) - \frac{1}{|\bfR|}\sum_{\bfr\in\bfR} \PPLmath(\bfr)\right|\,,
\]
where $\bfR$ is the original dataset of sensitive references and $\bfX$ is the synthetically generated dataset.

\mypar{Length Distributions}
The privacy budget of per-token privatization algorithms is composed over the number of tokens generated. For methods like Amin et al. \cite{amin2024private} and AdaPMixED \cite{flemings2024adapmixed}, the number of tokens generated is limited by the privacy budget. In contrast, for our method, we set a predetermined maximum number of allowed tokens and calculate other hyperparameters based on this number. Further, unlike previous methods that are not well-suited for long-form text generation due to either computational or privacy constraints, we aim to produce longer and higher-quality text. Since the length of generated text is another statistical property of the private reference dataset we wish to mimic, we use the number of tokens generated by the model, referred to henceforth as the generation length, as an evaluation metric. 

The set of all possible tokens, called the vocabulary, differs from model to model. A tokenizer is used by language models to convert text strings into sequences of valid tokens (each denoted by an integer). The LLaMA3 models use a byte-pair encoding (BPE) model based on tiktoken, whereas TinyLLaMA uses a BPE model based on SentencePiece. A characteristic of BPE models, introduced in \cite{sennrich2016bpe}, is sub-word tokenization, which splits up one word into one or more tokens depending on the length of the word and the substrings in it. The number of words in a text sample is thus not equal to the number of tokens in the same sample. The latter also depends on the exact tokenizer used.

To measure the generation length for generations corresponding to a given model, we use the number of BPE tokens generated by the corresponding tokenizer. Generation lengths reported are thus not directly comparable across families of models. The generation lengths of real texts are calculated separately for every model and reported with the corresponding results (wherever applicable) for comparison.
We report the average generation length (in number of tokens) over all generations of $\bfx$ (and its confidence intervals, as described in \S\ref{sec:expt:confidence-intervals}).

\subsection{Hyperparameter Selection}
\label{sec:hyperparam}

We discuss the choices of hyperparameters, the range explored, and the settings for reported results across all methods, \ours and other baselines, in this section. We also note that to allow comparisons between methods, we convert all DP guarantees into $(\epsilon, \delta)$ guarantees\footnote{$\delta$ for all methods is chosen to be a sufficiently small value of $10^{-6}$.}.

\mypar{\ours}
\ours, described in \Cref{alg:main}, has the following hyperparameters: the clipping norm $\clipnorm$, the batch size $\bsz$, the sampling temperature $\tau$ and the top-$k$ sampling parameter. For a given $\eps$, the heuristics for choosing optimal hyperparameters are given in \S\ref{sec:expt}. To arrive at this heuristic, we conducted a thorough hyperparameter search for a full vocabulary $k=|V|$ variant of the model; the full range of hyperparameters explored for each dataset and model we use is given in \Cref{tab:hp_ours_V}. We report results by tuning the sampling temperature for various values of $\eps$ and $\bsz$. 
\begin{table}[htbp]
\small
\centering
\begin{tabular}{c c cc}
\toprule
\textbf{Dataset} & \textbf{Model} & \textbf{Hyperparameter} & \textbf{Range Explored} \\
\midrule
\multirow{6}{*}{\textbf{MIMIC}}
& \multirow{2}{*}{TinyLLaMA 1B}
& $\bsz$
& $[4, 8, 16, 32]$
\\
& 
& $\tau$
& $[0.8, 0.9, 1.0, 1.1, 1.2]$
\\
\cmidrule(lr){2-4}
& \multirow{2}{*}{LLaMA3.2 1B}
& $\bsz$
& $8$
\\
& 
& $\tau$
& $1.1$
\\
\cmidrule(lr){2-4}
& \multirow{2}{*}{LLaMA3 8B}
& $\bsz$
& $8$
\\
& 
& $\tau$
& $1.2$
\\
\midrule
\multirow{2}{*}{\textbf{Yelp}}
&  \multirow{2}{*}{TinyLLaMA 1B}
& $\bsz$
& $[2, 4, 8, 16, 32]$
\\
& 
& $\tau$
& $[0.8, 1.0, 1.2]$
\\
\midrule
\multirow{2}{*}{\textbf{TAB}}
&  \multirow{2}{*}{TinyLLaMA 1B}
& $\bsz$
& $8$
\\
& 
& $\tau$
& $[0.8, 1.0, 1.2]$
\\
\bottomrule
\end{tabular}\vspace{0.5em}
\caption{\small Hyperparameters for \ours variant with $k=|V|$.}
\vspace{-2.0em}
\label{tab:hp_ours_V}
\end{table} 
Based on these results (presented in detail in \S\ref{sec:addexp}) and on the exploration of non-private generation in \Cref{fig:public}, we report results for \ours with a fixed sampling temperature $\tau=1.1$ for $k<|V|$ and $\tau=1.0$ for $k=|V|$. We explore a range of top-$k$ parameters for various settings; see \Cref{tab:hp_ours}. Our empirical results in \S\ref{sec:expt} show that setting $k=50-100$ shows superior performance to all existing baselines, as well as the full-vocabulary ($k=|V|$) variant of \ours. 
\begin{table}[htbp]
\small
\centering
\begin{tabular}{c c cc}
\toprule
\textbf{Dataset} & \textbf{Model} & \textbf{Hyperparameter} & \textbf{Range Explored} \\
\midrule
\multirow{6}{*}{\textbf{MIMIC}}
& \multirow{2}{*}{TinyLLaMA 1B}
& $\bsz$
& $[4, 8, 16, 32]$
\\
& 
& $k$
& $[10, 50, 100, 500]$
\\
& 
& $\tau$
& $[1.0, 1.1., 1.2]$
\\
\cmidrule(lr){2-4}
& \multirow{2}{*}{LLaMA3.2 1B}
& $\bsz$
& $8$
\\
& 
& $k$
& $[10, 50, 100, 500]$
\\
& 
& $\tau$
& $1.1$
\\
\cmidrule(lr){2-4}
& \multirow{2}{*}{LLaMA3 8B}
& $\bsz$
& $8$
\\
& 
& $k$
& $[10, 50, 100, 500, 1000, 5000]$
\\
\midrule
\multirow{2}{*}{\textbf{Yelp}}
&  \multirow{2}{*}{TinyLLaMA 1B}
& $\bsz$
& $[4, 8, 16, 32]$
\\
& 
& $k$
& $[10, 50, 100]$
\\
\midrule
\multirow{2}{*}{\textbf{TAB}}
&  \multirow{2}{*}{TinyLLaMA 1B}
& $\bsz$
& $8$
\\
& 
& $k$
& $[10, 50, 100]$
\\
\bottomrule
\end{tabular}\vspace{0.5em}
\caption{\small Hyperparameters for \ours ($k<|V|$). We set $\tau=1.2$ wherever unspecified.}
\vspace{-1.8em}
\label{tab:hp_ours}
\end{table} 
\mypar{Amin et al. \cite{amin2024private}} The algorithm proposed by \cite{amin2024private} has similar hyperparameters: $\clipnorm$ clipping norm, $\bsz$ batch size, $\tau_{prv}$ private sampling temperature, $\tau_{pub}$ public sampling temperature, $\theta$ threshold for Sparse Vector Technique (SVT), $\sigma$ noise parameter for SVT. Further, the privacy guarantees depend on the number of times a token is sampled from the aggregated and clipped private logits.
\begin{table}[tbhp]
\small
\centering
\begin{tabular}{c c cc}
\toprule
\textbf{Dataset} & \textbf{Model} & \textbf{Hyperparameter} & \textbf{Range Explored} \\
\midrule
\multirow{2}{*}{\textbf{MIMIC}}
& \multirow{2}{*}{TinyLLaMA 1B}
& $\bsz$
& $[32, 64, 128]$
\\
& 
& $\tau$
& $[0.8, 1.0, 1.2]$
\\
\midrule
\multirow{2}{*}{\textbf{Yelp}}
&  \multirow{2}{*}{TinyLLaMA 1B}
& $\bsz$
& $[32, 64, 128]$
\\
& 
& $\tau$
& $[0.8, 1.0, 1.2]$
\\
\bottomrule
\end{tabular}
\vspace{0.5em}
\caption{\small Hyperparameters for Amin et al. \cite{amin2024private}'s method.}
\vspace{-1.8em}
\label{tab:hp_amin}
\end{table} %
As mentioned in \S\ref{sec:expt}, preliminary experimentation showed that the SVT step in \cite{amin2024private}'s method selected private tokens $\approx25\%$ of the time. We select a value of $T_{prv}=$100 for all datasets (where $T_{prv}$ is the maximum number of private tokens allowed) and maintain the total tokens generated at the same levels as the settings for \ours described in \Cref{tab:dataset_stats}. As seen in the plots in \Cref{fig:mainplot}, this does not affect the average generation length of synthetic samples generated by this method. As seen in \S\ref{sec:expt}, for a number of settings, this method fails to yield any synthetic text (\dnc or infeasible in \Cref{tab:result}). All results available are reported tuned on the sampling temperature $\tau$.

\mypar{AdaPMixED \cite{flemings2024adapmixed}} 
\adapmixed \cite{flemings2024adapmixed} is a next-token prediction algorithm adapted to generate synthetic text sequences sampling one token at a time under data-dependent privacy guarantees. The approach described in \cite{flemings2024adapmixed} uses models fine-tuned (without privacy) on partitions of the private references dataset. We focus on private inference methods and adapt \adapmixed in a fashion similar to \cite{amin2024private}, using the sensitive references as context for the synthetic text generation. Private inference can be used on pretrained models prompted with private data (our setting) or non-DP fine-tuned models. Both are conceptually identical as they privatize a set of non-private predictions.

We use a custom implementation of the method, modifying it to use the \textit{replace-by-null} adjacency; see \S\ref{sec:adjacency-prior-work} \& \S\ref{sec:expt-adj-adapmixed}. We note that \adapmixed has significant (non-GPU) computational overhead compared to other methods due to repeated calculation of optimal mixing parameters; see \cite[Alg 1, Line 11]{flemings2024adapmixed}. We implement this using the \texttt{brentq} optimizer from the \texttt{scipy} Python package. For generations, we use the same values of $T$ (maximum allowed tokens per generation) as discussed previously; see \Cref{tab:dataset_stats}. We fix some hyperparameters across settings: sampling temperature at $\tau=1$, noise multiplier at $\sigma=0.01$ and mixing parameter for noisy screening $\lambda=10^{-4}$; for all other hyperparameters, we use the recommended default values for the PubMed dataset in \cite[Tab 4]{flemings2024adapmixed}.

\mypar{AugPE \cite{xie2024dpsyndata}} AugPE is a private text generation algorithm that relies on iterative LLM-based paraphrasing and construction of privatized (noisy) nearest neighbour histograms at every step to generate private text. See \S\ref{sec:augpe-expt} for a full discussion on all the generation settings for this method.

\subsection{Confidence Intervals}
\label{sec:expt:confidence-intervals}

We report the mean of the evaluation metrics across all generated text samples for a given generation setting, wherever applicable. In addition, we also report the $(1-\alpha)\times 100\% = 99\%$ confidence intervals calculated using the standard error (standard deviation of the mean) as follows,
\[
\mathsf{CI} = \Bar{x} \pm Z_{\alpha/2} \frac{s}{\sqrt{m}}\,,
\]
where $\Bar{x}$ is the sample mean, $s$ is the sample standard deviation and $m$ is the number of samples. Here, $Z_{\alpha/2}$ is the $(1-\alpha/2)$ percentile of the standard normal distribution. These are Wald confidence intervals, given under the assumption that error is unbiased and normally distributed. For a $99\%$ confidence interval, we have $Z_{\alpha/2}=2.576$ and for a $95\%$ confidence interval, we have $Z_{\alpha/2}=1.96$.

\subsection{Computational Resources}
\mypar{Software} We used Python 3.13.2, PyTorch 2.6 and HuggingFace Transformers 4.50.1.

\mypar{Hardware} All experiments requiring a GPU (LLM inferences, \DeltaPPL calculation, MedNER count calculation etc.) were conducted on one of two machines: (a) with 4 NVIDIA RTX L40S GPUs (with 48G memory each),
or (b) with 4 NVIDIA H100 GPUs (with 80GB memory each). Each task used only one GPU at a time. All wall-clock time measurements, if any, are performed using the L40S GPUs.
The non-GPU jobs, such as computing \mauve, were run on a machine with 240 AMD EPYC 9554 CPUs (clock speed: 3.10 GHz) with 64 virtual cores each and total memory of 845GB. 

\mypar{Evaluation Time} The approximate wall clock runtimes for synthetic text generation using various methods are given below. The runtime varies with the size of the model and batch size used, in particular with the maximum number of parallel LLM inferences that may be run on the GPU. For our experiments using TinyLLaMA, a maximum of $min(\bsz+1, 16)$ inferences were run in parallel on 1 GPU. The runtimes also scale (approximately) linearly with the length of individual generated samples and the number of samples generated.
All numbers reported below are for $1000$ generations of $500$ token synthetic samples using MIMIC as the reference dataset. 

\begin{itemize}
    \item \ours: $\approx 4$ hours for $\bsz=15$
    \item Amin et al. \cite{amin2024private}: $\approx 12$ hours for $\bsz=127$
    \item AdaPMixED \cite{flemings2024adapmixed}: $\approx 12$ hours for $\bsz=127$ and $\eps=1$
    \item AugPE \cite{xie2024dpsyndata}: $\approx 18 $ hours for $7$ variations per iteration and $10$ iterations
\end{itemize}

The runtimes for AdaPMixED \cite{flemings2024adapmixed}, in particular, have a very high variance with both $\bsz$ and $\eps$ as a higher privacy budget corresponds to longer generations and higher runtimes. For $\bsz=127$ and $\eps\ge5$, the runtime exceeds $24$ hours for $1000$ generations, denoted by \tle in our experimental results. The runtimes for AugPE \cite{xie2024dpsyndata} scale (approximately) linearly with both the number of iterations and the number of variations generated per iteration.

\section{Additional Experimental Results}
\label{sec:addexp}

In this section, we present the following additional results:
\begin{itemize}
    \item Additional results comparing \ours and AugPE \cite{xie2024dpsyndata} on the MIMIC dataset.
    \item Additional experiments on the downstream utility of data generated from the Yelp dataset.
    \item Additional statistics for the selection of tokens from the expanded vocabulary described by the \ourtopk step in \ours.
    \item Additional results for synthetic generations for the MIMIC dataset using LLaMA3 8B.%
    \item Additional experiments designed to highlight the effect of sampling temperature $\tau$, clipping norm $\clipnorm$ and top-$k$ sampling parameter on the quality of generation.
    \item Experiments comparing the quality of private text generation for \adapmixed with the replace-by-null adjacency and add-or-remove notions of adjacency.
\end{itemize}

\ifnum \value{arxiv}= 0 
{
\subsection{Comparison with AugPE}
\label{sec:augpe-expt}

Unlike \ours and other baselines considered in this paper, Augmented Private Evolution (AugPE) \cite{xie2024dpsyndata} assumes only API access to the LLMs. Lack of dependence of the privatization protocol on the LLM logits makes privatization harder. AugPE uses LLM-based paraphrasing in iterative steps, constructing privatized (using Gaussian mechanism) nearest neighbor histograms (cf. \S\ref{sec:adj-augpe}) of the candidate synthetic generations at every iteration. 

\begin{figure}[tbhp]
    \centering
    \includegraphics[trim={0 1.5cm 0 0},clip,width=0.98\linewidth]{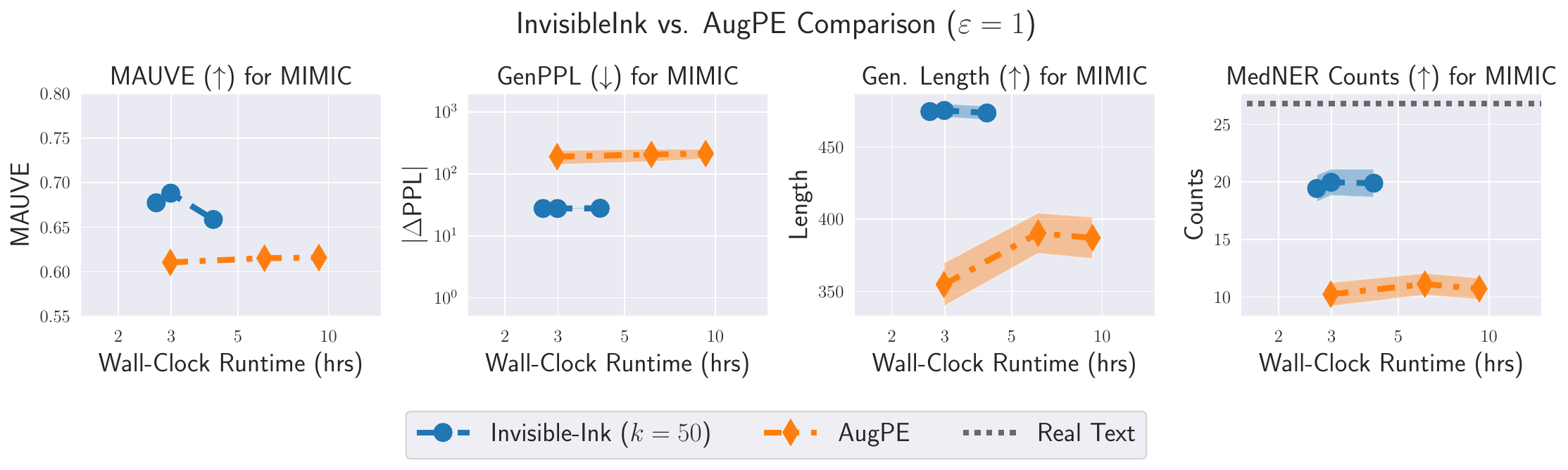}
    \includegraphics[trim={0 1.5cm 0 0},clip,width=0.98\linewidth]{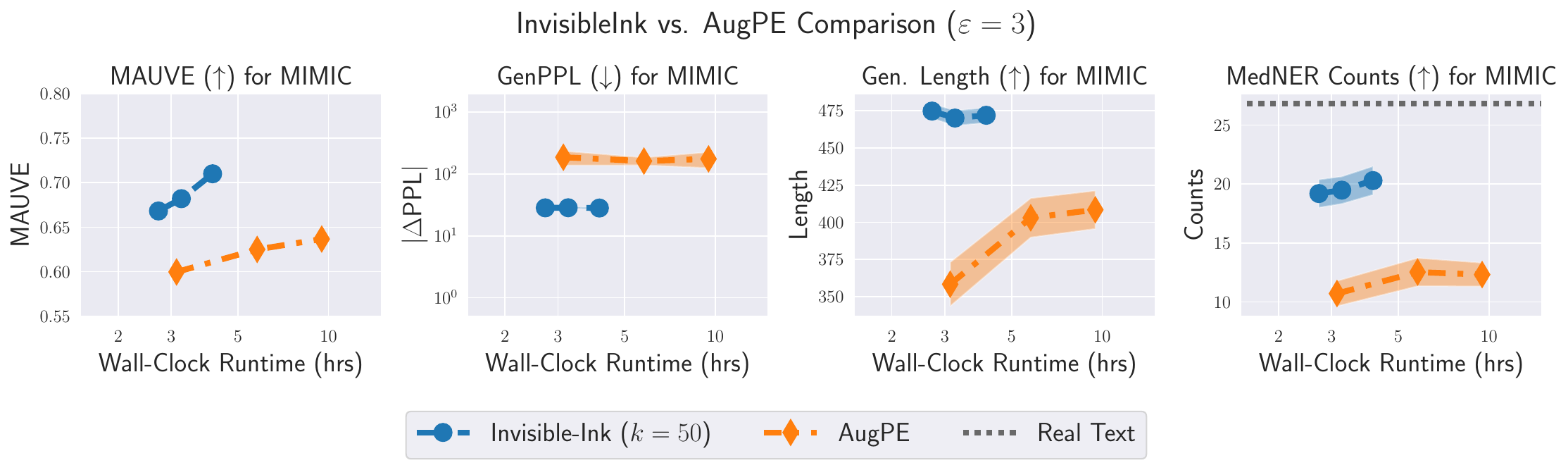}
    \includegraphics[trim={0 0cm 0 0},clip,width=0.98\linewidth]{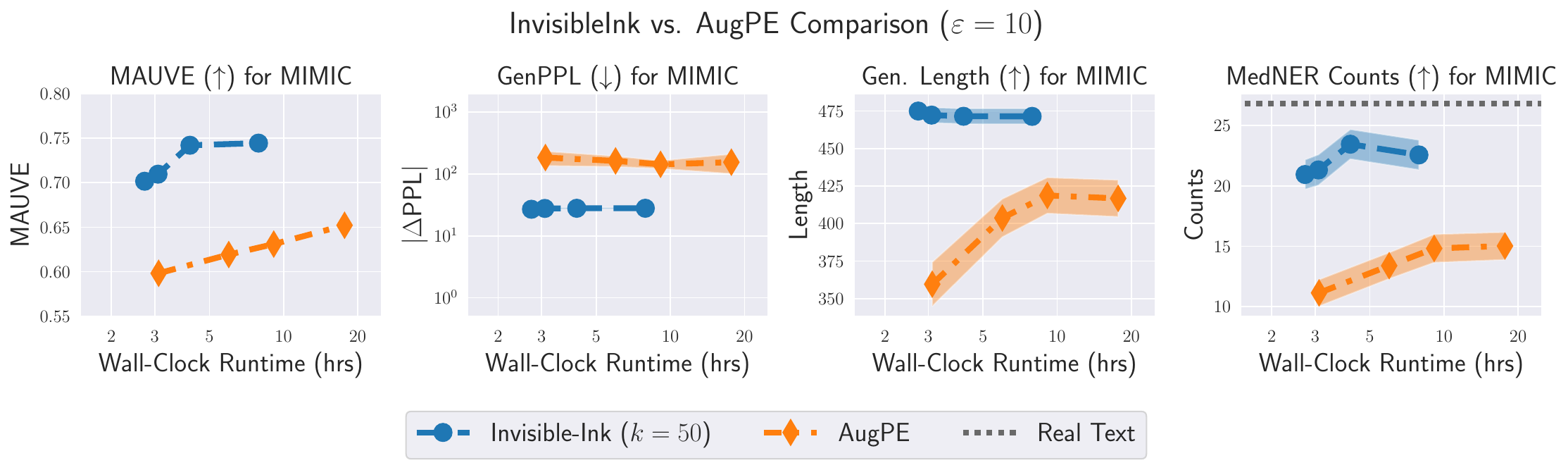}
    \caption{\small 
    Utility vs Computational Cost plots for \ours and AugPE for varying privacy budgets $\eps$ for 1000 synthetic texts generated for the MIMIC dataset. We compare utility using a variety of metrics --- \ours outperforms AugPE across all settings and evaluation metrics. Wall-clock run time is used to measure the computational cost. We report results for $\bsz+1=4,8,16$ next-token inference calls per generated token and $T_{\mathsf{AugPE}}=1,3,5$ for \ours and AugPE, respectively. For $\eps=10$ additional values of $B+1=32$ next-token inference calls per generated token and $T_{\mathsf{AugPE}}=10$ are also included.
    }
    \label{fig:augpe}
\end{figure}

In particular, AugPE makes two types of LLM inference calls: generation and paraphrasing. 
While paraphrasing can be implemented with calls to the same decoder-only LLMs that are also used for generation, it is also possible to use cheaper paraphrasing models or masked language models.
Thus, we cannot directly compare \method to AugPE in an apples-to-apples comparison based on the number of next-token inference calls.
Instead, we use directly the wall-clock runtime of both methods as a measure of computation cost.
Further, as discussed in \S\ref{sec:adj-augpe}, we also allow it an advantage of $\sqrt{2}$ in the sensitivity in the replace-by-null adjacency.

The number of variations of the candidate synthetic texts generated in every iteration of AugPE (the parameter $L$ in \cite[Alg 1]{xie2024dpsyndata}) is fixed to be $7$. We also limit the total number of sensitive references the algorithm sees to $7\times n$ (where $n$ is the total number of synthetic texts generated). The total number of full-sequence LLM generations per synthetic text is given by $8\times T_{\mathsf{AugPE}}$, where $T_{\mathsf{AugPE}}$ is the number of iterations, and varies from $8-80$. In contrast, we disadvantage our model by limiting the number of effective full-sequence LLM generations to $\bsz+1$, ie, $4-32$. Further, we use the official implementation of AugPE released by Xie et al. \cite{xie2024dpsyndata}\footnote{Available at \url{https://github.com/AI-secure/aug-pe}} and adapt it for generating synthetic samples from MIMIC with minimal changes. In particular, we preserve the same style of random sample generation and text variation prompts used for other datasets; see \Cref{tab:augpe_prompts}. %

\begin{table}[bthp]
\small
\centering
\begin{tabular}{m{3cm} p{10cm}}
\toprule
\textbf{Prompt Type} & \textbf{Prompt Text} \\
\midrule
\textbf{System Prompt}
& Please act as a sentence generator for the medical domain. Generated sentences should mimic the style of medical discharge summaries, using a variety of sentence structures. \\
\midrule
\textbf{Generation Prompt}
& Using a variety of sentence structures, write a patient discharge summary. Begin the output with the phrase: ``Discharge Instructions:'' \\
\midrule
\textbf{Variation Prompt}
& Please rephrase the following sentences ``\textit{<insert candidate synthetic text>}'' as a patient discharge summary: ``\textit{<insert rephrase style>}'' \newline Begin the output with the phrase: ``Discharge Instructions:'' \\
\bottomrule
\end{tabular}\vspace{0.5em}
\caption{\small
Random sample generation and Variation prompts for MIMIC adapted in the style of AugPE; The candidate synthetic generations at the current iteration are inserted in the space denoted by ``\textit{<insert candidate synthetic text>}'' and information about the style in which the LLM should rephrase the text is inserted in the space denoted by ``\textit{<insert rephrase style>}''. 
}
\vspace{-1.5em}
\label{tab:augpe_prompts}
\end{table} 
As seen in \Cref{fig:augpe}, not only does \ours completely outperform AugPE across privacy budget settings and evaluation metrics, but the overall quality of synthetic text generated by AugPE even for the most relaxed settings (in terms of both compute and privacy budgets) falls short of that of the most compute and privacy constrained generations of \ours. Due to the LLM-based paraphrasing step, AugPE works best when the dataset of private references is well-represented in the pre-training dataset---indeed, preliminary experiments with the Yelp dataset showed competitive performance for AugPE (comparable to \method). 

These results are not surprising because the API access setting of AugPE is more restrictive as compared to the relaxed whitebox setting, and \method can better leverage the additional flexibility of whitebox access.
} \else {} \fi

\ifnum \value{arxiv}= 1 
{
\subsection{Comparison with AugPE}
\label{sec:augpe-expt}

\Cref{fig:augpe_full} gives additional experimental results comparing the utility vs computational cost of \ours and AugPE for various privacy budgets. The trends agree with those in \Cref{fig:augpe}; see \S\ref{sec:expt}. 
\begin{figure}[htbp]
    \centering
    \includegraphics[trim={0 1.5cm 0 0},clip,width=0.98\linewidth]{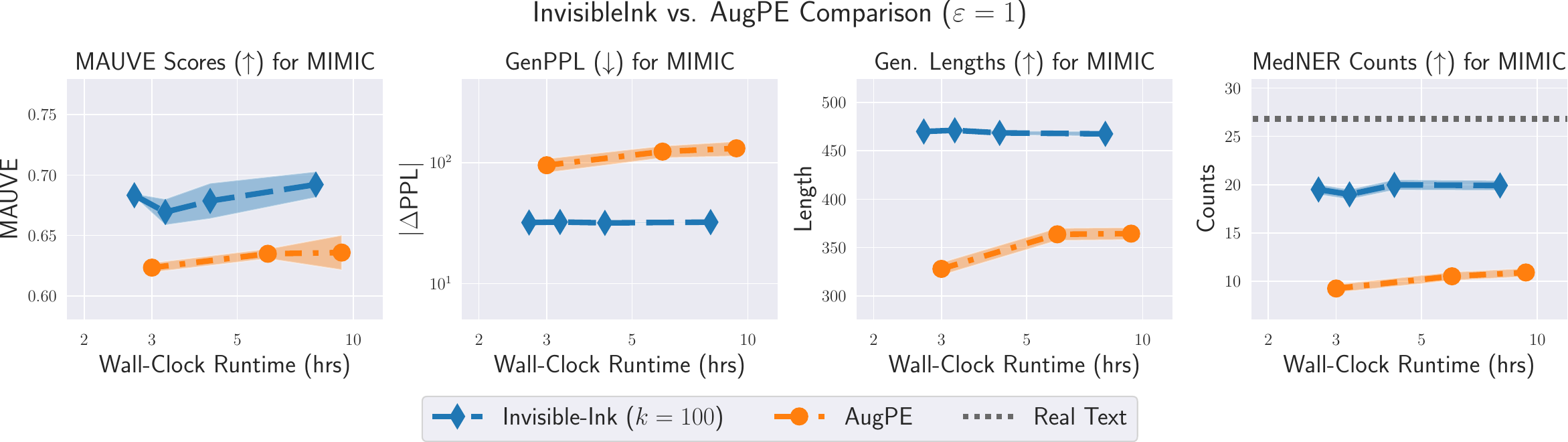}
    \includegraphics[trim={0 1.5cm 0 0},clip,width=0.98\linewidth]{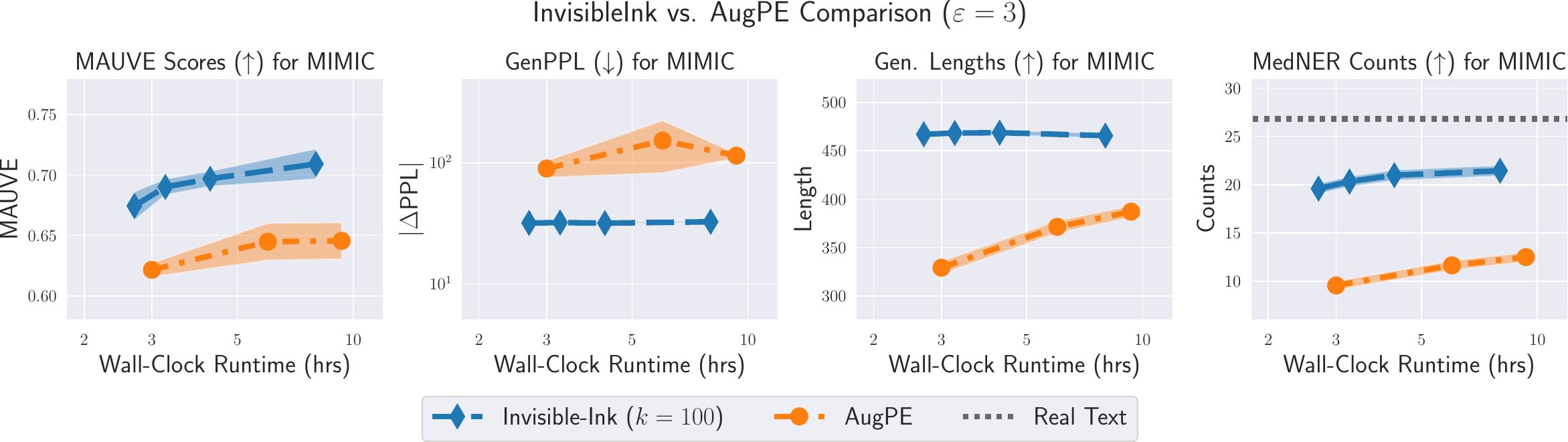}
    \includegraphics[trim={0 0cm 0 0},clip,width=0.98\linewidth]{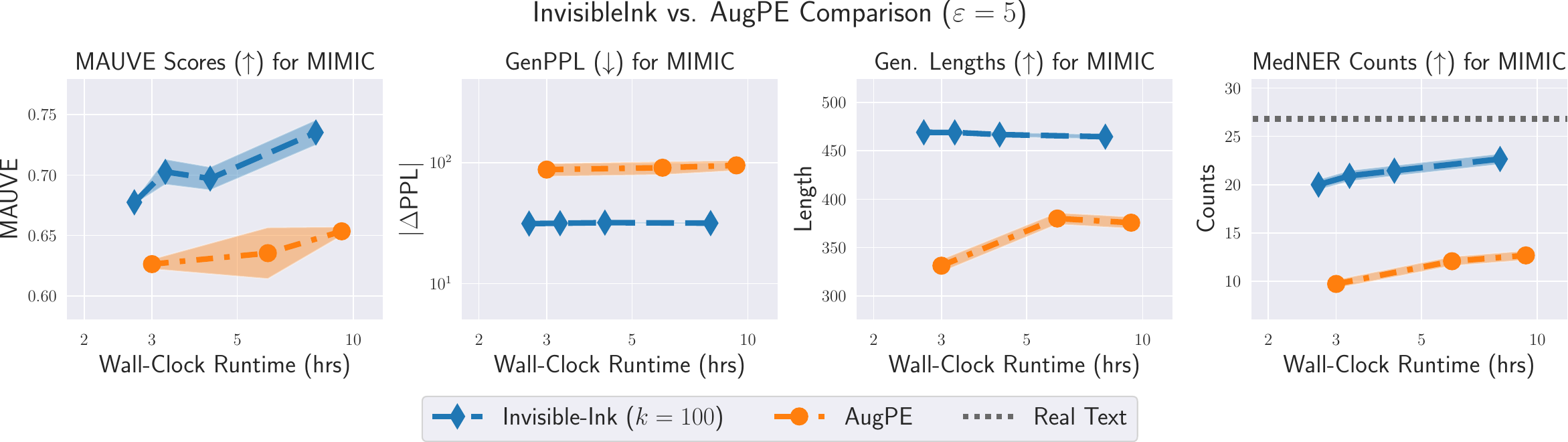}
    \caption{\small 
    Utility vs Compute plots (avg. over 3 runs) for \ours and AugPE for varying privacy budgets $\eps$ for 1000 synthetic texts generated for the MIMIC dataset. We compare utility using a variety of metrics --- \ours outperforms AugPE across all settings and evaluation metrics. Wall-clock run time is used to measure the computational cost. We report results for $\bsz+1=4,8,16,32$ next-token inference calls per generated token and $T_{\mathsf{AugPE}}=1,3,5$ for \ours and AugPE, respectively. 
    }
    \vspace{-1.5em}
    \label{fig:augpe_full}
\end{figure}
\begin{table}[bthp]
\small
\centering
\begin{tabular}{m{3cm} p{10cm}}
\toprule
\textbf{Prompt Type} & \textbf{Prompt Text} \\
\midrule
\textbf{System Prompt}
& Please act as a sentence generator for the medical domain. Generated sentences should mimic the style of medical discharge summaries, using a variety of sentence structures. \\
\midrule
\textbf{Generation Prompt}
& Using a variety of sentence structures, write a patient discharge summary. Begin the output with the phrase: ``Discharge Instructions:'' \\
\midrule
\textbf{Variation Prompt}
& Please rephrase the following sentences ``\textit{<insert candidate synthetic text>}'' as a patient discharge summary: ``\textit{<insert rephrase style>}'' \newline Begin the output with the phrase: ``Discharge Instructions:'' \\
\bottomrule
\end{tabular}\vspace{0.5em}
\caption{\small
Random sample generation and Variation prompts for MIMIC adapted in the style of AugPE; The candidate synthetic generations at the current iteration are inserted in the space denoted by ``\textit{<insert candidate synthetic text>}'' and information about the style in which the LLM should rephrase the text is inserted in the space denoted by ``\textit{<insert rephrase style>}''. 
}
\vspace{-1.5em}
\label{tab:augpe_prompts}
\end{table} The number of variations of the candidate synthetic texts generated in every iteration of AugPE (the parameter $L$ in \cite[Alg 1]{xie2024dpsyndata}) is fixed to be $7$. We also limit the total number of sensitive references the algorithm sees to $7\times n$ (where $n$ is the total number of synthetic texts generated). The total number of full-sequence LLM generations per synthetic text is given by $8\times T_{\mathsf{AugPE}}$, where $T_{\mathsf{AugPE}}$ is the number of iterations, and varies from $8-80$. In contrast, we disadvantage our model by limiting the number of effective full-sequence LLM generations to $\bsz+1$, ie, $4-32$. Further, we use the official implementation of AugPE released by Xie et al. \cite{xie2024dpsyndata}\footnote{Available at \url{https://github.com/AI-secure/aug-pe}} and adapt it for generating synthetic samples from MIMIC with minimal changes. In particular, we preserve the same style of random sample generation and text variation prompts used for other datasets; see \Cref{tab:augpe_prompts}. %

} \else {} \fi

\subsection{Top-$k$ Selection Statistics}
\label{sec:topkstats}
Some additional statistics of selection from the expanded vocabulary described by the \ourtopk step of \ours are presented in \Cref{tab:topk_stats}. In particular, we show the following:
\begin{itemize}
    \item $\keff$: Effective-$k$, which is the size of the vocabulary $V_k^+$
    \item $\sigma_k$: standard deviation of $\keff=|\Vtopk^+|$
    \item \#toks: number of tokens generated from the expanded vocabulary $V_k^+ \setminus V_k$ per generation
\end{itemize}
For each metric, we report the average over all generations or generated tokens for a setting. As the privacy budget increases, the clipping norm $\clipnorm$ also increases. We observe that larger expansions of the top-$k$ threshold ($2\clipnorm/\bsz$), lead to larger effective-$k$ ($\keff=|\Vtopk^+|$) of \ourtopk sampling. Further, as reported in \S\ref{sec:expt}, we also observe that the number of tokens sampled from the expanded set is very low (typically $\leq10$), supporting the claim that $\Vtopk^+$ is a very tight superset of $\Vtopk$. Further, the standard deviation of $\keff$ scales roughly with $\sqrt{\keff-k}$, i.e., the square root of the size of the expansion. 

The number of sampled tokens also decreases as $k$ increases. Despite a larger increase in effective $k$ due to the expansion of the top-$k$ vocabulary, the probability of selection of tokens decreases sharply with rank since the increased expansion is offset by lower probabilities of selection.

\begin{table}[tbhp]
\centering
\small

\adjustbox{max width=0.99\linewidth}{
\begin{tabular}{l c ccc ccc ccc ccc}
\toprule
&
& \multicolumn{3}{c}{\textbf{$k = 10$}} 
& \multicolumn{3}{c}{\textbf{$k = 50$}} 
& \multicolumn{3}{c}{\textbf{$k = 100$}} 
& \multicolumn{3}{c}{\textbf{$k = 500$}} \\
\cmidrule(lr){3-5} \cmidrule(lr){6-8} \cmidrule(lr){9-11} \cmidrule(lr){12-14}
$\bm{\eps}$
& $\bm{\clipnorm}$
& $\bm{k_{\text{eff}}}$ & $\bm{\sigma_k}$ & \textbf{\#toks}
& $\bm{k_{\text{eff}}}$ & $\bm{\sigma_k}$ & \textbf{\#toks}
& $\bm{k_{\text{eff}}}$ & $\bm{\sigma_k}$ & \textbf{\#toks}
& $\bm{k_{\text{eff}}}$ & $\bm{\sigma_k}$ & \textbf{\#toks} \\
\midrule
$\bm{1.0}$  & $0.08$ &
$10.17$ & $0.43$ & $6.14$ &
$50.69$ & $0.97$ & $1.50$ &
$101.07$ & $1.28$ & $0.91$ &
$506.43$ & $3.95$ & $0.73$ \\
$\bm{3.0}$  & $0.23$ &
$10.52$ & $0.76$ & $7.71$ &
$52.57$ & $1.77$ & $2.62$ &
$104.98$ & $2.54$ & $2.04$ &
$524.04$ & $7.11$ & $1.81$ \\
$\bm{5.0}$  & $0.36$ &
$10.52$ & $0.77$ & $7.77$ &
$53.09$ & $2.00$ & $2.89$ &
$106.80$ & $3.02$ & $2.51$ &
$534.82$ & $9.31$ & $2.60$ \\
$\bm{10.0}$ & $0.66$ &
$11.25$ & $1.25$ & $11.15$ &
$56.57$ & $3.04$ & $5.04$ &
$113.35$ & $4.55$ & $4.45$ &
$568.28$ & $16.33$ & $4.77$ \\
\bottomrule
\end{tabular}
}\vspace{0.5em}
\caption{\small 
Statistics for \ourtopk sampling step of \ours for $1000$ synthetic text samples generated for the MIMIC dataset using a TinyLLaMA 1B model, batch size of $\bsz=7$, and sampling temperature $\tau=1.2$. We report the size of $\Vtopk^+$ ($\keff$), the standard deviation in $\keff$ ($\sigma_k$), and the number of tokens sampled from the expanded vocabulary $\Vtopk^+ \setminus \Vtopk$ (\#toks); all metrics are reported averaged over all generations or generated tokens.}
\vspace{-1em}
\label{tab:topk_stats}
\end{table} 
\subsection{Scaling \method to Larger Models: Results on LLaMA3 8B}
\label{sec:llama8bexpt}
The utility scores (\mauve) for $1000$ synthetic texts generated using a LLaMA3 8B model are presented in \Cref{tab:llama8b}. The results agree qualitatively with trends observed for smaller models: intermediate values of $k\approx100$ to $1000$ consistently perform better than very small or very large $k$.

\begin{table}[h]
\centering
\small
\begin{tabular}{ccccccc}
\toprule
\textbf{$\bm{\varepsilon}$} & \textbf{$\bm{k{=}10}$} & \textbf{$\bm{k{=}50}$} & \textbf{$\bm{k{=}100}$} & \textbf{$\bm{k{=}500}$} & \textbf{$\bm{k{=}1000}$} & \textbf{$\bm{k{=}5000}$} \\
\midrule
\textbf{1.0}   & 58.9$_{\text{0.4}}$ & 61.6$_{\text{0.5}}$ & \textbf{\tabemph{62.9$_{\text{0.9}}$}} & 62.4$_{\text{0.6}}$ & 63.0$_{\text{0.0}}$ & 61.8$_{\text{0.6}}$ \\
\textbf{3.0}   & 59.2$_{\text{0.6}}$ & 60.0$_{\text{0.5}}$ & 60.5$_{\text{1.8}}$ & 62.1$_{\text{0.5}}$ & \textbf{\tabemph{62.2$_{\text{0.5}}$}} & \textbf{\tabemph{62.2$_{\text{0.5}}$}} \\
\textbf{5.0}   & 59.9$_{\text{0.6}}$ & 60.7$_{\text{0.5}}$ & 62.2$_{\text{1.1}}$ & 62.8$_{\text{0.2}}$ & \textbf{\tabemph{63.0$_{\text{0.3}}$}} & 62.6$_{\text{0.3}}$ \\
\textbf{10.0}  & 59.4$_{\text{0.4}}$ & 61.3$_{\text{0.8}}$ & 62.9$_{\text{0.9}}$ & \textbf{\tabemph{63.6$_{\text{0.3}}$}} & 63.0$_{\text{0.0}}$ & 63.2$_{\text{0.7}}$ \\
\bottomrule
\end{tabular}\vspace{0.5em}
\caption{\small \mauve ($\%$) scores, reported with mean and $95\%$ confidence intervals (see \S\ref{sec:expt:confidence-intervals}) over $3$ runs, for $1000$ synthetic text generations using LLaMA3 8B for the MIMIC dataset with varying $\varepsilon$ and $k$ for \ours with $\tau=1.2$ and $\bsz=7$. The best scores for every privacy budget are highlighted.
}
\vspace{-1em}
\label{tab:llama8b}
\end{table} 
We note that generation from the LLaMA3 8B at such small batch sizes ($\bsz=7$) is \emph{completely infeasible} for the other baselines we consider, such as \adapmixed~\cite{flemings2024adapmixed} and \citet{amin2024private}'s method. As we use larger (in terms of number of parameters) LLMs, the computational cost in terms of both GPU memory and computational time increases sharply. Using batch sizes $>50$, as proposed by the baselines~\cite{flemings2024adapmixed, amin2024private} is unreasonable with large models. This highlights the scalability of \ours to larger models, and by extension, its suitability for high-quality text generation.

However, we note that the overall \mauve scores of the generated texts falls compared to generations from smaller models like TinyLLaMA and LLaMA3.2 1B. We attribute this to the strong alignment of the LLaMA3 8B model against generating (fake) medical records or divulging patient data. We further note that despite our detailed prompts (see \Cref{tab:prompts}), the output private text generated by the model contains text other than the synthetic text to be generated, usually in the form of disclaimers of the form: ``Here is a fake patient discharge summary''. Sanitizing the output private text, post-generation, by removing this disclaimer leads to $\approx1-2\%$ increase in \mauve scores across all settings.

While our primary experiments (reported in \Cref{fig:mainplot} and \Cref{tab:result}) show that \method performs well, even when generating out-of-domain text, strong alignment of LLMs impacts generation adversely. We leave a full investigation of this phenomenon to future work.

\subsection{Ablation Studies: Effect of Temperature, Clip Norm, and Top-$k$ parameter}
\label{sec:ablation-temp-clip-norm}
\method has the following key hyperparameters: the batch size $\bsz$ (which is determined by the compute budget), temperature $\tau$, clip norm $\clipnorm$, and top-$k$ parameter.
\S\ref{sec:expt} shows the effect of varying the batch size $\bsz$ (\cref{fig:mainplot}) the top-$k$ parameter (\cref{fig:topk}). %

\begin{figure}[htbp]
    \centering
    \includegraphics[trim={0 0 0 0},clip,width=0.98\linewidth]{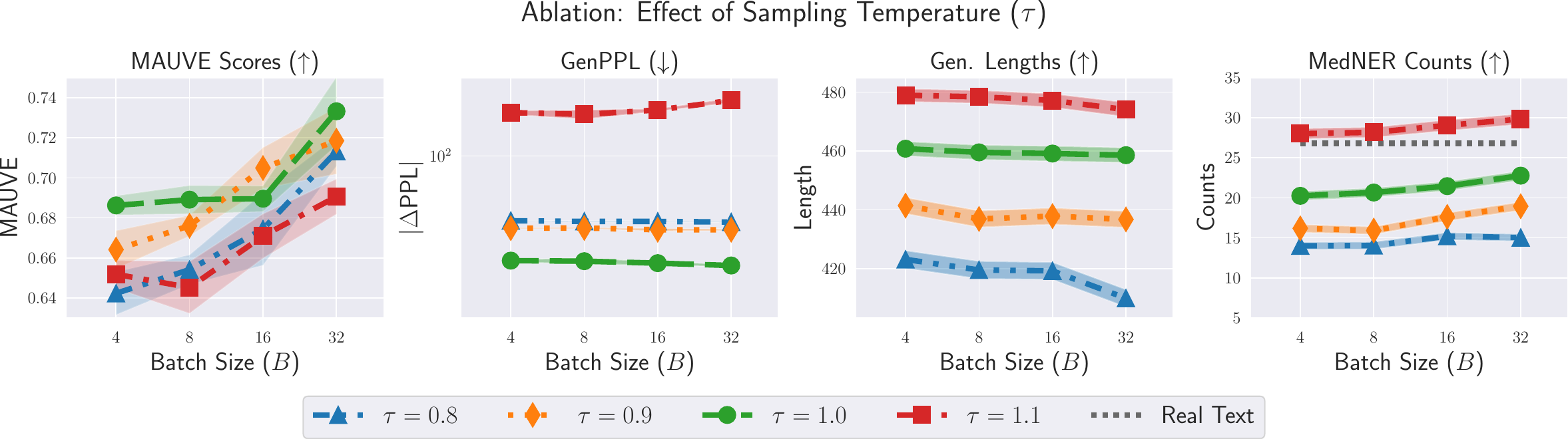}
    \caption{\small
    Variation of utility (measured using a variety of metrics) with sampling temperature, for the full-vocabulary variant of \ours ($k=|V|$), reported for 1000 synthetic generations, averaged over $3$ runs, for the MIMIC dataset using a TinyLLaMA 1B model. Temperature is varied from $0.8$ to $1.1$ for a fixed privacy budget $\eps=5$. We observe that selecting a temperature of $\tau\approx1.0-1.1$ consistently gives better performance. When coupled with \ourtopk sampling for $k\approx50-100$ (chosen based on empirical observations in \Cref{fig:public}), we observe that the best performance is obtained for $\tau=1.1$. We report these results in the main paper; in \Cref{fig:mainplot}.}
    \label{fig:varytemp}
\end{figure}

As per the heuristics for hyperparameter selection proposed in \S\ref{sec:expt}, we tune across various temperatures and calculate the clip norm $\clipnorm$ for the fixed privacy budget and sampling temperature $\tau$. The effects of varying the temperature and clip norm are thus coupled. We first report the performance of \ours for various temperatures and top-$k$ parameters. We observe from \Cref{fig:varytemp} \& \Cref{fig:varytemp_topk} that the optimal choice of sampling temperature and top-$k$ parameter is $\tau=1.1$ and $k=100$.

\begin{figure}[tbhp]
    \centering
    \includegraphics[trim={0 0 0 0},clip,width=0.98\linewidth]{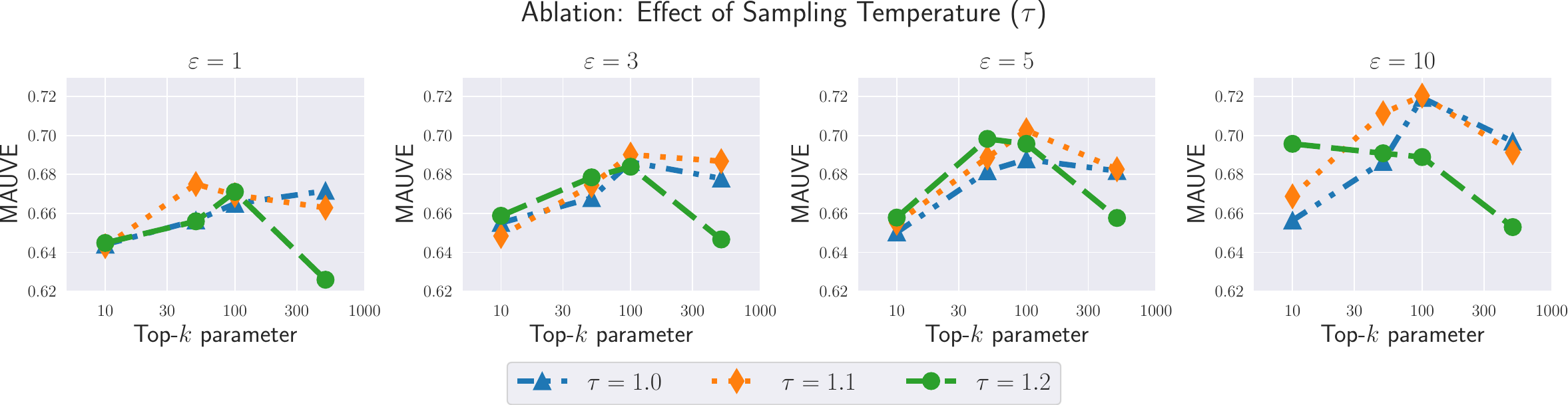}
    \caption{\small
    Variation of utility (measured using \mauve scores) with sampling temperature and top-$k$ parameter, reported for 1000 synthetic generations, over one run, for the MIMIC dataset using a TinyLLaMA 1B model. Temperature is varied from $1.0$ to $1.2$ for various privacy budgets. We observe that selecting a temperature of $\tau\approx1.1$ consistently gives the best performance when coupled with \ourtopk sampling for $k\approx50-100$ (chosen based on empirical observations in \Cref{fig:public}). We report these results in the main paper; in \Cref{fig:mainplot}.}
    \label{fig:varytemp_topk}
\end{figure}

We now decouple the effect of clipping by removing the constraint on privacy budget and instead varying $\clipnorm$ freely for a given fixed temperature (see \Cref{fig:varyclip}). We also report the variation of calculated clip norm for \ours and Amin et al. \cite{amin2024private}'s method with batch size $\bsz$ for various privacy budgets. As discussed in \Cref{fig:clip}, the clipping norms required for high-utility generation using \ours can be achieved with much lower batch sizes as compared to  \citet{amin2024private}.

\begin{figure}[hbtp]
    \centering
    \includegraphics[trim={0 0 0 0},clip,width=0.98\linewidth]{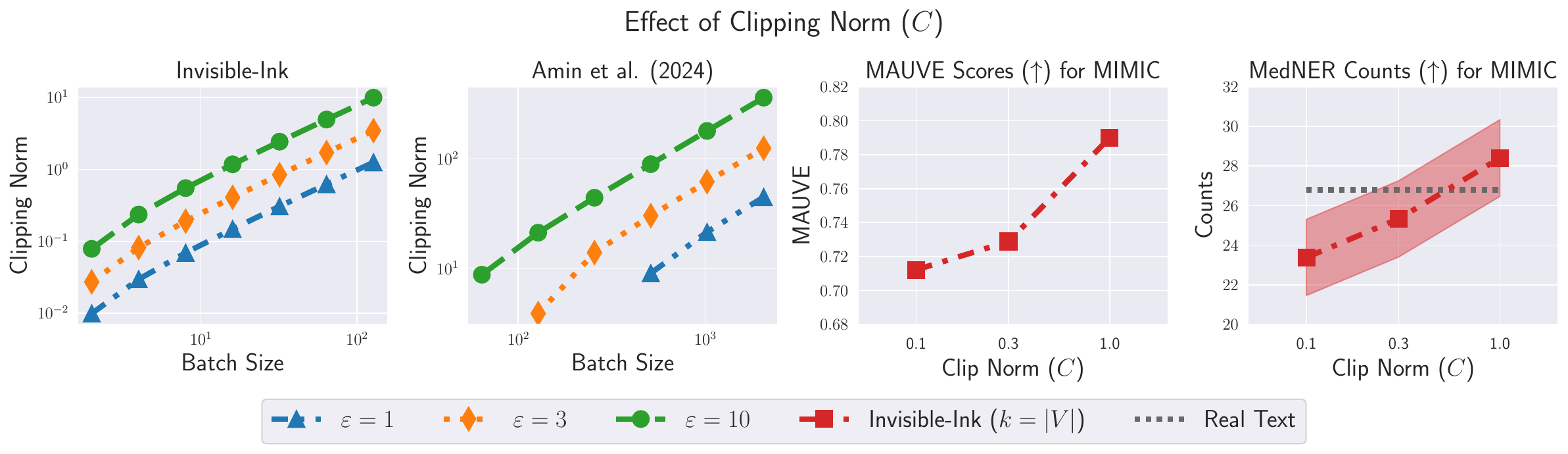}
    \caption{\small 
    Effect of clipping norm. \textbf{Left two:} Variation of calculated $\clipnorm$ for \ours and Amin et al. \cite{amin2024private}'s method with batch size $\bsz$ for various privacy budgets. The latter needs much larger batch sizes to give comparable clip norms for a given privacy budget. \textbf{Right two:} Variation of utility with $\clipnorm$ for 1000 synthetic generations for the MIMIC dataset using a TinyLLaMA 1B model using \ours ($k=|V|$). Temperature ($\tau=1.2$) and batch size ($\bsz=7$) are fixed while we vary the clip norm. Each setting of the clip norm implies a DP parameter as given by \Cref{thm:accounting}. In particular, it varies from
    $\eps=1.3$ for $C=0.1$ to $\eps=17.6$ for $C=1.0$.}
    \label{fig:varyclip}
\end{figure}

\subsection{Comparison of Adjacency Notions for AdaPMixED}
\label{sec:expt-adj-adapmixed}

As discussed in \S\ref{sec:adjacency-prior-work}, we use an implementation of AdaPMixED that employs the \textit{replace-by-null} notion of adjacency, as in \ours, in the main paper instead of the \textit{add-or-remove} notion of adjacency used by \citet{flemings2024adapmixed} as it allows for an apples-to-apples comparison.

\begin{table}[bthp]
    \centering
    \small
    \begin{adjustbox}{max width=0.95\textwidth}
    \begin{tabular*}{\textwidth}{@{\extracolsep{\fill}}*{7}{c}}
        \toprule
        & \multicolumn{3}{c}{\textbf{Add-or-remove adjacency}} 
        & \multicolumn{3}{c}{\textbf{Replace-by-null adjacency}} \\
        \textbf{$\bm{\varepsilon}$} & \textbf{$\bm{\bsz=8}$} & \textbf{$\bm{\bsz=32}$} & \textbf{$\bm{\bsz=128}$}
        & \textbf{$\bm{\bsz=8}$} & \textbf{$\bm{\bsz=32}$} & \textbf{$\bm{\bsz=128}$} \\
        \cmidrule(lr){1-1}
        \cmidrule(lr){2-4}
        \cmidrule(lr){5-7}
        
        1.0
        & $58.68$ & $58.71$ & $58.51$
        & $58.14$ & $57.64$ & $59.51$ \\
        
        3.0
        & $58.29$ & $58.81$ & $61.50$
        & $58.30$ & $57.97$ & $58.80$ \\
        
        5.0
        & $60.24$ & $58.72$ & \tle
        & $58.26$ & $59.32$ & \tle \\
        
        10.0
        & $60.23$ & $59.23$ & \tle
        & $59.18$ & $59.61$ & \tle \\
        \bottomrule
    \end{tabular*}
    \end{adjustbox}
    \vspace{0.5em}
    \caption{\small
    MAUVE scores ($\%$) for $1000$ synthetic generations (over one run only) for the MIMIC dataset using TinyLLaMA for different privacy budgets ($\varepsilon$) and batch sizes ($\bsz$), comparing two adjacency notions: \textbf{Add-or-remove} and \textbf{Replace-by-null} for AdaPMixED. \tle denotes runs where a wall-clock time limit of 24 hrs was exceeded. Top-performing configurations per $\varepsilon$ tend to vary with adjacency notion and batch size; highlighting the sensitivity of MAUVE scores to both.
    }
    \vspace{-0.5em}
    \label{tab:adapmixed_adj}
\end{table}
 
To do so, we modify the data-dependent privacy accounting step (defined in \cite[Eqn. 3]{flemings2024adapmixed}), which accounts for the maximum symmetric $\alpha$-Rényi Divergence between probability distributions corresponding to the reference set and all its possible neighbours under \textit{add-or-remove} adjacency.

We modify this step to instead account for the maximum $\alpha$-Rényi Divergence between the probability distribution corresponding to the current reference set and all possible neighbours using the \textit{replace-by-null} adjacency notion. In contrast to the \textit{add-or-remove} notion, where removing a particular reference zeroed out its contributing probabilities \cite[Alg 1, Line 15]{flemings2024adapmixed}, in our modified algorithm, replacing a particular reference by ``null" (empty string $\emptyset$; see \Cref{def:replace-by-null}) means that the contributing probabilities default to that of the public distribution. Mathematically, we modify \cite[Alg 1, Line 15]{flemings2024adapmixed} to be (borrowing all notation from \cite{flemings2024adapmixed}):
\[p_{-i}(\bfx) = \frac{1}{N} \left(\sum_{j\neq i} \Bar{p}_j(\bfx) +p_\public(\bfx)\right)\]

We note that this lends an advantage to the \textit{replace-by-null} implementation of \adapmixed, since the per-token privacy accounted for must be slightly lesser. However, the quantitative results, presented in \Cref{tab:adapmixed_adj}, fail to reflect this meaningfully, since the generated sequences are very small. Regardless, \ours outperforms AdaPMixED significantly irrespective of the notion of adjacency used. 

\section{Software Package}
\label{sec:software}
We illustrate the use of the accompanying software package, available on GitHub,\footnote{\url{https://github.com/cerai-iitm/invisibleink}} and installable via pip,\footnote{\url{https://pypi.org/project/invink/}} as \texttt{pip install invink}.

\begin{lstlisting}[
    language=Python, 
    caption={Generate private synthetic text using \textsc{InvisibleInk}}, 
    label={lst:invink}
]
import invink
from datasets import load_dataset

# Load the TAB dataset for this example.
data = load_dataset("mattmdjaga/text-anonymization-benchmark-train")

# Reference texts given as a list of strings.
ref_txts = data["train"]["text"]

# Use smaller models for lower memory footprint.
model = "google/gemma-3-4b-it"

# Description should not contain privacy-sensitive information.
dataset_desc = "The dataset comprises English-language court cases from the European Court of Human Rights (ECHR)."

# Generate 10 text samples with (10, 1e-5)-DP
output = invink.generate(ref_txts, model, num=10, epsilon=10,
    batch_size=8, max_toks=500, dataset_desc=dataset_desc)

# list of private synthetic text strings
print(output.texts)
\end{lstlisting}

The demo code in Listing \ref{lst:invink} reads reference texts from the TAB dataset and generates \texttt{num=10} synthetic texts, each at most $500$ tokens long, with a $(\epsilon=10, \delta=10^{-5})$-DP guarantee. The parameter \texttt{batch\_size} corresponds to $\bsz + 1$ in the paper, i.e., the total number of LLM inferences per generated token. Thus, $($\texttt{batch\_size}$ - 1) \;\times\;$\texttt{num}$= (8 - 1)\;\times\;10 = 70$ private references are used to generate $10$ synthetic texts in the above example.

\section{Example Generations}
\label{sec:example-gen}

See \Cref{tab:example_gens} for some example generations from \method.

\begin{table}[tbhp]
\small
\centering

\adjustbox{max width=0.90\linewidth}{
\begin{tabular}{m{0.08\linewidth} p{0.82\linewidth}}
\toprule
\textbf{Dataset} & \textbf{Examples of Generated Synthetic Text} \\
\midrule
\multirow{2}{*}{\textbf{Yelp}}
& 
{"We chose a cozy restaurant that delivered impeccable service, excellent fare, and even better music to our table." (ReviewStars: 4.8 out of 5.0) Naturally, we walked in to find a spacious and serene lobby. Our server was welcoming, and she immediately recommended a seasonal cocktail and the restaurant's signature dish, Chicken Caesar. The servers were friendly, knowledgeable, and efficient which contributed greatly to our experience. Tucking into each other's orders, we feasted our eyes on sophisticated appetizers like Caprese, Grilled Salmon, and Grilled Caesar Wings, accompanied by the juiciest and most lip-smacking chicken you'll ever taste, alongside a delicate glass of Pinot Noir to top it off. The food arrived quickly without forgetting the music}
\\
\cmidrule(lr){2-2}
&
{"Although the staff may not be able to befriend every customer, their attentiveness and positive attitude really go a long way. Our recent visit to a new coffee shop led by a chatbot could have gone further in helping patrons' preferences, with many potential avenues to cater the atmosphere. From a personal observation, however, I believe that adding more personalities to the virtual assistants could cater to a wider-range of needs while engaging customers. Nevertheless, the chatbot is smart and proficient in its tasks, and we enjoyed getting tips about the local coffee culture from their knowledgeable baristas. Although we were a bit inconveniently served at our preferred location for a quick lunchtime fix, we were glad to taste some delicious homemade sushi on the go. Overall, our overall experience was positive with a bit more effort and personalization in terms of addressing specific preferences could lead to repeat vis}
\\
\midrule
\multirow{2}{*}{\textbf{TAB}}
& 
{PROCEDŽURE: The case originated in an Application for Judicial Review filed by Ms. Smith, petitioner(s), against the Bulgarian State from January 8th of the year 2019. The Respondent for this jurisdictional issue is the Ministry of the Interior of the Bulgarian Republic and the Bulgarian National Antiterrorism Bureau. The alleged event occurred at the intersection of a public place known as Sidi Boutha Crescent or Vezhub Khayri Crescent in the city of Sofia, Bulgaria that caused or contributed significantly and directly to plaintiff Smith's physical injury causing death pursuant to Section 4, Article 3 of the Bulgarian Medical Registration Act. 

- The procedural steps involved in the application included the preparing of the written petition as well as its formality submitted to the Court for review. 

- Upon review of the documents, the Registrar of the Court sent the petition to the Bulgarian Ministry of the Interior as a legal source and required from the said organization to verify and respond in court about the availability of sufficient grounds for such judicial review against the Bulgarian state. 

- Following the review process, the request for verification made by plaintiff Smith's attorney was submitted before the Ministry of the Interior of the Bulgarian Republic. 

- The Bulgarian Minister of the Interior received the request as per the requirement and made a decision within one month from the submission date based on the relevant document provisions as provided by law. 

- After this, the Ministry communicated to the plaintiff regarding the reply received from the National Antiterrorism Bureau, and the said communication stated that the respondent is satisfied with having received the reply and that the Bulgarian state is willing to conduct a judicial investigation of the event from the incident that caused or contributed significantly to the death of the petitioning defendant Smith. 

Respectfully, as per the international human rights instrument cited in the previous reference to the jurisdiction as mentioned in the jurisdictional challenge brought in favor of Ms. Smith, all facts and circumstances of the criminal matter that led to the loss of a life of a Bulgarian citizen constitutes a violation of her right to physical integrity protected by Article 2 (Prohibition of T}
\\
\bottomrule
\end{tabular}
}
\vspace{0.5em}
\caption{\small 
Randomly chosen examples of private text generated with $\eps=10$ from the Yelp and TAB datasets using the TinyLLaMA 1B model, sampling temperature $\tau=1.2$, $k=50$, and batch sizes $B=31$ and $B=7$ respectively; \ours provides high-quality text. We do not provide examples of synthetic generations for the MIMIC dataset to comply with its license. \citet{amin2024private}'s method is infeasible for text generation at the same batch sizes.
}
\label{tab:example_gens}
\end{table} %

\end{document}